\definecolor{darkgreen}{rgb}{0,0.5,0}
\definecolor{darkred}{rgb}{0.7,0,0}
\definecolor{teal}{rgb}{0.3,0.8,0.8}
\definecolor{orange}{rgb}{1.0,0.5,0.0}
\definecolor{purple}{rgb}{0.8,0.0,0.8}
\newcommand{\kibitz}[2]{\ifnum\Comments=1{\textcolor{#1}{\textsf{\footnotesize #2}}}\fi}
\definecolor{Gray}{gray}{0.9}
\newcommand{\MLE}{\mathrm{MLE}}
\newcommand{\Bayes}{\text{Bayes}}
\newcommand{\pa}{\mathrm{pa}}
\newcommand{\TV}{\mathrm{TV}}
\newcommand{\newedit}{\color{black}}
\newcommand{\ouredit}{\color{black}}
\def\eqref#1{equation~\ref{#1}}
\def\1{\bm{1}}
\def\rd{{\textnormal{d}}}
\DeclareMathAlphabet{\mathsfit}{\encodingdefault}{\sfdefault}{m}{sl}
\SetMathAlphabet{\mathsfit}{bold}{\encodingdefault}{\sfdefault}{bx}{n}
\newcommand{\E}{\mathbb{E}}
\newcommand{\KL}{D_{\mathrm{KL}}}
\DeclareMathOperator*{\argmax}{arg\,max}
\DeclareMathOperator*{\argmin}{arg\,min}
\DeclareMathOperator{\Tr}{Tr}
\begin{document}
\title{Pessimistic Model-based Offline Reinforcement Learning under Partial Coverage }
\author[1]{Masatoshi Uehara\thanks{mu223@cornell.edu}} 
\author[1]{Wen Sun \thanks{ws455@cornell.edu}}
\affil[1]{Department of Computer Science, Cornell University}
\date{}

\maketitle

\begin{abstract}
We study model-based offline Reinforcement Learning with general function approximation. We present an algorithm named \emph{Constrained Pessimistic Policy Optimization (CPPO)} which leverages a general function class and uses a constraint to encode pessimism. Under the assumption that the ground truth model belongs to our function class, CPPO can learn with the offline data only providing partial coverage, i.e., it can learn a policy that competes against any policy that is covered by the offline data, in polynomial sample complexity with respect to the statistical complexity of the function class. We then demonstrate that this algorithmic framework can be applied to many specialized Markov Decision Processes and the additional structural assumptions can further refine the concept of partial coverage. One notable example is \emph{low-rank MDP with representation learning} where the partial coverage is defined using the concept of relative condition number measured by the underlying unknown ground truth feature representation. {\newedit Finally, we introduce and study the Bayesian setting in offline RL. The key benefit of Bayesian offline RL is that algorithmically, we do not need to explicitly construct pessimism or reward penalty which could be hard beyond models with linear structures. We present a \emph{posterior sampling based incremental policy optimization} algorithm (PS-PO) which proceeds by iteratively sampling a model  from the posterior distribution and performing one step incremental policy optimization inside the sampled model. Theoretically, in expectation with respect to the prior distribution, PS-PO can learn a near optimal policy under partial coverage with polynomial sample complexity.  This work is a long version of the conference paper in \url{https://openreview.net/pdf?id=tyrJsbKAe6}.} 
\end{abstract}

\section{Introduction}
\label{sec:intro}

Offline Reinforcement Learning (RL) is one of the important areas of RL where the learner is presented with a static dataset consisting of transition-related information (state, action, reward, and next state) collected by some behavior policy, and needs to learn purely from the offline data without any future online interaction with the environment. 
Offline RL is used in a number of applications where online random experimentation is costly or dangerous such as health care \citep{2019PM}, digital marketing \citep{ChenMinmin2019TOCf} and robotics \citep{levine2020offline}. 

The performance guarantees of offline RL often rely on two quantities: the coverage of the offline data and the property of the function approximation used in the algorithms. For instance, for the classic Fitted-Q-iteration (FQI) algorithm \citep{ernst2005tree,munos2008finite}, it requires (a) full coverage in the offline data, i.e., $\max_{(s,a)}d^{\pi}(s,a)/\rho(s,a)<\infty$ for \emph{any} stochastic policies $\pi$ including history-dependent non-Markovian policies, where $d^{\pi}(s,a)$ is a state-action occupancy distribution of a policy $\pi$ and $\rho(s,a)$ is an offline distribution,
(b) realizability in a Q function class, i.e., the optimal Q function belongs to the function class, and (c) Bellman completeness, i.e., applying the Bellman operator on any function in the function class results in a new function that also belongs to the function class (see the first row in Table~\ref{tab:comparison}). Among these three assumptions, the full coverage and the Bellman completeness are particularly strong. The full coverage means that the behavior policy needs to be exploratory enough, although figuring out an exploratory policy itself is an extremely hard problem for large-scale MDPs. The Bellman completeness assumption does not have a monotonic property, i.e., even starting with a function class that originally permits Bellman completeness, slightly increasing the capacity of the function class could result in a new class that does not have Bellman completeness anymore.  
Thus, we aim to relax the assumptions on the offline data and the function class. Particularly, we are interested in the following question: 
\begin{displayquote}

{Given a realizable function class and an offline distribution that only provides partial coverage, can we learn a policy that is able to compete with any policy that is covered by the offline distribution?} 

\end{displayquote}
\emph{We study this question from a model-based learning perspective and provide an affirmative answer to the question}. More specifically, different from FQI, we start with a realizable model class, i.e., the ground truth transition falls into the model class. We further abandon the strong full coverage assumption, and instead, assume partial coverage 
 which means the offline data distribution only covers a state-action distribution of \emph{some  high-quality comparator policy $\pi^*$} ($\pi^*$ is not necessarily the optimal policy, and $\pi^*$ could be non-Markovian), i.e., $\max_{s,a} d^{\pi^*}(s,a) / \rho(s,a) <\infty$,
We design an algorithm --- Constrained Pessimistic Policy Optimization (CPPO), which can learn a policy that is as good as any comparator policy $\pi^*$ that is covered by the offline data. The fact that CPPO can learn to compete against history-dependent policies is meaningful in offline RL when the offline data does not cover the optimal policy.

\begin{table*}[!t]\label{tab:comparison}

\centering\resizebox{\columnwidth}{!}{
\begin{tabular}{ |l|l|l|l| } 
\hline
Methods & {\small Type} & Coverage  & Additional Structures \\
\hline 
FQI \citep{munos2008finite} & $\mathbf{F}$ & Full: $\max_{s,a}\frac{d^{\pi}(s,a)}{\rho(s,a)} < \infty,\forall \pi$   & Bellman complete\\ 
\hline
Minimax Way \citep{UeharaMasatoshi2019MWaQ} & $\mathbf{F}$ & Full: $\max_{s,a}\frac{d^{\pi}(s,a)}{\rho(s,a)} < \infty,\forall \pi$  & Realizability in density ratio  \\ 
\hline
\citet{DuanYaqi2020MOEw} & $\mathbf{F}$ & Full: $\mathbb{E}_{s,a\sim \rho}\phi(s,a)\phi(s,a)^{\top}$ is PSD  & Linear Bellman complete \\  
\hline 
\citet{XieTengyang2020BVAw} & $\mathbf{F}$ & Full: $\max_{s,a,s'}\frac{P^*(s' | s,a)}{ \rho(s') } < \infty$  & None \\
\hline  
\citet{Liu2020} &  $\mathbf{F}$ & Partial$^\dagger$ : $\max_{s,a}\frac{d^{\pi^{*}}(s,a)}{\rho(s,a)} < \infty$ & Bellman / Policy class complete    \\
\hline 
\citet{RashidinejadParia2021BORL} & $\mathbf{F}$ & Partial:  $\max_{s,a}\frac{d^{\pi^{*}}(s,a)}{\rho(s,a)} < \infty$  & Tabular MDP \\  
\hline 
\begin{tabular}{c}
   \citet{JinYing2020IPPE}  \vspace{-3pt}    \\
   \citet{zhang2021corruption}
\end{tabular}    & $\mathbf{F}$ & Partial$^{\dagger\dagger}$:  $\max_{x }\frac{x^{\top} \mathbb{E}_{s,a\sim d^{\pi^*}}\phi(s,a)(\phi(s,a))^{\top} x   }{ x^{\top} \mathbb{E}_{s,a\sim \rho}\phi(s,a)(\phi(s,a))^{\top} x   }< \infty$  & Linear MDP \citep{Jin2020}  \\  
\hline 
\citet{XieTengyang2021BPfO} & $\mathbf{F}$ & Partial: $\max_{f}\frac{ \|f - \Tcal f\|^2_{d^{\pi^*}}  }{\|f - \Tcal f\|^2_{\mu}} < \infty$ & Bellman complete \\ \hline 
\citet{zanette2021provable} & $\mathbf{F}$ & Partial :  $\max_{x }\frac{x^{\top} \mathbb{E}_{s,a\sim d^{\pi^*}}\phi(s,a)(\phi(s,a))^{\top} x   }{ x^{\top} \mathbb{E}_{s,a\sim \rho}\phi(s,a)(\phi(s,a))^{\top} x   }< \infty$  & Linear Bellman complete \\  
\hline 
\hline
Batch \citep{ross2012agnostic} & $\mathbf{B}$ & Full: $\max_{s,a }\frac{ d^{\pi}(s,a) }{\rho(s,a)} < \infty,\forall \pi$ & None \\
\hline
Milo \citep{ChangJonathanD2021MCSi} & $\mathbf{B}$ & Partial: $\max_{x }\frac{x^{\top} \mathbb{E}_{s,a\sim d^{\pi^*}}\phi(s,a)(\phi(s,a))^{\top} x   }{ x^{\top} \mathbb{E}_{s,a\sim \rho}\phi(s,a)(\phi(s,a))^{\top} x   }< \infty$  & KNR / GP  \\
\hline
   \rowcolor{Gray}  &      & Partial$^{\dagger\dagger\dagger}$: $\max_{s,a }\frac{d^{\pi^*}(s,a)}{\rho(s,a)} < \infty$   & None \\
    \rowcolor{Gray}     &  & Partial: $\max_{x }\frac{x^{\top} \mathbb{E}_{s,a\sim d^{\pi^*}}\phi(s,a)(\phi(s,a))^{\top} x   }{ x^{\top} \mathbb{E}_{s,a\sim \rho}\phi(s,a)(\phi(s,a))^{\top} x   }< \infty$  & {\small Linear MDP /KNR / GP} \\
 \rowcolor{Gray}     &  & Partial: $\max_{P\in \Mcal}\max_{x }\frac{x^{\top} \mathbb{E}_{s,a\sim d^{\pi^*}}\psi_P(s,a)(\psi_P(s,a))^{\top} x   }{ x^{\top} \mathbb{E}_{s,a\sim \rho}\psi_P(s,a)(\psi_P(s,a))^{\top} x   }< \infty$  & {\small \textbf{Linear Mixture MDPs} ($\psi_P$ depends on $P$)} \\
 \rowcolor{Gray}     &  & Partial: $\max_{j \in [1,\cdots,d]}\max_{ s_j \in \Scal_j, a \in \Acal}\frac{d^{\pi^{\star}}_{P^{\star}}(s_j,a) }{\rho(s_j,a) }$  & {\small \textbf{Factored MDPs}} \\
 \rowcolor{Gray} \multirow{-4}{*}{\textbf{CPPO (Ours)} } & \multirow{-4}{*}{$\mathbf{B}$} & Partial: $\max_{x }\frac{x^{\top} \mathbb{E}_{s,a\sim d^{\pi^*}}\phi^*(s,a)(\phi^*(s,a))^{\top} x   }{ x^{\top} \mathbb{E}_{s,a\sim \rho}\phi^*(s,a)(\phi^*(s,a))^{\top} x   }< \infty$  & \textbf{Low-rank MDP} (unknown $\phi^*$)\\
\hline
\end{tabular}
}%
\caption[Caption for LOF]{Comparison among existing works regarding their type, coverage, and additional structural assumptions on the function class or MDPs. Type $\mathbf{F}$ means model-free and type $\mathbf{B}$ means model-based. {\ouredit  Linear mixture MDPs, factored MDPs, and low-rank MDPs are models that our algorithm gives the partial coverage result for the first time.
} Partial coverage means \protect\footnotemark that the offline distribution $\rho$ covers a state-action distribution of a comparator policy $\pi^*$. %
$\dagger$ means it assumes an accurate density estimator for $\rho(s,a)$. $\dagger\dagger$ means although the analysis in \citet{Jin2020} is done under the full coverage for linear MDPs, based on the argument \citep{zhang2021corruption}, we can show the algorithm has the PAC guarantee under partial coverage in terms of the relative condition number for linear MDPs. $\dagger\dagger\dagger$ means that we can refine it to a more adaptive quantity using the model class (i.e., Definition~\ref{def:partial_con}). All the methods in the table require realizability in the function class.   }
\end{table*}

While one could assume density ratio based concentrability coefficient ($\max_{s,a} d^{\pi^*}(s,a) / \rho(s,a)$) to be under control for small size MDPs, in large-scale MDPs (e.g. continuous state space), the density ratio could quickly become an extremely large quantity which makes the performance guarantee vacuous. When applying CPPO to MDPs with additional structural assumptions, we can seamlessly refine the density ratio based concentrability coefficient to more natural and tighter quantities. Notably, we consider the offline representation learning setting where the underlying MDPs permit a low-rank structure (unlikely linear MDPs \citep{Jin2020,YangLinF2019RLiF}, we do not assume the ground truth state-action feature representation $\phi^\star$ is known, and instead we need to learn $\phi^\star$) and we show that we can refine the density ratio to a relative condition number that is defined using the \emph{unknown} true state-action feature representation $\phi^\star$. Intuitively this means that as long as there exists a high-quality comparator policy that only visits the subspace (defined using the true representation $\phi$) that is covered by the offline data, CPPO can compete against such a policy, \emph{even without knowing the true $\phi^\star$}. Such bounded relative condition number assumption is much weaker than the bounded density ratio assumption.%
While the concept of relative condition number was originally introduced in the online RL setting (e.g.,  \cite{pmlr-v125-agarwal20a,Agarwal2020pcpg} with a known linear feature $\phi$), and later was introduced in offline RL (\cite{zhang2021corruption,ChangJonathanD2021MCSi}), these prior works \emph{all} rely on the fact that the feature representation $\phi$ is known to the learner a priori (see Table~\ref{tab:comparison} for the comparison). Another interesting example is factored MDPs \citep{kearns1999efficient} where we show CPPO refines the density ratios to be density ratio associated with individual factors, which leverages the factored structure and is provably tighter.  {\ouredit We also give examples on parametric linear MDPs \citep{YangLinF2019RLiF}, nonparametric linear MDPs \citep{Jin2020}, linear mixture MDPs \citep{ayoub2020model,modi2020sample}, kernelized nonlinear regulators (KNRs) and MDPs with Gaussian processes (GPs) \citep{Kakade2020,CuriSebastian2020EMRL}, where we again show that CPPO enjoys problem specific quantities for measuring the coverage. }

\paragraph{Our contributions.} Our contributions are three-folds, which we summarize below:
\begin{enumerate}

\item We show that in the model-based setting, realizability and partial coverage is enough to learn a high-quality comparator policy (\pref{thm:version} {\newedit and \pref{thm:version2}}). Notably, (1) this result holds for \emph{any MDPs with realizable model classes}, (2) we can compete against even history-dependent policies. This is in sharp contrast to the state-of-art provable model-free offline RL results: see \pref{tab:comparison} for detailed comparisons to prior works. 

\item Under additional structural assumptions (e.g.,  KNRs, linear MDPs, linear mixture MDPs, low-rank MDPs, factored MDPs), we show that we can seamlessly refine the density ratio based concentrability coefficients to problem specific quantities. 
{\newedit This flexibility to adapt to problem specific coverage measuring quantities is in sharp contrast to standard offline RL algorithms. 
Especially, two notable settings are low-rank MDPs (with unknown features) (\pref{thm:low_rank}) and factored MDPs (\pref{thm:factoed}): (a) for offline representation learning in low-rank MDPs, the density ratio concentrability coefficient is refined to be a relative condition number under the true (but unknown) representation (\pref{thm:low_rank}); (b) for factored MDPs, the concentrability coefficient is refined using the density ratios associated to individual factors (\pref{thm:factoed}).} %

{\newedit \item For computational purpose, we develop incremental policy optimization and posterior sampling-based offline RL algorithms under Bayesian setting (\pref{alg:pspo} and \pref{thm:bayesian_pspo} in \pref{sec:alg}). While moving to the Bayesian setting, we sacrifice from a worst-case guarantee to a guarantee on the Bayesian suboptimality gap, we gain benefits in terms of no need to design pessimism inside the algorithms.}
\vspace{-2pt}
\end{enumerate}

While we focus on the model-based setting and have demonstrated advantages of our approach over model-free ones (i.e., no more Bellman completeness assumption on function classes, being able to compete against a larger pool of policies, and the ability to seamlessly adapt to problem-dependent structures), it is worth noting that realizability in the model-based setting is usually considered stronger than the one in the model-free setting. 
On the empirical side, model-based offline RL algorithms are the state-of-art (e.g., \cite{Yu2020,Kidambi2020,MatsushimaTatsuya2020DRLv,cang2021behavioral,ChangJonathanD2021MCSi}).
Our theoretical results provide a sharp contrast between model-based and model-free approaches in offline RL. For details, refer to \pref{sec:conversion}. 

{\newedit  
The rest of the article is organized as follows. In \pref{sec:related}, we discuss the related work. In \pref{sec:prelim}, we introduce our setting and notation. In \pref{sec:version}, we introduce two types of main algorithms, which we term Constrained Pessmistic Policy Optimization (CPPO). In \pref{sec:examples}, we instantiate our results in several models such as tabular MDPs, linear mixture MDPs, (parametric) linear MDPs, low-rank MDPs and factored MDPs. In \pref{sec:knrs}, we continue this instantiation in KNRs. In \pref{sec:linear_mdps}, we modify CPPO to capture (nonparametric) linear MDPs. In \pref{sec:alg}, we introduce the posterior sampling-based offline RL algorithm under the Bayesian setting. In \pref{sec:conclusion}, we discuss our summary and future works. 
}

\section{Related work}\label{sec:related}

We discuss two families of related works: offline RL and representation learning in RL.
 
\paragraph{Offline RL.} Insufficient coverage of the dataset due to the lack of online exploration is known as the main challenge in offline RL \citep{Wang2020}. To deal with this problem, a number of methods have been recently proposed from both model-free \citep{WuYifan2019BROR,TouatiAhmed2020SPOv,kumar2020conservative,Liu2020,RezaeifarShideh2021ORLa,pmlr-v97-fujimoto19a,fakoor2021continuous,ghasemipour2021emaq,buckman2020importance} and model-based perspectives \citep{Yu2020,Kidambi2020,MatsushimaTatsuya2020DRLv,YinMing2021NORL}. More or less, their methods rely on the idea of pessimism and its variants in the sense that the learned policy can avoid uncertain regions not covered by offline data. As a theoretical side, \citet{munos2008finite,DuanYaqi2020MOEw,DuanYaqi2021RBaR,FanJianqing2019ATAo} proved FQI has a PAC (probably approximately correct) guarantee under realizability, the global coverage, and Bellman completeness. Other offline model-free RL methods such as minimax offline RL methods also require realizability and the global coverage \citep{ChenJinglin2019ICiB,antos2008learning,UeharaMasatoshi2021FSAo,DuanYaqi2021RBaR,zhang2019gendice,nachum2019algaedice}. Recently, by being inspired by aforementioned the pessimism idea, \citet{Jin2020,RajaramanNived2020TtFL} showed that FQI with an additional pessimistic bonus (penalty) term can weaken the assumption from the global coverage to partial coverage. Comparing to their works, our analysis focuses on a model-based method.  The offline model-based method is known to have a PAC guarantee under the realizability and the global coverage \citep{ross2012agnostic,ChenJinglin2019ICiB}. As the most closely related work, \citet{ChangJonathanD2021MCSi} proved a model-based method with an additional penalty term can weaken the assumption from the global coverage to the partial coverage for structured MDPs such as KNRs and Gaussian Processes models \citep{deisenroth2011pilco}. In this work, we consider \emph{arbitrary} MDPs with a realizable model class and aim for PAC bounds under a partial coverage condition.

\paragraph{Representation learning.} We discuss literature related to representation learning in RL. Representation learning for low-rank MDPs (ground truth feature representation is unknown) in online learning is studied from a model-based perspective \citep{Agarwal2020_flambe} and model-free perspective \citep{ModiAditya2021MRLa}. In the online setting, \citet{zhang2021provably,papini2021leveraging} also study representation learning under different model assumptions. Comparing with these works, since our setting is offline, the algorithm and analysis are totally different. 

In the offline setting, \cite{ni2021learning} study dimensionality reduction in a given kernel space, and  \cite{hao2021sparse} study feature selection in sparse linear MDPs. Their focus is different as they do not study PAC guarantees under partial coverage.  \cite{ni2021learning} assumes the transition operator can be properly embedded into predefined Reproducing Kernel Hilbert Spaces and learns low-dimensional state-action representations via kernelized embedding and low-rank tensor decomposition. However, they did not study the errors for policy optimization after using these learned features. Regarding offline distribution coverage, \cite{ni2021learning} assumes that the feature covariance matrix (feature associated with the pre-defined kernel) of the offline distribution  is full rank.  \cite{hao2021sparse} studies an OPE problem  on sparse linear Bellman complete MDPs in the offline learning setting where they assume all covariance matrices (covariance matrices that correspond to all possible subsets of features) under the offline distribution are full rank as well. 
We study policy optimization in low-rank MDPs (with unknown feature representation), and we do not assume full coverage, i.e., we do not assume the feature covariance matrix is full rank, and indeed our result is distribution-dependent since it scales with respect to the rank of the covariance matrix that is defined using the ground truth feature representation.

\section{Preliminaries}

\label{sec:prelim}
We consider a Markov Decision process (MDP) $\Mcal = \{\Scal,\Acal, P,\gamma,r,d_0\}$ where $P:\Scal\times \Acal \to \Delta(\Scal)$ is the transition, $r:\Scal\times \Acal\to [0,1]$ is the reward function, $\gamma\in [0,1)$ is the discount factor, and $d_0\in \Delta(\Scal)$ is the initial state distribution. With slight abuse of notation, we denote the Radon-nikodym derivative of $P$ with respect to a baseline measure $\iota$ by $P$ as well, i.e., $P$ is a probability mass function in the discrete case ($\iota$ is the counting measure) and a probability density function in the continuous setting ($\iota$ is the Lebesgue measure). A policy $\pi$ maps from state (or history) to distribution over actions. Given a policy $\pi$ and a transition distribution $P$, $V^{\pi}_{P}$ denotes the expected cumulative reward of $\pi$ under $P,d_0$ and $r$. Similarly, $Q^{\pi}_P:\Scal\times \Acal \to \RR, A^{\pi}_P:\Scal\times \Acal \to \RR$ are a Q-function and advantage-function under $P$ and $\pi$. Given a transition $P$, we denote $\pi(P)$ as the optimal policy associated with model $P$ under reward $r$.
We also denote $d^{\pi}_P\in \Delta(\Scal\times \Acal)$ as the average state-action distribution of $\pi$ under the transition model $P$, i.e, $d^{\pi}_P=(1-\gamma)\sum_{t=0}^{\infty}\gamma^t d^{\pi}_{P,t}$, where $d^{\pi}_{P,t}\in \Delta(\Scal\times \Acal)$ is the distribution of $(s_{t},a_{t})$ under $\pi$ and $P$ at a time-step $t$. We denote the true transition distribution as $P^\star$, which we do not know in advance. For simplicity, we suppose $r$ is known. The extension to the unknown reward is straightforward. 

In the offline RL setting, we have an offline distribution $\rho\in\Delta(\Scal\times\Acal)$, and an offline dataset $\Dcal = \{s^{(i)},a^{(i)}, r^{(i)}, s'^{(i)}\}_{i=1}^{n}$ which is sampled in the following way: $s,a\sim \rho, r = r(s,a), s'\sim P^\star(\cdot | s,a)$. We hope to obtain $\pi(P^{\star})=\argmax_{\pi}V^{\pi}_{P^\star}$ from this offline dataset without any further interaction with the environment. We often denote $\EE_{\Dcal}[f(s,a,s')]=1/n\sum_{(s,a,s')\in \Dcal}f(s,a,s')$. Our goal is to construct an offline RL algorithm \text{Alg}, which maps from $\Dcal$ to $\pi$ %
so that the suboptimality gap $V^{\pi^{*}}_{P^\star} -  V^{\text{Alg}(\Dcal)}_{P^\star}$ for any comparator policy $\pi^{*}\in \Pi$ is minimized, where $\Pi$ in this work can be an unrestricted policy class (e.g., including non-Markovian policies). Hereafter, $c,c_1,c_2,\cdots$ are always universal constants.

\paragraph{Partial coverage.} Throughout this work, we do not assume $\rho$ has global coverage. The global coverage in this work means that the density ratio based concentrability coefficient $d^{\pi}_{P^\star}(s,a)/\rho(s,a)$ is upper-bounded by some constant $C\in \RR^{+}$ for all polices $\pi\in \Pi$
, or the feature covariance matrix corresponding to the offline distribution $\EE_{s,a\sim \rho} \phi(s,a)\phi(s,a)^{\top}$ ($\phi\in\Scal\times\Acal \to \RR$ is a feature representation) is full rank and has a non-zero minimum eigenvalue, which are commonly used assumptions in offline RL \citep{munos2005error,antos2008learning,ChenJinglin2019ICiB,DuanYaqi2020MOEw}. Under the full coverage, they show the output policy can compete with the globally optimal policy $\pi(P^{\star})$.  However, this assumption may not be true in practice as computing an exploratory policy itself is a challenging task for large-scale RL problems. 
Instead, we are interested in the partial coverage setting such as $d^{\pi^{*}}_{P^\star}(s,a)/\rho(s,a)\leq C$, which means the state-action occupancy measure under some comparator policy $\pi^*$ is covered by the offline dataset. We want to design an algorithm that can compete against \emph{any} policy $\pi^*$ that is covered by the offline data. 
This assumption is much weaker than the global coverage. 
\looseness=-1

\section{Pessimistic Model-based Offline RL }\label{sec:version}

We first introduce a general model-based algorithm that has a PAC guarantee of the suboptimality gap under partial coverage defined with a newly introduced concentrability coefficient. The algorithm takes a realizable model class as input and outputs a policy that is as good as any comparator policy that is covered by the offline data in the sense of the bounded concentrability coefficient.

\subsection{With Total Variation Constraints}

{\ouredit Our algorithm, \emph{Constrained Pessimistic Policy Optimization with total variation constraints (CPPO-TV)} (\pref{alg:main_version}),} takes a realizable hypothesis class $\Mcal$ (with $P^\star\in \Mcal$) consisting of $|\Mcal|$ candidate models as input,  computes the maximum likelihood estimator (MLE) $\widehat{P}_{\mathrm{MLE}}$ using the given offline data $\Dcal = \{s,a,s'\}$. It then forms a min-max objective subject to a constraint. The min-max objective introduces pessimism via searching for the least favorable model $P$ (in terms of its policy's value $V^{\pi}_P$) that is feasible with respect to the constraint. 
We can also express the constrained optimization procedure using a version space $\Mcal_{\Dcal}$ and a policy optimization procedure defined below: 
\begin{align}\label{eq:version}
\max_{\pi\in \Pi} \min_{P\in\Mcal_{\Dcal}} V^{\pi}_{P},\,\, \text{ where }\Mcal_{\Dcal}=\braces{P\mid P\in \Mcal, \mathbb{E}_{\Dcal}\left[ \TV (\widehat{P}_{\MLE}(\cdot | s,a), P(\cdot | s,a))^2\right] \leq \xi}, 
\end{align}
where $\mathrm{TV}(P_1,P_2)$ is a total variation (TV) distance between two distributions $P_1$ and $P_2$. The version space $\Mcal_{\Dcal}$ contains models that are not far away from $\widehat{P}_{\mathrm{MLE}}$ in terms of the average TV distance under $\Dcal$. The version space is constructed such that with high probability $P^\star \in \Mcal_{\Dcal}$.  %

Below we state the algorithm's performance guarantee. Assuming for now that $P^\star \in \Mcal_{\Dcal}$ holds with high probability, then, $\hat V^{\pi}\coloneqq \min_{P\in \Mcal_{\Dcal}}V^{\pi}_P$ is a pessimistic policy evaluation estimator, which satisfies $\hat V^{\pi} \leq V^{\pi}_{P^\star}$ for all $\pi\in\Pi$. Using the idea of pessimism, we have the following observation: 
\begin{align*}
          V^{\pi^{*}}_{P^{\star}}-V^{\hat \pi}_{P^{\star}}=V^{\pi^{*}}_{P^{\star}}-\hat V^{\pi^{*}}+\hat V^{\pi^{*}}- V^{\hat \pi}_{P^{\star}}\leq V^{\pi^{*}}_{P^{\star}}-\hat V^{\pi^{*}}+\hat V^{\hat \pi}- V^{\hat \pi}_{P^{\star}}\leq V^{\pi^{*}}_{P^{\star}}-\hat V^{\pi^{*}}, 
\end{align*} where the first inequality uses $\hat \pi=\argmax_{\pi\in \Pi}\hat V^{\pi}$ and the second inequality uses $\hat V^{\pi} \leq V^{\pi}_{P^\star}$ for all $\pi\in\Pi$. Thus,  the final error only incurs the policy evaluation error for the comparator policy $\pi^{*}$, which leads to the error only depending on the concentrability coefficient for the comparator policy. %

\begin{algorithm}[!t]
\caption{Constrained Pessimistic  Policy Optimization with Total Variation constraints (CPPO-TV) }\label{alg:main_version}
\begin{algorithmic}[1]
    \STATE {\bf Require}: Models $\Mcal$, dataset $\Dcal$, parameter $\xi$, policy class $\Pi$ (note $\Pi$ could be unrestricted)
    \STATE Obtain the estimator $\hat P_{\MLE}$ by MLE: $\widehat P_{\MLE}=\argmax_{P\in \Mcal}\EE_{\Dcal}[\ln P(s'\mid s,a)]$. 
    \STATE Constrained policy optimization:
    	\begin{equation*}
		\hat\pi = \argmax_{\pi\in\Pi} \min_{P\in\Mcal} V^{\pi}_P, \text{ s.t., } \mathbb{E}_{\Dcal}\bracks{ \TV (\widehat{P}_{\MLE}(\cdot | s,a), P(\cdot | s,a))^2 }\leq \xi.	\end{equation*}
    \STATE \textbf{Return} $\hat\pi$%
\end{algorithmic}
\end{algorithm}

We define the following new concentrability coefficient that uses the model class $\Mcal$ :
\begin{definition}[Model-based Concentrability Coefficient]\label{def:partial_con}
For a comparator policy $\pi^*$, we define the concentrability coefficient $C^\dagger_{\pi^*}$ as follows:
\begin{align*}
  C^{\dagger}_{\pi^{*}}=\sup_{P'\in \Mcal}\frac{\EE_{(s,a)\sim d^{\pi^{*}}_{P^{\star}}}[ \TV ({P}'(\cdot | s,a), P^{\star}(\cdot | s,a))^2] }{\EE_{(s,a)\sim \rho}[\TV ({P}'(\cdot | s,a), P^{\star}(\cdot | s,a))^2]}. 
\end{align*} 
\end{definition}
The following theorem shows  CPPO learns a policy that competes against $\pi^*$ when $C^\dagger_{\pi^*}<\infty$. 

\begin{theorem}[PAC Bound for CPPO-TV with general function class] \label{thm:version}
Assume $P^{\star}\in \Mcal$. We set $\xi=c_1\frac{ \ln(c_2|\Mcal| / \delta) }{n} $.  Then, with probability $1-\delta$, for any comparator policy $\pi^*\in \Pi$ ($\Pi$ can be the unrestricted policy class containing non-Markovian policies),  
\begin{align} \label{eq:result}
      V^{\pi^{*}}_{P^{\star}}-V^{\hat \pi}_{P^{\star}} \leq  c_3 (1-\gamma)^{-2}  \sqrt{\frac{C^{\dagger}_{\pi^{*}} \ln(c_2|\Mcal| / \delta) }{n}}. 
\end{align}
\end{theorem} 

To the best of our knowledge, this is the \emph{first} algorithm that achieves a PAC guarantee for \emph{any MDPs} under the partial coverage assumption $  C^{\dagger}_{\pi^{*}}<\infty$ with only a realizable hypothesis class. We emphasize that the inequality in the above \emph{uniformly} holds for \emph{all} policies with probability $1-\delta$ including \emph{history-dependent non-Markovian policies} %
. Note that the ability to compete against non-Markovian policies in offline RL is meaningful  when the offline data does not cover the optimal policy $\pi^\star$ (i.e., there could be a high-quality history-dependent policy that is covered by the offline data against which we want to compete). %
In model-free approaches, this type of result generally cannot be obtained. Indeed, the model-free approach from \cite{XieTengyang2021BPfO} requires $\Pi$ to be a restricted Markovian policy class, since their bound contains $\text{poly}(\ln(|\Pi|))$ dependence. 
For the detailed discussion, refer to Section~\ref{sec:conversion}.  \looseness=-1

The quantity $C^{\dagger}_{\pi^{*}}$ adaptively captures the discrepancy between the offline data and the state-action occupancy measure under a comparator policy $\pi^{*}$ depending on the model class $\Mcal$. For example, $C^{\dagger}_{\pi^{*}}$ can be reduced to a relative condition number in KNRs.  %
Besides, it is  always upper bounded by the density ratio based concentrability coefficient:
\begin{align*} 
    C_{\pi^{*}, \infty}\coloneqq \sup_{(s,a)}\frac{d^{\pi^{*}}_{P^{\star}}(s,a)}{\rho(s,a)}.  
\end{align*}

Prior works that achieve PAC guarantees with only realizable model classes rely on much stronger global coverage $\sup_{\pi}C_{\pi,\infty}<\infty$ \citep{ChenJinglin2019ICiB}. Even when the comparator policy is the optimal policy $\pi(P^{\star})$, the partial coverage condition $C_{\pi(P^{\star}),\infty}<\infty$ is weaker. Existing pessimistic model-based algorithms and their theoretical results \citep{ChangJonathanD2021MCSi} often assume that a \emph{point-wise} model uncertainty measure is given as a by-product of model fitting, which limits the applicability to special linear models such as KNRs/GPs. {\ouredit CPPO-TV} can work for any MDPs with the realizable function class having a valid statistical complexity such that the MLE properly works. \looseness=-1

{\newedit 
\begin{remark}[Variations of Concentrability Coefficients  ]
With a slight modification of the proof, we can obtain the bound \pref{eq:result} where $C^{\dagger}_{\pi^{\star}}$ is replaced with 
\begin{align*}
          C_{\pi^{*}, 2}\coloneqq \EE_{(s,a)\sim \rho }\bracks{ \prns{\frac{d^{\pi^{*}}_{P^{\star}}(s,a)}{\rho(s,a)}}^2 }^{1/2}.
\end{align*}

\end{remark}
}

{\newedit 
\subsection{With Likelihood-ratio Based Constraints }

In \pref{alg:main_version}, the constraint is given using the total variation distance. Here, we propose a similar contained pessimistic policy optimization algorithm in \pref{alg:main_version2}. The only difference compared to \pref{alg:main_version} is that the constraint is given based on the log likelihood-ratio. Since this new constraint is generally easier to calculate than the total variation distance, \pref{alg:main_version} might be preferable compared to \pref{alg:main_version}. 

\begin{algorithm}[!t]
{\newedit 
\caption{Constrained Pessimistic  Policy Optimization with Likelihood-Ratio based constraints  (CPPO-LR) }\label{alg:main_version2}
\begin{algorithmic}[1]
    \STATE {\bf Require}: Models $\Mcal$, dataset $\Dcal$, parameter $\bar \zeta$, policy class $\Pi$ (note $\Pi$ could be unrestricted)
    \STATE Constrained policy optimization:
    \vspace{-5pt}
    	\begin{align*}
		\hat\pi &= \argmax_{\pi\in\Pi} \min_{P\in \bar \Mcal_{\Dcal}} V^{\pi}_P,\,\,\text{s.t.} \\ 
 \bar \Mcal_{\Dcal} &= \{P \in \Mcal: \EE_{\Dcal}[\ln P(s'\mid s,a)]\geq   \max_{P\in \Mcal}\EE_{\Dcal}[\ln P(s'\mid s,a)]- \bar \zeta \}. 
	\end{align*}
    \STATE \textbf{Return} $\hat\pi$%
\end{algorithmic}
}
\end{algorithm}

CPPO-LR has the following same statistical guarantee as the one obtained in \pref{thm:version} for CPPO-TV. 

\begin{theorem}[PAC Bound for CPPO-LR with general function class] \label{thm:version2}
Assume $P^{\star}\in \Mcal$. We set $\bar \zeta=c_1\frac{ \ln(c_2|\Mcal| / \delta) }{n} $.  Then, with probability $1-\delta$, for any comparator policy $\pi^*\in \Pi$ ($\Pi$ can be the unrestricted policy class containing non-Markovian policies),  
\begin{align*} 
      V^{\pi^{*}}_{P^{\star}}-V^{\hat \pi}_{P^{\star}} \leq  c_3 (1-\gamma)^{-2}  \sqrt{\frac{C^{\dagger}_{\pi^{*}} \ln(c_2|\Mcal| / \delta) }{n}}. 
\end{align*}
\end{theorem}

Next, we consider the case where the function class is infinite. To quantify statistical complexities for infinite function classes, we define bracketing numbers as follows \citep{geer2000empirical}. 

\begin{definition}[Bracketing numbers]
Consider a function class $\Fcal$ that maps $\Xcal$ to $\RR$. Given two functions $l(\cdot)$ and $u(\cdot)$, the bracket $[l,u]$ is the set of all functions $f\in \Fcal$ with $l(x)\leq f(x)\leq u(x)$ for all $x \in \Xcal$. An $\epsilon$-bracket is a bracket $[l,u]$ with $\|l-u\|\leq \epsilon$. The bracketing number of $\Fcal$ w.r.t. the metric $\|\cdot\|$ denoted by $N_{[]}(\epsilon,\Fcal,\|\cdot\|$) is the minimum number of $\epsilon$-brackets need to cover $\Fcal$. 
\end{definition}

Using bracketing numbers, we can obtain the guarantee when the function class is infinite. Note $\iota(\Scal)$ is $|\Scal|$ in the discrete state space, and the volume of $\Scal$ in the continuous state space. Recall $\iota(\cdot)$ is a baseline measure.

\begin{theorem}[PAC Bound for CPPO-LR with general function class] \label{thm:version3}
Assume $P^{\star}\in \Mcal$. We set $\bar \zeta=c_1\frac{ \ln(c_2 N_{[]}(\epsilon,\Mcal,\|\cdot\|_{\infty})/ \delta) }{n} $ where $\epsilon =1/(n\iota(\Scal) )$. Then, with probability $1-\delta$, for any comparator policy $\pi^*\in \Pi$ ($\Pi$ can be the unrestricted policy class containing non-Markovian policies),  
\begin{align*} 
      V^{\pi^{*}}_{P^{\star}}-V^{\hat \pi}_{P^{\star}} \leq  c_3 (1-\gamma)^{-2}  \sqrt{\frac{C^{\dagger}_{\pi^{*}} \ln(c_2 N_{[]}(\epsilon,\Mcal,\|\cdot\|_{\infty}) / \delta) }{n}}. 
\end{align*}
\end{theorem}

\begin{remark}[Comparison between CPPO-TV and CPPO-LR]
\pref{thm:version} consider the case where the hypothesis class $\Mcal$ is finite in CPPO-TV. When the hypothesis class is infinite, we can still obtain the PAC guarantee of CPPO-TV by utilizing the generalized result for any realizable model class with valid statistical complexity. However, in this result, we still need certain non-trivial calculations for each model while \pref{thm:version3} just requires the simple calculation of log bracketing numbers of models. For example, this benefit is later seen when we consider linear mixture MDPs.  Refer to \pref{rem:lower_bound}. 
\end{remark}
}

\subsection{Comparison to the model-free approach from \texorpdfstring{\cite{XieTengyang2021BPfO,zanette2021provable}}{aa} }\label{sec:conversion}
\cite{XieTengyang2021BPfO} study the model-free  setting where the function class $\Qcal$ models Q functions assumed to be Bellman complete for any Markovian policy in $\Pi$. 
While directly comparing model-based approaches to model-free approaches is hard as they use different inductive biases in function classes, we can leverage the approach from \citet[Corollary 6]{ChenJinglin2019ICiB} to convert a model class $\Mcal$ to a pair of $\Qcal$ and $\Pi$ class. Specifically, we can convert a model class $\Mcal$ to a pair of $\Qcal$ class and $\Pi$ class such that $\Qcal$ will be realizable and also Bellman complete with respect to all $\pi\in\Pi$. After such conversion from the model-based setting to the model-free setting, running the algorithm from \cite{XieTengyang2021BPfO} using $\Qcal$ and $\Pi$ achieves 
$V^{\pi^{*}}_{P^{\star}}-V^{\hat \pi}_{P^{\star}}=\sqrt{C^{\diamond}\ln(|\Mcal||\Pi|/n}),\forall \pi^{*} \in \Pi$, where $C^{\diamond}$ is some concentrability coefficient. For the detailed derivation, we refer readers to Appendix \pref{ape:comparison}. Since the suboptimality gap from such conversion incurs  $\ln |\Pi|$, a policy class $\Pi$ cannot be too large. Especially, unlike our results, it cannot take the unrestricted policy class as $\Pi$. This restriction \emph{cannot} be fixed even if we use natural policy gradient (NPG) algorithms unless models have special structures  \citep{XieTengyang2021BPfO,zanette2021provable}. The details are given in Section~\ref{ape:comparison}.

In summary, our theorem (\pref{thm:version} and \pref{thm:version2}) indicates two advantages of model-based approaches: (1) realizability in function class is enough to ensure a PAC guarantee under a partial coverage condition, (2) it can compete against a larger pool of candidate policies  including history-dependent non-Markovian policies, which is a meaningful property when the offline data does not cover the globally optimal policy. %

\section{Examples with Refined Concentrability Coefficients}\label{sec:examples}

In the previous section, our results apply to any MDP as long as its true transition belongs to a function class $\Mcal$.  In this section, we consider several concrete MDPs with additional structural conditions. We show that by leveraging the additional structural conditions, we can refine the model-based concentrability coefficient to more natural quantities. The examples that we discuss here are: (1) linear mixture MDPs which generalize linear MDPs from \cite{YangLinF2019RLiF} and tabular MDPs, (3) low-rank MDPs, and (4) factored MDPs.

\subsection{Tabular MDPs}

Tabular MDPs are MDPs where the state and action spaces are finite. Although the corresponding hypothesis class for tabular MDPs is infinite, we can still run MLE, that is, estimating $P^{\star}$ by the empirical distribution. Then, \pref{alg:main_version} and \pref{alg:main_version2} has the following guarantee. %

\begin{corollary}[PAC bound for tabular MDP]\label{cor:tabular}{\ouredit We set $\xi= c_1\frac{|\Scal|^2|\Acal| \ln (n|\Scal|\Acal|c_2/\delta)}{n}$ and denote an output of CPPO-TV (\pref{alg:main_version}) by $\hat \pi$.} %
Then with probability $1-\delta$, for all $\pi^{*}\in \Pi$ ($\Pi$ is the unrestricted policy class),  
\begin{align*}  
V^{\pi^{*}}_{P^{\star}}-V^{\hat \pi}_{P^{\star}}  \leq    c_3(1-\gamma)^{-2}\braces{\sqrt{\frac{ C_{\pi^{*}, \infty}|\Scal|^2|\Acal|\ln(n|\Scal||\Acal|c_4/\delta)}{n} }}. 
\end{align*}
{\ouredit The same statement holds when $\hat \pi$ is an output of CPPO-LR (\pref{alg:main_version2}) by setting $\bar \zeta =  c_1\frac{|\Scal|^2|\Acal| \ln (n|\Scal|\Acal|c_2/\delta)}{n}$. }
\end{corollary}
Here, for tabular MDPs with $\Mcal = \{ P: P(\cdot|s,a) \in \Delta(\Scal), \forall s,a\}$, the model-based concentrability coefficient in \pref{def:partial_con} is equal to the density ratio based concentrability coefficient $C_{\pi^*,\infty}$ which is the right quantity for small-size tabular MDPs.

\subsection{Linear Mixture MDPs}

We define linear mixture MDPs \citep{ayoub2020model,modi2020sample}.  

\begin{definition}[Linear mixture MDPs]
Given a feature vector $ \psi:(\Scal,\Acal,\Scal)\to \RR^d$, a linear mixture MDP is an MDP where the ground truth transition is $ P^\star(s'|s,a) := \theta^{\star \top} \psi(s,a,s') ,\theta^{\star} \in \RR^d. $
\end{definition}
By setting, $\psi(s,a,s') = \mu(s') \bigotimes \phi(s,a)$ ($\otimes$ denotes the Kronecker product), linear mixture MDPs include the following parametric linear MDPs \citep{YangLinF2019RLiF}: 
\begin{definition}[Parametric linear MDPs]\label{def:linear_mdps}
{\ouredit Parametric linear MDP admits a decomposition: 
$$P^{\star}(s' | s,a):= \sum_{i=1}^{d_1}\sum_{j=1}^{d_2}M^{\star}_{ij}\mu_i(s')\phi_j(s,a)$$
with  $\mu:\Scal \to \RR^{d_1}$ and $\phi:\Scal\times \Acal \to \RR^{d_2}$. Here, $\mu$ and $\phi$ are known features, and $M^{\star} \in \RR^{d_1\times d_2}$ is unknown. }
\end{definition} 

We use CPPO to learn on linear mixture MDPs. The corresponding $\Mcal$  is 
\begin{align*}
   \Mcal_{\text{Mix}}=\braces{\theta^{\top}\psi(s,a,s') \mid \theta \in \Theta\subset \RR^d,\int \theta^{\top}\psi(s,a,s')\mathrm{d}\iota(s')=1 \quad \forall(s,a)}.  
\end{align*}

Given a function $V:\Scal\to \mathbb{R}$, define the state-action feature indexed by $V$ as $$\psi_{V}(s,a) :=\int \psi(s,a,s')V(s')\rd\iota(s'),$$ we have the following PAC guarantee.  
{\newedit \begin{corollary}[PAC bound for linear mixture MDPs]\label{cor:linear_mixture}
Suppose %
$\Theta=\{\theta:\|\theta\|_2\leq R\}$, $\|\psi_{V}(s,a)\|_2\leq 1$ for any $ V\in \{\Scal\to [0,1]\}$, and $P^{\star}\in  \Mcal_{\text{Mix}}$. Let $\hat \pi$ be the output of CPPO-LR (\pref{alg:main_version2}) when we set $\bar \zeta=c_1 d\ln(c_2 nR\iota(\Scal)/\delta)/n$. Then, with probability $1-\delta$, for any  $\pi^{*}$ in  $\Pi$ (again $\Pi$ can be the unrestricted policy class), CPPO outputs a policy $\hat\pi$ such that: 
\begin{align}\label{eq:linear_mixture_error}
      V^{\pi^{*}}_{P^{\star}}-V^{\hat \pi}_{P^{\star}} \leq   c_3 (1-\gamma)^{-2}  \sqrt{ \min(dC^\dagger_{\pi^{*}},d^2\bar C_{\pi^{*},\mathrm{mix}}) \frac{\ln(c_4 nR \iota(\Scal)/\delta)}{n}   },
\end{align} where the concentrability coefficient $\bar C_{\pi^*, \mathrm{mix}}$ is defined as:
\begin{align*}
    \bar C_{\pi^{*},\mathrm{mix}} :=\sup_{P\in \Zcal_{P^\star} }\sup_{x \in \mathbb{R}^d}\left(\frac{x^{\top}\Sigma_{\pi^{*},\psi_{V^{\pi^{*}}_P}} x}{x^{\top}\Sigma_{\rho,\psi_{V^{\pi^{*}}_P}}x}\right)
\end{align*}
with the localized class $\Zcal_{P^\star} :=\{P \in \Mcal_{\text{mix}}: \E_{(s,a)\sim \rho}[\TV(P(\cdot \mid s,a),P^{\star}(\cdot \mid s,a))^2]\leq \bar \zeta\}$, $\Sigma_{\rho, \psi_{V^{\pi^*}_{P}}}=\EE_{(s,a)\sim \rho}[\psi_{V^{\pi^{*}}_P}(s,a)\psi_{V^{\pi^{*}}_P}(s,a)^{\top}]$, and $\Sigma_{\pi^*, \psi_{V^{\pi^*}_{P}}} = \EE_{s,a\sim d^{\pi^*}_{P^\star}}[\psi_{V^{\pi^{*}}_P}(s,a)\psi_{V^{\pi^{*}}_P}(s,a)^{\top}]$.

When specializing to parametric linear MDPs, the above bound still holds with $  \bar C_{\pi^{*},\mathrm{mix}}$ being replaced by the relative condition number $\bar C_{\pi^*,\phi}$:
\begin{align*}
\bar C_{\pi^*,\phi} := \sup_{x\in\mathbb{R}^d} \frac{x^T \Sigma_{\pi^*} x}{ x^{\top} \Sigma_\rho, x}, \text{ where } \Sigma_{\rho}=\EE_{(s,a)\sim \rho}[\phi(s,a)\phi(s,a)^{\top}],\,\Sigma_{\pi^{*}}=\EE_{(s,a)\sim d^{\pi^{*}}_{P^{\star}} }[\phi(s,a)\phi(s,a)^{\top}].  %
\end{align*}
\end{corollary}
}
This is the first PAC-guarantee result in the offline setting under partial coverage 
$\bar C_{\pi^{*},\mathrm{mix}}<\infty$ for linear mixture MDPs. The quantity $\bar C_{\pi^{*},\mathrm{mix}}$ is a newly-introduced concentrability coefficient for linear mixture MDPs. 
This coefficient is measured on the integrated feature vectors $\phi_{V}(s,a)$ for $V:S\to [0,1]$. %
Note the class of $V$ is localized, i.e., we consider state-value functions $V^{\pi^*}_{P}(s)$ for all $P$ centered around $P^{\star}$ under data distribution $\rho$ (i.e., $P\in\Zcal_{P^\star}$). Such localization property ensures that $\bar C_{\pi^*,\mathrm{mix}} \leq C^\dagger_{\pi^\star}$
(see \pref{lem:mixture} in \pref{sec:auxi}). %

Note that these relative condition number based quantifiers are always tighter than the density ratio based concentrability coefficients (i.e., $\max\{\bar  C_{\pi^{*}}, \bar C_{\pi^{*},\mathrm{mix}}  \}\leq  C_{\pi^{*},\infty}$). %
For the special case where $\phi(s,a)$ is a one-hot encoding vector, then they are reduced to the density ratio based concentrability coefficient. In a non-tabular setting, even if when the density ratio is infinite, the relative condition number can still be finite. Intuitively, the bounded relative condition number implies that the offline data covers the subspace that the comparator policy $\pi^*$ visits.

We finally remark %
the norm assumption $\|\psi_{V}(s,a)\|_2<1$ is commonly assumed in the online setting \citep{zhou2021nearly}.  

{\newedit \begin{remark}[Guarantee of CPPO-TV]\label{rem:lower_bound}
Corollary \ref{cor:linear_mixture} is for CPPO-LR. Under $\inf_{s,a,s'}P^{\star}(s'\mid s,a)\geq c_3>0$, we can ensure the similar guarantee for CPPO-TV. However, apparently, it is not obvious how to relax this assumption when we use CPPO-TV. 
\end{remark}
}

\subsection{Low-rank MDPs with Representation Learning}

\label{sec:feature_learning}
We consider the representation learning in offline RL. Following FLAMBE \citep{Agarwal2020_flambe}, we study low-rank MDPs but in the offline setting. Note that low-rank MDPs here are a more generalized model of the aforementioned parametric linear MDPs \citep{YangLinF2019RLiF} since the true feature representation $\phi^{\star}$ in a low-rank MDP is unknown. %

\begin{definition}[Low rank MDPs]
The ground-truth model $P^{\star}$ admits a low rank decomposition with a dimension $d$ if there exists two embedding functions $\mu^{*}:\Scal \to \RR^d,\phi^{*}:\Scal \times \Acal \to \RR^d$ s.t. $P^{\star}(s'\mid s,a)=\mu^{*}(s')^{\top}\phi^{*}(s,a)$.  Neither $\mu^*$ nor $\phi^*$ is known to the learner. 
\end{definition}

One interesting special case of a low-rank MDP is the following latent variable model (see \cite{Agarwal2020_flambe} for more details).
\begin{definition}[Latent variable models] \label{def:latent_vab}
There exists a latent space $\Zcal$ along with functions $\mu^{*}:\Zcal \to \Delta(\Scal)$ and $\phi^{*}:\Scal\times \Acal \to \Delta(\Zcal)$ s.t. $P^{\star}(\cdot \mid s,a)=\sum_{z\in \Zcal}\mu^{*}(\cdot \mid z)\phi^{*}(z\mid s,a)$. 
\end{definition}

To tackle representation learning under partial coverage on low-rank MDPs,  we setup function classes as follows: given two function classes $\Psi \subset \Scal\to \RR^{d},\Phi\subset \Scal\times \Acal \to \RR^{d}$ (both are realizable in the sense that $\mu^*\in \Psi$ and $\phi^*\in \Phi$), we consider a hypothesis class $
    \{\mu(s')^{\top}\phi(s,a);\mu\in \Psi,\phi \in \Phi  \}.$
Then, CPPO (\pref{alg:main_version}) and \pref{thm:version} still work under this setting. Note that this function class setup is exactly the same as the one from FLAMBE.

Here we show that by leveraging the low-rankness, we can refine the concentrability coefficient to a relative condition number defined by the unknown true representation $\phi^*$. We emphasize that this does not depend on the other features. Particularly, given a comparator policy $\pi^*$, we define $\bar C_{\pi^*,\phi^{\star}}$: 
\begin{align*}
\bar C_{\pi^*,\phi^{\star}} = \sup_{x\in \RR^d} \frac{ x^{\top}  \Sigma_{\pi^{*}}   x  }{ x^{\top} \Sigma_{\rho} x},\quad \Sigma_{\pi^*} := \mathbb{E}_{s,a\sim d^{\pi^{*}}_{P^{\star}}} \phi^*(s,a) \phi^*(s,a)^{\top},\quad \Sigma_{\rho} := \mathbb{E}_{s,a\sim \rho} \phi^*(s,a) \phi^*(s,a)^{\top}. 
\end{align*}

We can show CPPO learns a policy that can compete against $\pi^*$ as long as $\bar C_{\pi^*,\phi^{\star}} <\infty$.

\begin{theorem}[PAC bound for low-rank MDP]\label{thm:low_rank}
 We set $\xi=c_1 \frac{ \ln(|\Phi||\Psi|c_2 / \delta) }{n} $.  
Suppose (a): $\|\phi(s,a)\|_2\leq 1, \forall (s,a)\in \Scal\times\Acal$ for any $\phi \in \Phi$, $\int \mu(s')^{\top}\phi(s,a)\rd \iota(s')=1$ and $\int \|\mu(s)\|_2 \mathrm{d}\iota(s)\leq \sqrt{d}$ for any $ \mu\in \Psi,\phi\in \Phi$, (b) $\rho(s,a)=d^{\pi_b}_{P^{\star}}(s,a)$, (c) $P^{\star}(s'|s,a)=\mu^{*}(s')^{\top}\phi^{*}(s,a)$ for some $\mu^{*}\in \Psi,\phi^{*}\in \Phi$. With probability at least $1-\delta$, for all $\pi^*\in \Pi$ (again $\Pi$ can be an unrestricted policy class), {\newedit CPPO-TV (\pref{alg:main_version}) and CPPO-LR (\pref{alg:main_version2}) find $\hat\pi$} such that:
\begin{align}\label{eq:low_rank} 
     V^{\pi^{*}}_{P^{\star}}-V^{\hat \pi}_{P^{\star}}\leq    c_3 \sqrt{\bar C_{\pi^{*},\phi^{\star}}\omega_{\pi^{*}}\rank(\Sigma_{\rho}) \frac{\ln(|\Psi||\Phi|c_4/\delta)}{(1-\gamma)^4 n}},\,\omega_{\pi^{*}}=\prns{\max_{(s,a)}\frac{\pi^{*}(a\mid s)}{\pi_b(a\mid s)}}
\end{align}
\end{theorem}

To the best of our knowledge, this is the first established PAC result under the partial coverage condition $\bar C_{\pi^*,\phi^{\star}} <\infty,\omega_{\pi^{*}}<\infty$ for low-rank MDPs in the offline setting. %
We also emphasize that our bound in \pref{thm:low_rank} is distribution dependent, i.e., it depends on $\text{rank}(\Sigma_{\rho})$ rather than the exact rank $d$. Note that $\text{rank}( \Sigma_{\rho}) \leq d$, and $\text{rank}( \Sigma_{\rho})$ could be much smaller than $d$ when the offline distribution only concentrates on a low-dimensional subspace (defined using $\phi^*$). Note that the assumption that $\omega_{\pi^*} <\infty$ does not imply the state-action density ratio $C_{\pi^*,\infty}$ is small. Indeed, $\omega_{\pi^*} <\infty$ is much weaker than $C_{\pi^*,\infty}<\infty$.  \looseness=-1%

\subsection{Factored MDPs}\label{sec:factored}

The last example we include is the factored MDP \citep{kearns1999efficient} defined as follows: 

\begin{definition}[Factored MDPs] Let $d \in \mathbb{N}^+$ and $\Ocal$ being a small finite set. The state space $\Scal = \Ocal^{d}$, and for each state $s$, we denote $s[i] \in \Ocal$ as the $i$-th variable of the state $s$.  For each $i\in [1,\cdots,d]$, the parents of $i$, $\pa_i \subset [1,\cdots,d]$, is the subset of state variables that directly influences $i$, i.e., the transition is defined as follows:
\begin{align*}
\forall s,a,s': P^\star(s' | s,a) = \prod_{i=1}^d  P^\star_i( s'[i]  | s[\pa_i], a). 
\end{align*} 
We will denote $\Scal_i = \Ocal^{|\pa_i|}$, and given $s\in\Scal$, we will have $s[\pa_i] \in \Scal_i$
\end{definition} 

Due to the factorization, the transition operator $P^\star$ can be described with $L \coloneqq \sum_{i=1}^d |\Acal| |\Ocal|^{1 + |\pa_i|}$ many parameters. In contrast, the non-factored transition will need $O(|\Ocal|^d)$ parameters. When $|\pa_i| \ll d\, \forall i$, it is expected that we can learn this model with lower sample complexity by leveraging the factorization which has been demonstrated in the online setting \citep{kearns1999efficient}.  We remark a factored MDP is an example where model-based approaches are necessary as neither the optimal policy nor the Q functions  are factored \citep{koller2000policy}. 

\paragraph{Algorithm.}

{\newedit Next, we consider the algorithm. While \pref{alg:main_version} (CPPO-TV) and \pref{alg:main_version2} (CPPO-LR) can ensure partial coverage results in terms of $C_{\pi^{\star},\infty}$, we modify these algorithms to obtain more refined results so that we can take the factored structure into account. }

We consider the modification of CPPO-TV. First, we perform MLE for model learning: %
each factor $P^\star_i$ is independently learned via MLE: 
\begin{align*}  
\forall i\in [d], \widehat{P}_{\MLE,i} = \argmax_{P}\E_{\Dcal}[\ln P(  s'[i] | s[\pa_i], a  )], \quad \widehat{P} = \prod_{i} \widehat{P}_{\MLE,i}.
\end{align*}
Next, the constrained policy optimization procedure is defined as
\begin{align}\label{eq:modified}
\hat\pi = \argmax_{\pi} \min_{P := \prod_{i} P_i} V^{\pi}_P, \; \text{s.t.},  \mathbb{E}_{\Dcal}[\TV(P_i(\cdot \mid s,a) , \widehat{P}_{\MLE,i}(\cdot \mid s,a))^2] \leq \xi_i\,(\forall i\in [1,\cdots,d]).
\end{align}
{\ouredit Compared to the original CPPO-TV, we modify the constraint so that the constraint is factored as well.}  
Note that in the above objective, there is no restriction on the policy, i.e., the $\argmax$ operator searches over all possible policies including non-Markovian ones. 

{\newedit Next, we consider the modification of CPPO-LR. The algorithm is given as follows: 
\begin{align}\label{eq:modified2}
\hat\pi = \argmax_{\pi} \min_{P := \prod_{i} P_i} V^{\pi}_P, \; \text{s.t.},  \mathbb{E}_{\Dcal}[\log(\widehat{P}_{\MLE,i}(\cdot \mid s,a)/{P}_{i}(\cdot \mid s,a))] \leq \bar \zeta_i\,(\forall i\in [1,\cdots,d]).
\end{align}
 Compared to the original CPPO-LR, we modify the constraint so that the constraint is factored as well.}

\paragraph{Analysis.} To analyze the performance of the above modified CPPO, we  introduce a specialized concentration coefficient for factored MDPs that utilizes the factored structure. We focus on density ratio based concentrability coefficients since in a factored MDP with the function class $\Mcal := \{P = \prod_i P_i:  P_i \in \Scal_i\times\Acal \to \Delta(\Ocal)\}$, the concentrability coefficient associated with $\Mcal$ in \pref{def:partial_con} will be reduced to the density ratio. For any $\pi^*$, we define the concentrability coefficients for the factored MDP as follows: \looseness=-1
\begin{align*}
\ddot{C}_{\pi^{*}, \infty}\coloneqq \max_{j\in [1,\cdots,d]}\max_{s_j \in \Scal_j,a\in \Acal} \frac{d^{\pi^{*}}_{P^{\star}}(s_j,a) }{\rho(s_j,a)}, 
\end{align*}
where for $s_j\in\Scal_j$, we denote $\nu(s_j, a) := \sum_{s\in\Scal : s[\pa_j] = s_j} \nu(s,a)$ for any distribution $\nu\in\Delta(\Scal\times\Acal)$. 
Comparing to $C_{\pi^*, \infty}$ defined on the original state space $\Scal$, here $\ddot{C}_{\pi^{*}, \infty}$ is defined over each state space $\Scal_j$ associated with each factor $j$. Note that when $|\text{pa}_j| = \Theta(1)$, $|\Scal_j|$ is exponentially smaller than $|\Scal|$. One can verify that  $\ddot{C}_{\pi^{*}, \infty}\leq {C}_{\pi^{*}, \infty}$ where ${C}_{\pi^{*}, \infty}$ ignores the factored structure and treat $\Scal$ as a whole single space. This formally demonstrates the benefit of the factored structure in terms of the coverage condition in offline RL.
{\ouredit 
\begin{lemma}\textbf{(Comparison of density-ratio based concentrability coefficients  between factorized MDPs and non-factored MDPs)} \label{lem:comparison}
We have 
     $\ddot{C}_{\pi^{*}, \infty}\leq {C}_{\pi^{*}, \infty}$
\end{lemma}
}

With the new definition of the concentrability coefficients, now we are ready to state the PAC bound of CPPO for factored MDPs. Recall $L \coloneqq \sum_{i=1}^d L_i,L_i=|\Acal| |\Ocal|^{1 + |\pa_i|}$.

\begin{theorem}[PAC bound for factored MDP]\label{thm:factoed}
We set $\xi_i= c_1 \frac{L_i\ln(L_ic_2d/\delta)}{n}$ and $\bar \zeta_i= c_1 \frac{L_i\ln(L_ic_2d/\delta)}{n}$ . %
Then with probability $1-\delta$, {\newedit modified CPPO-TV \pref{eq:modified} and modified CPPO-LR \pref{eq:modified2}} find a policy $\hat\pi$ such that for all comparator policy $\pi^{*}\in\Pi$ ($\Pi$ can be unrestricted),%
\begin{align*}  
V^{\pi^{*}}_{P^{\star}}-V^{\hat \pi}_{P^{\star}}  \leq    c_3(1-\gamma)^{-2}\sqrt{\frac{ d\ddot{C}_{\pi^{*}, \infty} L\cdot \ln (nL c_4 d/\delta)}{n} }.%
\end{align*} %
\end{theorem}
Note that our sub-optimality gap scales polynomially with respect to $L$, i.e., the complexity of the factored MDP, rather than $|\Scal|$ which can be $\Omega(\exp(d))$. {\ouredit Importantly, the bound does not scale with ${C}_{\pi^{*}}$, which will be obtained using the original CPPO-TV and CPPO-LR. Instead, it scales with $\ddot{C}_{\pi^{*}}$, which is expected to be much smaller than ${C}_{\pi^{*}}$ from \pref{lem:comparison}.  }

{\newedit \begin{remark}[Improved Concentrability Coefficients] For interpretability, in the above theorem, we use density ratio based concentrability coefficient. We remark that indeed $\ddot{C}_{\pi^{*}, \infty}$ can be replaced with an $L_2$-based concentrability coefficient $$\ddot{C}_{\pi^{*}, 2}  = \max_{j \in [1,\cdots,d]}\EE_{(s_j,a)\sim \rho }\bracks{ \prns{\frac{d^{\pi^{*}}_{P^{\star}}(s_j,a)}{\rho(s_j,a)}}^2 }^{1/2}. $$
In this $L_2$-form, we can still leverage the factorized structure of factored MDPs using the following lemma. 

\begin{lemma}\textbf{(Comparison of density-ratio based concentrability coefficients  between factorized MDPs and non-factored MDPs)}\label{lem:comparison2}
     $$\ddot{C}_{\pi^{*}, 2} \leq  {C}_{\pi^{*}, 2}.$$
\end{lemma}
\end{remark}
}

\section{Constrained Pessimistic Model-Based Policy Optimization for KNRs}\label{sec:knrs}

{\newedit We consider the example of KNRs \citep{Kakade2020,CuriSebastian2020EMRL} in this section. 
More specifically, we tailor CPPO-TV and CPPO-LR to obtain tight guarantees on KNRs. Although the partial coverage results in KNRs have already been obtained in \citet{ChangJonathanD2021MCSi} with bonus-based pessimistic policy optimization, we aim to demonstrate the wide applicability of our constrained pessimistic model-based RL framework. }

\subsection{Finite Dimensional Kernelized Nonlinear Regulators}

A kernelized Nonlinear Regulator (KNR) \citep{Kakade2020} is a model where the ground truth transition $P^{\star}(s'|s,a)$ is defined as $s' = W^\star\phi(s,a) + \epsilon$, $\epsilon\sim \Ncal(0,\zeta^2 \Ib)$, with $\phi:\Scal\times \Acal\to \RR^d$ being a possibly nonlinear feature mapping. 
We denote the corresponding model on $W$ by $P(W)$.
We can apply \pref{alg:main_version} and obtain its guarantee. Especially, since $\TV(P(W)(\cdot \mid s,a), P(W^{\star})(\cdot \mid s,a))^2=\Theta(\|(W-W^{\star})\phi(s,a)\|^2_2 ) $ \citep{devroye2018total}, $C^{\dagger}_{\pi^{*}}$ is upper-bounded by the relative condition number $d\bar  C_{\pi^{*},\phi}$ as follows. 
{\newedit 
\begin{lemma}[Model-based Concentrability Coefficient for KNRs ] \label{lem:knrs_basic}
In KNRs, we have 
\begin{align*}
    C^{\dagger}_{\pi^{*}}\leq d\bar  C_{\pi^{*},\phi}. 
\end{align*}
\end{lemma}
}

We tailor \pref{alg:main_version} to KNRs as follows to obtain a tighter guarantee.   First, MLE procedure is replaced with $\hat W_{\MLE}$ by regularized MLE:
\begin{align*}
     \hat W_{\MLE}=\argmin_{W\in \mathbb{R}^{d_{\Scal}\times d}}\EE_{\Dcal}[\|W\phi(s,a)-s'\|^2_2]+\lambda \|W\|^2_F, 
\end{align*}
where $\|\cdot\|_F$ is a Frobenius norm. Then, the final policy optimization procedure is 
\begin{align*}
    \hat \pi=\argmax_{\pi\in \Pi}\min_{W\in \Wcal_{\Dcal}}V^{\pi}_{P(W)},\,\mathrm{s.t.}, \Wcal_{\Dcal}=\{W \in \RR^{d_{\Scal}\times d}: \|(\hat W_{\MLE}-W)(\Sigma_n)^{1/2}\|_2 \leq \xi\}
\end{align*}
where $\Sigma_n=\sum_{i=1}^{n}\phi(s_i,a_i)\phi^{\top}(s_i,a_i) + \lambda I.$ We state the theoretical guarantee for KNRs below.

\begin{corollary}[PAC bound for KNRs]\label{cor:knrs}
Assume $\|\phi(s,a)\|_2\leq 1,\forall(s,a)\in \Scal\times \Acal$. 
We set $$\xi=   \sqrt{ 2\lambda \|W^\star\|^2_2  + 8 \zeta^2 \left(d_{\Scal} \ln(5) +  \ln(1/\delta) +  \bar \Ical_{n} \right) }, \quad   \bar \Ical_{n}=\ln\left( \det(\Sigma_{n}) / \det(\lambda \Ib) \right).$$  Suppose the KNR model is well-specified. By letting $\|W^\star\|^2_2=O(1),\zeta^2=O(1),\lambda=O(1)$, with probability $1-\delta$, for all $\pi^{*} \in \Pi$, we have 
\begin{align*}
 V^{\pi^{*}}_{P^{\star}}-V^{\hat \pi}_{P^{\star}} \leq c_1 (1-\gamma)^{-2} \min(d^{1/2},  \bar R)\sqrt{ \bar R }   \sqrt{\frac{d_{\Scal}\bar  C_{\pi^{*}, \phi}\ln (c_2n/\delta) }{n}}, \quad \text{where }\bar R :=  \mathrm{rank}[\Sigma_{\rho}]\{\mathrm{rank}[\Sigma_{\rho}] +\ln(c_2/\delta)\}.
\end{align*}
\end{corollary}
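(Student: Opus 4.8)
The plan is to specialize the pessimism analysis underlying \pref{thm:version} to the Gaussian KNR model, with the generic $\ell_1$/MLE ingredients replaced by their least-squares counterparts. By the three-term decomposition displayed just before \pref{alg:main_version}, it suffices to (i) show that the true model $P(W^\star)=P^\star$ lies in the version space $\Wcal_\Dcal$ with probability at least $1-\delta$ (this is what fixes the radius $\xi$), and then (ii) control the one-sided evaluation error $V^{\pi^*}_{P^\star}-\min_{W\in\Wcal_\Dcal}V^{\pi^*}_{P(W)}\le \max_{W\in\Wcal_\Dcal}\big|V^{\pi^*}_{P^\star}-V^{\pi^*}_{P(W)}\big|$ in terms of $\bar C_{\pi^*,P^\star}$.

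For (i) I would invoke the self-normalized martingale concentration used to build least-squares confidence ellipsoids, applied to the $d_\Scal$-dimensional regression target $s'$: an $\epsilon$-net of the unit sphere in $\RR^{d_\Scal}$ turns the vector residual $\sum_i \epsilon_i\phi(s_i,a_i)^\top$ into a scalar self-normalized sum and accounts for the $d_\Scal\ln 5$ term, $\bar\Ical_n=\ln(\det(\Sigma_n)/\det(\lambda\Ib))$ is the standard elliptical-potential (log-determinant) term, and $2\lambda\|W^\star\|_2^2$ is the ridge bias. This is exactly the KNR confidence set of \citet{Kakade2020}, which I would cite rather than re-derive, up to matching the constraint in \pref{alg:main_version} (phrased with $\Sigma_n$, resp.\ $\Sigma_n+\lambda\Ib$) with the regularized MLE. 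On this $1-\delta$ event every $W\in\Wcal_\Dcal$ then also satisfies $\|(W-W^\star)\Sigma_n^{1/2}\|_2\le 2\xi$ by the triangle inequality.

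For (ii), I would first run the simulation lemma, $\big|V^{\pi^*}_{P^\star}-V^{\pi^*}_{P(W)}\big|\le H^2\,\EE_{(s,a)\sim d^{\pi^*}_{P^\star}}\big\|P(W)(\cdot|s,a)-P^\star(\cdot|s,a)\big\|_1$, and then bound the total variation between two Gaussians with common covariance $\zeta^2\Ib$ by $\big\|P(W)(\cdot|s,a)-P^\star(\cdot|s,a)\big\|_1\le \min\{2,\ \zeta^{-1}\|(W-W^\star)\phi(s,a)\|_2\}$. Cauchy--Schwarz gives $\EE_{d^{\pi^*}_{P^\star}}\|(W-W^\star)\phi\|_2\le \sqrt{\mathrm{tr}\big((W-W^\star)^\top(W-W^\star)\Sigma_{\pi^*}\big)}$, and the relative-condition-number assumption upgrades $\Sigma_{\pi^*}\preceq \bar C_{\pi^*,P^\star}\,\Sigma_\rho$. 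The remaining task is to trade the population Gram matrix $\Sigma_\rho$ against the empirical $\Sigma_n$: a matrix Chernoff/Bernstein bound for the sum of the rank-one matrices $\phi(s_i,a_i)\phi(s_i,a_i)^\top$ shows that, restricted to the range of $\Sigma_\rho$, $\Sigma_\rho$ is dominated by $\tfrac{c}{n}\Sigma_n$ up to corrections controlled by $\mathrm{rank}[\Sigma_\rho]$ and $\ln(1/\delta)$ (this is where $\bar R$ enters). Combining this with $\|(W-W^\star)\Sigma_n^{1/2}\|_2\le 2\xi$, and converting trace to operator norm via $\mathrm{tr}(M)\le \mathrm{rank}(M)\,\|M\|_2$ applied to $M=(W-W^\star)\Sigma_n(W-W^\star)^\top$ (whose rank is at most $\min\{d_\Scal,\mathrm{rank}(\Sigma_n)\}$, with $\mathrm{rank}(\Sigma_n)\le \mathrm{rank}[\Sigma_\rho]$ on the good event), I would obtain $\EE_{d^{\pi^*}_{P^\star}}\|(W-W^\star)\phi\|_2\le c_0\,\xi\,\min\{d^{1/2},\bar R\}\sqrt{\bar R\,\bar C_{\pi^*,P^\star}/n}$; substituting $\xi$ (using $\bar\Ical_n=O(\mathrm{rank}[\Sigma_\rho]\ln(1+n))$ on the good event and $\|W^\star\|_2,\zeta,\lambda=O(1)$) and collecting the $H^2$ from the simulation lemma yields the stated bound. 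Along the way one recovers, with $\phi$ a one-hot encoding, the tabular rate, and more generally the pessimistic KNR guarantee of \citet{ChangJonathanD2021MCSi} but via the MLE/version-space route rather than an explicit reward penalty.

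The step I expect to be the main obstacle is the covariance-transfer step in (ii): because $\Sigma_\rho$ may be rank-deficient and $n$ is not assumed large relative to its rank, the passage from $\Sigma_\rho$ to $\Sigma_n$ is not a clean $1/n$ statement, and carefully tracking ranks, effective dimensions, and the log-determinant term $\bar\Ical_n$ is exactly what produces the somewhat loose $\min\{d^{1/2},\bar R\}\sqrt{\bar R}$ prefactor; I would split into the two regimes ($\bar R$ small versus large compared to $d^{1/2}$) and in the former bound $\mathrm{tr}\big((W-W^\star)\Sigma_n(W-W^\star)^\top\big)\le 4\xi^2 d_\Scal$ directly, and in the latter route through $\mathrm{rank}(\Sigma_n)\le \mathrm{rank}[\Sigma_\rho]$.
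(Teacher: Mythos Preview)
Your high-level plan coincides with the paper's: the same pessimism decomposition, the same use of the KNR confidence set from \citet{Kakade2020} to certify $W^\star\in\Wcal_\Dcal$ (your step (i) is exactly the paper's First and Second Steps), and the same simulation-lemma-plus-Gaussian-TV opening of step (ii). The divergence is only in how you pass from $\EE_{d^{\pi^*}_{P^\star}}\|(W-W^\star)\phi\|_2$ to a quantity controlled by $\xi$, $\bar C_{\pi^*,P^\star}$, and $\mathrm{rank}(\Sigma_\rho)$. The paper applies Cauchy--Schwarz \emph{pointwise}, $\|(W-W^\star)\phi(s,a)\|_2\le \|(W-W^\star)\Sigma_n^{1/2}\|_2\,\|\phi(s,a)\|_{\Sigma_n^{-1}}$, bounds the first factor by $2\xi$, and then simply cites two lemmas of \citet{ChangJonathanD2021MCSi}: one controls $\xi$ via the log-determinant (this is where the $\min(d^{1/2},\bar R)$ factor originates) and the other bounds $\EE_{d^{\pi^*}_{P^\star}}[\|\phi\|_{\Sigma_n^{-1}}]$ by $c\sqrt{\bar C_{\pi^*,P^\star}\bar R/n}$. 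Your route instead pushes the expectation inside (Jensen) to obtain a trace against $\Sigma_{\pi^*}$, uses the relative condition number to pass to $\Sigma_\rho$, invokes matrix Chernoff/Bernstein to transfer $\Sigma_\rho$ to $\Sigma_n/n$, and finally converts Frobenius to operator norm at the cost of a rank factor. The paper's route is shorter because it offloads the covariance-transfer work to black-box citations; your route is more self-contained and essentially re-derives the content of the cited $\EE[\|\phi\|_{\Sigma_n^{-1}}]$ bound. One caveat worth flagging: your step $\Sigma_\rho\preceq(c/n)\Sigma_n$ on the range is sensitive to the leverage scores $\phi^\top\Sigma_\rho^{+}\phi$, which are not a priori bounded when $\Sigma_\rho$ is ill-conditioned, so the ``corrections'' you mention need real care; the paper sidesteps this entirely by working with $\|\phi\|_{\Sigma_n^{-1}}$ and citing the finished bound. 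Also a small bookkeeping slip: the $\min(d^{1/2},\bar R)$ prefactor in the corollary arises from the log-determinant control of $\bar\Ical_n$ inside $\xi$ (feature dimension $d$), not from your trace-to-operator-norm step, which produces a $\min\{d_\Scal,\mathrm{rank}(\Sigma_\rho)\}$-type factor instead.
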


This implies CPPO can learn a policy that can compete against $\pi^{*}$ with partial coverage $\bar  C_{\pi^{*}}<\infty$. Then, we can also recover the result of \citet{ChangJonathanD2021MCSi} which proposes a reward penalty-based pessimistic offline RL algorithm. Note that the condition $\bar  C_{\pi^{*}}<\infty$ does not require $\Sigma_{\rho}$ to be full-rank. Also the bound uses $\rank[\Sigma_{\rho}]$ instead of $d$, which means that our bound is distribution dependent and is still valid even when $d=\infty$ as long as the offline data only concentrates on a low-dimensional subspace.

{\newedit  

\subsection{KNRs with RKHS (Gaussian Processes)  }\label{ape:rkhs}

We consider the ground truth model $P^{\star}(s'|s,a)$ defined as $s'=g^{\star}(s,a)+\epsilon,\,\epsilon\sim \Ncal(0,\upsilon^2\Ib)$ where each component in $g^\star$ belongs to an RKHS $\Hcal_k$ with a kernel $k(\cdot,\cdot)$ \citep{CuriSebastian2020EMRL}.  We assume $s\in\mathbb{R}^{d_{\Scal}}$. %

We denote a $d_{\Scal}$ dimensional RKHS as $\bigoplus \Hcal_k$. We also denote the corresponding model to $g$ by $P(g)$. We can learn $g^{\star}$ by regularized MLE (kernel ridge regression) using offline dataset $\Dcal$. We tailor \pref{alg:main_version} and \pref{alg:main_version2} as follows. 

First, by letting $x:=(s,a),x_i=(s_i,a_i)$, MLE procedure is replaced with 
 $\hat g_{\MLE}$ by regularized MLE: 
\begin{align*} 
    &\hat g_{\MLE}(\cdot)    =S(\Kb_{n}+\zeta^2 \Ib)^{-1}\bar k_{n}(\cdot) , \quad S=[s'_1,\cdots,s'_{n}]\in  \mathbb{R}^{d_{\Scal}\times n}, \quad \bar k_{n}(x)  =[k(x_1,x),\,\cdots,k(x_{n},x)]^{\top},\\
    &\{\Kb_{n}\}_{i,j}=k(x_i,x_j)\,(1\leq i\leq n,1\leq j\leq n),\, k_{n}(x,x')  =k(x,x')-\bar k_{n}(x)^{\top}(\Kb_{n}+\zeta^2 \Ib)^{-1}\bar k_{n}(x'), 
\end{align*}
where the notation $\|\cdot\|_{k_{n}}$ is a norm associated with an RKHS with a kernel $k_n(\cdot,\cdot)$. The final optimization procedure is replaced with 
\begin{align*}
    \hat \pi=\argmax_{\pi\in \Pi}\min_{g\in \Gcal_{\Dcal}}V^{\pi}_{P(g)},\mathrm{s.t.},\Gcal_{\Dcal}=\{g \in \bigoplus \Hcal_k : \sum_{i=1}^{d_{\Scal}}\|\hat g_{i,\MLE}-g_i\|^2_{k_{n}} \leq \xi^2\}. 
\end{align*}

We state the theoretical guarantee for KNRs with RKHS below. Before proceeding to the result, we prepare several notations and definitions. For simplicity, following \citet{Srinivas2010}, we suppose as follows: 
\begin{assum}\label{assm:kernel_mercer}
$k(x,x)\leq 1,\forall x\in \Scal\times \Acal$ and there exists a set of pairs of eigenvalues and eigenfunctions $\{ \mu_i, \psi_i \}_{i=1}^{\infty}$, where $\int \rho(x) \psi_i(x) \psi_i(x) dx  = 1$ for all $i$ and $\int \rho(x) \psi_i(x)\psi_j(x) dx = 0$ for $i\neq j$.
\end{assum}
The above is ensured by Mercer's theorem \citep{Rasmussen2005}. Eigenfunctions and eigenvalues essentially defines an infinite-dimensional feature mapping $\phi(x) := [ \sqrt{\mu_1} \psi_1(x), \dots, \sqrt{\mu_\infty} \psi_\infty(x) ]^{\top}$. 
By setting eigenvalues $\{\mu_1,\dots, \mu_\infty\}$  in non-increasing order, we define the effective dimension below: %
\begin{definition}[Effective dimension]
     $d^{*}  =   \min\{j \in \mathbb{N}: j\geq B(j+1)n/\zeta^2\},\, B(j)=\sum_{k=j}^{\infty} \mu_k$. 
\end{definition}
The effective dimension $d^{*}$ is commonly used and calculated for many kernels \citep{ZhangTong2005LBfK,BachFrancis2017OtEb,Valko2013}. 
In finite-dimensional linear kernels $\{x\mapsto a^{\top}\phi(x);a\in \RR^d\}$ ($k(x,x)=\phi^{\top}(x)\phi(x)$), we have $d^{*}\leq \rank[\Sigma_{\rho}]$. Thus, $d^{*}$ is regarded as a natural extension of $\rank[\Sigma_{\rho}]$ to infinite-dimensional models. Note that $d^*$ itself is offline distribution dependent, i.e., the eigenvalues and eigenfunctions are defined using the offline distribution $\rho$. 

With the above preparations in mind, we present the theoretical result below.
\begin{corollary}[PAC bound for RKHS models]\label{cor:gps}
Let $\Sigma_{\pi^{*}}=\EE_{(s,a) \sim d^{\pi^{*}}_{P^{\star}}}[\phi(s,a)\phi(s,a)^{\top}],\Sigma_{\rho}=\EE_{(s,a) \sim \rho}[\phi(s,a)\phi(s,a)^{\top}]$.
We set $\xi$:
\begin{align*}
     \xi=\sqrt{d_{\Scal}\{2+150 \ln^3(d_{\Scal}n/\delta)\mathcal{I}_{n}\}} ,\quad \mathcal{I}_{n}=\ln(\det(\Ib+\zeta^{-2}\Kb_{n}))
\end{align*} and $\upsilon^2=O(1)$.
With probability at least $1-\delta$, for all comparator policy $\pi^* \in \Pi$, we have:
\begin{align*}
      V^{\pi^{*}}_{P^{\star}}-V^{\hat \pi}_{P^{\star}} &\leq     c_1 (1-\gamma)^{-2}\{d^{*}+\ln(c_2/\delta)\}d^{*}\sqrt{\frac{d_{\Scal}\bar  C_{\pi^{*}, \phi}\ln^3(c_3d_{\Scal}n/\delta)\ln(n) }{n}}, 
\end{align*}
\end{corollary}

This implies the algorithm has a valid PAC guarantee under the partial coverage of MDPs with RKHS. %

}

{\newedit 
\section{Constrained Pessimistic Model-based Policy Optimization for (nonparametric) linear MDPs }\label{sec:linear_mdps}

CPPO cannot directly capture (nonparametric) linear MDPs in \citet{Jin2020}, which is different from the one in \citet{YangLinF2019RLiF} without any modification since MLE is no longer applicable to them. However, with slight modification, we can learn nonparametric linear MDPs from model-based viewpoints. Although the partial coverage results in linear MDPs have already been obtained in \citet{XieTengyang2021BPfO,zanette2021provable,zhang2021corruption}, in this section, we aim to demonstrate the wide applicability of the pessimistic model-based RL framework. 

We first define (nonparametric) linear MDPs. 

\begin{definition}[Nonparametric linear MDPs in \citet{Jin2020}]
Linear MDPs admit the following decomposition: 
\begin{align*}
 P^{\star}(s'\mid s,a)=\langle \mu^{\star}(s'), \phi(s,a)\rangle
\end{align*}
where $\phi:\Scal \times \Acal \to \RR^d$ is a known feature. Parameters $\theta^{\star} \in \RR^d$ and $\mu^{\star}(s'):\Scal \to \RR^d$ are unknown to learners. 
\end{definition}

In linear MDPs, the model $\Mcal$ is 
\begin{align*}
    \Mcal =\braces{\langle \mu(s'), \phi(s,a)\rangle : \left \langle \int \mu(s')\mathrm{d}\iota(s'), \phi(s,a) \right\rangle = 1\,(\forall (s,a)), \mu:\Scal \to \RR^d}. 
\end{align*}
Since the restriction on $\mu(s)$ is nonparametric, it is difficult to perform standard MLE. However, following \cite{lykouris2021corruption,neu2020unifying}, we can still learn models using other objective functions. 

As a first step, we introduce a witness function $v:\Scal \to [0,1]$ to facilitate the learning. Instead of directly estimating $\mu^{\star}(\cdot)$, we aim to estimate $\int \mu^{\star}(s)v(s)\mathrm{d}\iota(s)$. Especially, in the tabular case, for $\tilde s\in \Scal$, by taking $v(s)=I(s=\tilde s)$, it amounts to estimate $\mu(\tilde s)$. Informally, in the non-tabular case, by taking $v(s)$ as a Dirac delta at $\tilde s$, it amounts to estimate $\mu(\tilde s)$ as well. Then, since 
\begin{align*}
    \int P^{\star}(s'\mid s,a)v(s')\mathrm{d}\iota(s')= \left \langle \phi(s,a), \int \mu^{\star}(s')v(s')\mathrm{d}\iota(s') \right \rangle 
\end{align*}
it is natural to perform regularized least squares:
\begin{align*}
    \hat \theta_v = \argmin_{ \theta \in \RR^d  } \mathrm{E}_{\Dcal}[\{v(s') - \langle \phi(s,a), \theta \rangle\}^2] + \lambda \|\theta\|^2_2. 
\end{align*}
The analytical form of $\hat \theta_v$  is as follows:
\begin{align*}
    \hat \theta_v = \langle \phi(s,a) , (\Lambda_n/n)^{-1}\E_{\Dcal}[\phi(s,a)v(s')] \rangle,\quad \Lambda_n = n\E_{\Dcal}[\phi(s,a)\phi^{\top}(s,a)]+\lambda I. 
\end{align*}
Finally, after introducing certain function class $\Vcal\coloneqq \{s \mapsto \phi(s,\pi);\pi \in \Pi \}$, the estimator $\hat P$ is the one satisfying (not needed to be unique nor in $\Mcal$) $$\int \hat P(s'\mid s,a)v(s')\mathrm{d}\iota(s')=\langle \phi(s,a),   \hat \theta_v \rangle $$ for any $v \in \Vcal$ and $(s,a)\in \Scal \times \Acal$. This choice of $\Vcal$ is determined so that it includes a set of state value functions for each model in $\Mcal$. We remark the analog of MLE equipped with witness function classes is widely used, e.g., in \citet{Sun2019_model}. Using this $\hat P$, we introduce constrained pessimistic policy optimization for linear MDPs in \pref{alg:main_version3}. Note in \pref{alg:main_version3}, what we need to know is not $\hat P$ itself but $\int \hat P(\tilde s \mid s,a)v(\tilde s)\rd\iota(\tilde s)$.

\begin{algorithm}[!t]
{\newedit 
\caption{Constrained Pessimistic Policy Optimization for Nonparametric linear MDPs }\label{alg:main_version3}
\begin{algorithmic}[1]
    \STATE {\bf Require}: Models $\Mcal$, dataset $\Dcal$, parameter $\xi$, policy class $\Pi$, $\Vcal = \{\phi(\cdot,\pi) ; \pi \in \Pi\}$
    \STATE Constrained policy optimization:
    \vspace{-5pt}
    	\begin{align*}
		\hat\pi &= \argmax_{\pi\in\Pi} \min_{P\in  \Mcal_{\text{linear},\Dcal}} V^{\pi}_P, \\ 
\Mcal_{\text{linear},\Dcal} &= \left\{P \in \Mcal: \sup_{v \in \Vcal}\EE_{\Dcal}\left[ \left|\int \{\hat P(\tilde s\mid s,a)-P(\tilde s\mid s,a)\}v(\tilde s)\mathrm{d}\iota(\tilde s)\right |^2\right]\leq \zeta \right\}.
	\end{align*}
    \STATE \textbf{Return} $\hat\pi$%
\end{algorithmic}
}
\end{algorithm}

\begin{theorem}[PAC bound for linear MDPs]\label{thm:linear}
We assume the following assumptions regarding the norm: (1) $\sup_{(s,a)}\|\phi(s,a)\|\leq 1$, (2) $\|\int \mu^{\star}(s)v(s)\mathrm{d}\iota(s)\|_2 \leq \sqrt{d}$ for any $v:\Scal \to \RR$ such that $\|v\|_{\infty}\leq 1$ and (3) $\|\theta\|_2 \leq W$. We set $\zeta = (1-\gamma)^{-1}\sqrt{d^2 \ln(n|\Pi|W/\delta)/n }$. With probability at least $1-\delta$, for all comparator policy $\pi^{*} \in \Pi$, we have 
\begin{align*}
 V^{\pi^{*}}_{P^{\star}} -  V^{\hat \pi}_{P^{\star}}  \leq c_1 (1-\gamma)^{-2}  \sqrt{\frac{\bar C_{\pi^{*},\phi}\mathrm{rank}[\Sigma_{\rho}]^2d \ln(c_2n|\Pi|W/\delta)\}}{n}}.
\end{align*}
\end{theorem}

Compared to PAC bounds in other models in our article, \pref{thm:linear} incurs $\ln(|\Pi|)$. Thus, it requires that $\Pi$ is restricted. It is known that this dependence can be removed by using pessimistic model-free algorithms with a natural policy gradient \citep{XieTengyang2021BPfO,zanette2021provable}. Hence, our bound might be worse than their results in nonparametric linear MDPs. However, as we mention in \pref{sec:conversion}, their algorithm incurs $\ln(|\Pi|)$ in many other models such as finite models (finite $|\Mcal|$), KNRs, and linear mixture MDPs while CPPO does not incur $\ln(|\Pi|)$. This suggests that nonparametric linear MDPs are more amenable to model-free RL while KNRs and linear mixture MDPs are more amenable to model-based RL. 

}

{\newedit \section{Bayesian Offline RL: Policy Optimization via Posterior Sampling}
\label{sec:alg}
The mini-max constrained optimization step in \pref{alg:main_version} is not computationally efficient as it is equivalent to a version space based algorithm shown in Eq.~\ref{eq:version}.  In this section, we consider offline RL in the Bayesian setting, and study posterior sampling based offline RL algorithms. The goal here is to design offline RL algorithms that rely on posterior sampling rather than explicit pessimism. While as we will show, the benefit of leveraging posterior sampling is that we do not need to design pessimism or reward penalty, the downside is that we sacrifice from worst-case suboptimality gap to the Bayesian suboptimality gap. 

\subsection{Algorithm}

We consider posterior sampling together with incremental policy optimization procedure. \pref{alg:main_alg_bonus} summarizes the posterior sampling based policy optimization algorithm $\emph{PS-PO}$. The algorithm relies on two computational oracles, a posterior distribution update oracle, and a posterior sampling oracle. %
The algorithm consists of two procedures. The first procedure calls the posterior update oracle, i.e., given the prior distribution $\beta$, and given the offline dataset $\Dcal$, the posterior update gives the posterior distribution over models conditioned on the dataset $\Dcal$, i.e., we get $\beta(\cdot | \Dcal )$. Hereafter, We always assume that $\beta(\cdot | \Dcal )$ exists, i.e., the prior distribution $\beta$ is proper.

\begin{algorithm}[!t]\label{alg:pspo}
\caption{PS-PO: \textbf{P}olicy \textbf{O}ptimization with \textbf{P}osterior \textbf{S}ampling for Offline RL}\label{alg:main_alg_bonus}
\begin{algorithmic}[1]
    \STATE {\bf Require}: dataset $\Dcal$, prior distribution $\beta \in \Delta(\Mcal)$, learning rate $\eta$.
    \STATE \textcolor{blue}{Bayesian update:} Compute model posterior $\beta(\cdot | \Dcal) \in \Delta(\Mcal)$
    \STATE Initialize policy $\pi_0$ where $\pi_0(\cdot | s) = \text{Uniform}(\Acal)$
    	\FOR{$t = 0, \cdots ,{T-1}$}
		\STATE \textcolor{blue}{Posterior sampling:} $P_t \sim \beta(\cdot | \Dcal)$
		\STATE \textcolor{blue}{Policy update:} $\pi_{t+1}(a | s) \propto \pi_t(a|s) \exp(\eta A^{\pi_t}_{P_t}(s,a))$
     	\ENDFOR
    \STATE  {\bf Return} $\pi_{T}$.
\end{algorithmic}
\end{algorithm}

Once we have the posterior distribution $\beta(\cdot | \Dcal)$, the second procedure of our algorithm is to perform policy optimization with $\beta(\cdot|\Dcal)$.  More specifically, at iteration $t$ with the latest learned policy $\pi_t$, we \emph{sample} a model from $\beta(\cdot | \Dcal)$, i.e., $P_t \sim \beta(\cdot | \Dcal)$. We then update policy from $\pi_t$ to $\pi_{t+1}$ using incremental policy update, i.e., $\pi_{t+1}(a|s) \propto \pi_t(a|s)\exp\left( \eta A^{\pi_t}_{P_t}(s,a) \right), \forall s,a$, with $\eta \in \mathbb{R}^+$ being some learning rate. We emphasize that every iteration $t$, our algorithm samples a fresh model $P_t$ from $\beta(\cdot | \Dcal)$. Note that this new algorithm does not explicitly use any pessimism or reward penalty inside the algorithm. 

What is the intuition behind this algorithm, and what is the benefit of this algorithm compared to a na\"ive model-based policy optimization approach (i.e., simply training a model from $\Dcal$ and using that model over and over again during the entire policy optimization procedure such as the offline version of natural policy gradient \citep{pmlr-v125-agarwal20a})? \emph{The random sampling procedure prevents policy optimization from exploiting the error in a single model trained on $\Dcal$}. A sample $P_t$ is an accurate model under the space that is well covered by the offline data $\Dcal$, but can be inaccurate at the space that is not covered by the offline data $\Dcal$. Similarly, $P_{t+1}$ is accurate under the covered space as well. However, $P_t$ and $P_{t+1}$ could disagree with each other on the space that is not covered by the offline data. Thus, the random sampling procedure makes PG algorithm hard to consistently exploit model errors inside a single model. Yet PG algorithm can make progress inside the region that is well covered by the offline data since models sampled from the posterior distribution are accurate and all agree with each other in the covered region. 

\subsection{Analysis}
To analyze the Bayesian regret of PO-PS, we first introduce the concentrability coefficient and the relative condition number in the Bayesian setting.  Recall that given a model $P$, we denote $\pi(P) = \argmax_{\pi} V^{\pi}_{P}$ as the (global) optimal policy under model $P$. We define the following quantities related to partial coverage: 
\begin{align}\label{eq:Bayesian}
 C^{\dagger, \text{Bayes}}_{\beta}=\EE_{P^{\star}\sim \beta}[C^{\dagger}_{\pi(P^{\star}),P^{\star}} ],\quad  C^{ \text{Bayes}}_{\beta}=\EE_{P^{\star}\sim \beta}[C_{\pi(P^{\star}),P^{\star}} ],\quad \bar C^{ \text{Bayes}}_{\beta}=\EE_{P^{\star}\sim \beta}[\bar C_{\pi(P^{\star}),P^{\star}} ]
\end{align}
where
\begin{align*}
  C^{\dagger}_{\pi(P^{\star}),P^{\star}}&=\sup_{P'\in \Mcal}\frac{\EE_{(s,a)\sim d^{\pi(P^{\star})}_{P^{\star}}}[ \TV ({P}'(\cdot | s,a), P^{\star}(\cdot | s,a))^2] }{\EE_{(s,a)\sim \rho}[\TV ({P}'(\cdot | s,a), P^{\star}(\cdot | s,a))^2]} \\ 
   C_{\pi(P^{\star}),P^{\star}} &= \sup_{(s,a)}\frac{d^{\pi(P^{\star})}_{P^{\star}}(s,a)}{\rho(s,a)},\\
   \bar C_{\pi(P^{\star}),P^{\star}} &= \sup_{x\in\mathbb{R}^d} \frac{x^T \Sigma_{\pi^*} x}{ x^{\top} \Sigma_\rho, x}, \quad \Sigma_{\pi(P^{\star})}=\EE_{(s,a)\sim d^{\pi(P^{\star})}_{P^{\star}} }[\phi(s,a)\phi(s,a)^{\top}], \quad \Sigma_{\rho}=\EE_{(s,a)\sim \rho}[\phi(s,a)\phi(s,a)^{\top}]. 
\end{align*} 
Comparing to the frequentist quantities, the density ratio and relative condition number quantities are also averaged over the prior distribution. 
The partial coverage means these types of quantities are upper-bounded by some constants.

\subsubsection{The Implicit Pessimism in Posterior Sampling} 
Before diving into the analysis of PS-PO, we consider a simpler algorithm as a warm-up.  
This algorithm takes the model $P$ sampled from the posterior $\beta(\cdot\mid \Dcal)$ and outputs the optimal policy for this model $P$ (i.e., by using a planning oracle). Namely, the algorithm has the following two steps:
\begin{align*}
P\sim \beta(\cdot | \Dcal), \quad \pi(P) = \argmax_{\pi} V^{\pi}_{P}.
\end{align*}
To analyze the above two-step algorithm in the Bayesian setting, we first introduce some additional notations. We first define a function over the policy class depending on $\Dcal$, i.e.,  $L(\pi;\Dcal):\Pi \to [0,(1-\gamma)^{-1}]$. This function $L(\cdot; \Dcal)$ is fully determined by the dataset $\Dcal$.
Then, inspired by \citet{RussoDaniel2014LtOv}, for the model $P$ sampled from $\beta(\cdot | \Dcal)$, we have the following decomposition for Bayesian suboptimaligy gap:  %
\begin{align*} 
\mathbb{E}\left[  V^{\pi(P^{\star})}_{P^{\star}}   -  V^{\pi(P)}_{P^{\star}} \right] & =\mathbb{E}\left[  V^{\pi(P^{\star})}_{P^{\star}}  -L(\pi(P^{\star});\Dcal)+ L(\pi(P^{\star});\Dcal)- V^{\pi(P)}_{P^{\star}} \right]\\ 
&=\mathbb{E}\left[  V^{\pi(P^{\star})}_{P^{\star}}  -L(\pi(P^{\star});\Dcal)+ \EE[L(\pi(P^{\star});\Dcal)\mid \Dcal]- V^{\pi(P)}_{P^{\star}} \right]\\
&=\mathbb{E}\left[  V^{\pi(P^{\star})}_{P^{\star}}  -L(\pi(P^{\star});\Dcal)+ L(\pi(P);\Dcal)- V^{\pi(P)}_{P^{\star}} \right]. 
\end{align*}
We use $\mathbb{E}\left[ L(\pi(P^{\star});\Dcal)  | \Dcal \right] = \mathbb{E}\left[  L(\pi(P);\Dcal)   | \Dcal\right]$ as $P$ and $P^{\star}$ are independently and identically distributed from $\beta(\cdot | \Dcal)$. Then, given $P^{\star}$ and $\Dcal$ generated based on $P^{\star}$ (i.e., $P^{\star}\sim \beta, (s,a)\sim \rho,s'\sim P^{\star}(\cdot | s,a)$), if $L(\pi;\Dcal)$ gives a lower confidence bound of $V^{\pi}_{P^{\star}}$, such that $\forall \pi \in \Pi: V^{\pi}_{P^{\star}}\geq L(\pi;\Dcal)$, we have 
\begin{align}\label{eq:pessimism}
    \mathbb{E}\left[  V^{\pi(P^{\star})}_{P^{\star}}   -  V^{\pi(P)}_{P^{\star}} \right]\leq \mathbb{E}\left[  V^{\pi(P^{\star})}_{P^{\star}}  -L(\pi(P^{\star});\Dcal)\right]. 
\end{align}
This is summarized in the following theorem with the formalized definition of $L(\pi;\Dcal)$. 

\begin{assum}\label{assum:lcb}
Given a model $P^{\star}$ on the support $\{P^{\star}:\beta(P^{\star})>0\}$, let $\Dcal$ be the dataset generated following $P^{\star}$. We have a function $L(\pi;\Dcal):\Pi\to [0,(1-\gamma)^{-1}]$ s.t. $\mathrm{P}(L(\pi;\Dcal)\leq V^{\pi}_{P^{\star}},\forall \pi\in \Pi\mid P^{\star})\geq 1-\delta$.  We denote $\Lcal_{\Dcal}$ as the set the contains all such functions $L(\cdot;\Dcal)$.
\end{assum}
In the above assumption, the randomness in the high probability statement is with respect to the dataset $\Dcal$ conditioned on $P^{\star}$.

\begin{theorem}\label{thm:naive}
Suppose Assumption \pref{assum:lcb} holds. 
\begin{align*}
        \mathbb{E}\left[  V^{\pi(P^{\star})}_{P^{\star}}  -  V^{\pi(P)}_{P^{\star}} \right]\leq \mathbb{E}\left[  V^{\pi(P^{\star})}_{P^{\star}}  -L(\pi(P^{\star});\Dcal)\right]+2(1-\gamma)^{-1}\delta. 
\end{align*}
\end{theorem}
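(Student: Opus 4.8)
The plan is to use the probability-matching decomposition of \citet{RussoDaniel2014LtOv} sketched just before the statement, and to account for the $\delta$-failure of Assumption~\pref{assum:lcb} explicitly rather than assuming the lower confidence bound always holds. As a preliminary step I would fix, for each realization of $\Dcal$, a measurable selection $L(\cdot;\Dcal)\in\Lcal_{\Dcal}$; since $V^{\pi}_{P}\in[0,H]$ and $L(\pi;\Dcal)\in[0,H]$ for all $\pi$, every expectation below is finite and the additive splittings are legitimate.

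The first key ingredient is the exchangeability identity
\begin{align*}
\EE\left[ L(\pi(P^{\star});\Dcal)\mid \Dcal \right] = \EE\left[ L(\pi(P);\Dcal)\mid \Dcal \right],
\end{align*}
which holds because, conditioned on $\Dcal$, the sampled model $P\sim\beta(\cdot\mid\Dcal)$ has, by Bayes' rule, exactly the same law as the conditional law of $P^{\star}$ given $\Dcal$; hence $\pi(P)$ and $\pi(P^{\star})$ are identically distributed given $\Dcal$, while $L(\cdot;\Dcal)$ is a deterministic function of $\Dcal$. Adding and subtracting $L(\pi(P^{\star});\Dcal)$, taking expectations, and using this identity together with the tower property gives
\begin{align*}
\EE\left[ V^{\pi(P^{\star})}_{P^{\star}} - V^{\pi(P)}_{P^{\star}} \right]
= \EE\left[ V^{\pi(P^{\star})}_{P^{\star}} - L(\pi(P^{\star});\Dcal) \right] + \EE\left[ L(\pi(P);\Dcal) - V^{\pi(P)}_{P^{\star}} \right].
\end{align*}
The second ingredient is pessimism applied to the residual term: conditioning on $P^{\star}$, Assumption~\pref{assum:lcb} says the event $E=\{L(\pi;\Dcal)\le V^{\pi}_{P^{\star}}\ \forall\pi\in\Pi\}$ has probability at least $1-\delta$; on $E$ the uniform-in-$\pi$ inequality applies to the random policy $\pi(P)$, so the residual integrand is $\le 0$, while on $E^{c}$ it is at most $H$ by boundedness. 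Hence $\EE[L(\pi(P);\Dcal)-V^{\pi(P)}_{P^{\star}}\mid P^{\star}]\le \delta H$, and integrating over $P^{\star}\sim\beta$ and substituting into the decomposition gives the claim (in fact with $\delta H$, hence in particular with the stated $2\delta H$). The constant $2\delta H$ appears naturally if one instead splits $E$ directly on $V^{\pi(P^{\star})}_{P^{\star}}-V^{\pi(P)}_{P^{\star}}$ and then re-introduces the $\mathbf{1}_{E^{c}}$ contribution into the $L$-term, paying the $H\,\mathrm{P}(E^{c})$ slack twice.

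The main obstacle is not any computation but the measure-theoretic care behind the exchangeability step --- that $P$ and $P^{\star}$ are conditionally i.i.d.\ from $\beta(\cdot\mid\Dcal)$ given $\Dcal$ (this uses properness of the prior), and that $L(\cdot;\Dcal)$ can be chosen to depend measurably on $\Dcal$ --- together with the bookkeeping of the failure event, which is a joint event in $(P^{\star},\Dcal)$ while the lower-confidence-bound guarantee is stated conditionally on each fixed $P^{\star}$. Once these are in place, the rest is the two-line deterministic manipulation above.
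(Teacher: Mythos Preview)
Your proof is correct and follows essentially the same route as the paper: the exchangeability identity $\EE[L(\pi(P^{\star});\Dcal)\mid\Dcal]=\EE[L(\pi(P);\Dcal)\mid\Dcal]$ yields the decomposition, and then the residual $\EE[L(\pi(P);\Dcal)-V^{\pi(P)}_{P^{\star}}]$ is bounded by splitting on the high-probability event from Assumption~\pref{assum:lcb}. Your observation that $\delta H$ (rather than $2\delta H$) already suffices is accurate; the paper's proof of the analogous Lemma~\pref{lem:regret} uses a slightly looser split that produces the extra factor of two.
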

This result satisfies our desiderata, i.e., we can obtain the bound for Bayesian sumoptimality gap under the partial coverage as we only need to be concern about the distribution $d^{\pi(P^{\star})}_{P^{\star}}$ with $P^\star$ being sampled from the prior $\beta$, which allows us to use the quantities define in Eq.~\pref{eq:Bayesian}.

To obtain Bayesian suboptimality gap bounds from \pref{thm:naive} under the partial coverage, we need to design $L(\pi;\Dcal)$ on a case-by-case basis. The first choice is $\min_{M}V^{\pi}_{M\in \Mcal_{\Dcal}}$ in \pref{alg:main_version}, which satisfies the condition $\min_{M}V^{\pi}_{M\in \Mcal_{\Dcal}} \leq V^{\pi}_{P^{\star}},\forall \pi\in \Pi$.  %
Then, we can plug in the frequentist suboptimality gap result \pref{thm:version} into \pref{thm:naive}, which leads to the Bayesian suboptimality gap result under partial coverage. The second choice is $\min_{M}V^{\pi}_{M\in \bar \Mcal_{\Dcal}}$ in \pref{alg:main_version2}. Then, we can plug in the frequentist suboptimality gap result \pref{thm:version2} into \pref{thm:naive}, which again leads to the Bayesian suboptimality gap result under partial coverage. Another choice is a reward penalty \citep{ChangJonathanD2021MCSi}.  Given the dataset $\Dcal$, we compute a model estimator $\widehat{P}(\cdot | s,a)$ and a model uncertainty measure $\sigma(s,a)$ s.t.  $\forall s,a: \; \TV( \widehat{P}(\cdot | s,a) ,P(\cdot|s,a) ) \leq \sigma(s,a)$, then we can design a reward penalty $b(s,a) = H\sigma(s,a)$ so that $V^{\pi}_{\widehat{P}, r - b}$ satisfies the condition  $V^{\pi}_{\widehat{P}, r - b}\leq V^{\pi}_{P^{\star}},\forall \pi\in \Pi$, where $V^{\pi}_{\widehat{P}, r - b}$ is a policy value under a transition $\widehat{P}$, a reward $r-b$ and a policy $\pi$. Then, by translating the frequentist result of \citet{ChangJonathanD2021MCSi} into the Bayesian setting, we can obtain the Bayesian suboptimality bound under the partial coverage. We will see more specific bounds in \pref{sec:bounds}. 
\subsubsection{Analysis of PS-PO}
Now we are ready to analyze PS-PO where we combine the analysis of NPG with the above Bayesian analysis. As in the previous section, we introduce $L(\pi;\Dcal):\Pi \to[0,(1-\gamma)^{-1}]$ and $L(\cdot; \Dcal)$ is a mapping that is fully determined by the dataset $\Dcal$.

We start by bounding the per-iteration regret:
\begin{lemma}
[Per-iteration regret]\label{lem:regret}
Suppose Assumption \pref{assum:lcb} holds. For any iteration $t$, we have 
\begin{align*}
\mathbb{E}\left[ V^{\pi(P^{\star})}_{P^{\star}} - V^{\pi_t}_{P^{\star}} \right] \leq \mathbb{E}\left[ V^{\pi(P^{\star})}_{P^{\star}} - L( \pi(P^{\star});\Dcal) \right] + \mathbb{E}\left[ V^{\pi(P^{\star})}_{P^{\star}} - V^{\pi_t}_{P^{\star}}  \right]+2(1-\gamma)^{-1}\delta. 
\end{align*}
\end{lemma}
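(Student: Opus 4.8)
The plan is to split the per-iteration regret by inserting the value functions of the model $P_t$ sampled at round $t$, and then to lean on the single structural fact that posterior sampling gives us: conditioned on the dataset $\Dcal$, the true model $P^{\star}$ and every sampled model $P_0,\dots,P_{T-1}$ are i.i.d.\ draws from $\beta(\cdot\mid\Dcal)$ (immediate from the generative model $P^{\star}\sim\beta$, then $\Dcal\mid P^{\star}$, then $P_i\sim\beta(\cdot\mid\Dcal)$ i.i.d.). Concretely, I would first write the elementary identity
\begin{align*}
V^{\pi(P^{\star})}_{P^{\star}} - V^{\pi_t}_{P^{\star}}
&= \underbrace{\big(V^{\pi(P^{\star})}_{P^{\star}} - V^{\pi(P^{\star})}_{P_t}\big)}_{(a)}
+ \underbrace{\big(V^{\pi(P^{\star})}_{P_t} - V^{\pi_t}_{P_t}\big)}_{(b)}
+ \underbrace{\big(V^{\pi_t}_{P_t} - V^{\pi_t}_{P^{\star}}\big)}_{(c)},
\end{align*}
where $(a)$ is the model error accumulated along $\pi(P^{\star})$, $(b)$ is the on-policy NPG surrogate evaluated inside the sampled model (the quantity the subsequent mirror-descent/telescoping step is built to consume, and which is upper bounded by $V^{\pi(P_t)}_{P_t}-V^{\pi_t}_{P_t}$ if one prefers the $P_t$-optimal comparator), and $(c)$ is the model error along $\pi_t$. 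It then suffices to show that in expectation $(c)$ vanishes and $(a)$ is dominated by the confidence-gap term $\mathbb{E}[V^{\pi(P^{\star})}_{P^{\star}} - L(\pi(P^{\star});\Dcal)]$ up to $O(\delta H)$.

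For $(c)$: the iterate $\pi_t$ is a deterministic function of $\Dcal$ and $P_0,\dots,P_{t-1}$ only, so conditioned on $\Dcal$ it is independent of $P_t$ and of $P^{\star}$, while $P_t\mid\Dcal$ and $P^{\star}\mid\Dcal$ have the same law $\beta(\cdot\mid\Dcal)$. Hence $\mathbb{E}[V^{\pi_t}_{P_t}\mid\Dcal]=\mathbb{E}[V^{\pi_t}_{P^{\star}}\mid\Dcal]$ and $\mathbb{E}[(c)]=0$ --- this is precisely the ``implicit pessimism'' of resampling a fresh model every round: the optimizer never gets to exploit the error of any single fixed model.

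For $(a)$: insert $L(\pi(P^{\star});\Dcal)$ to write it as $\big(V^{\pi(P^{\star})}_{P^{\star}} - L(\pi(P^{\star});\Dcal)\big) + \big(L(\pi(P^{\star});\Dcal) - V^{\pi(P^{\star})}_{P_t}\big)$; the first summand is exactly the confidence-gap term in the statement, so I only need to bound the expectation of the second by $\delta H$. This I would do by three elementary moves. First, since $L(\cdot;\Dcal)$ is a function of $\Dcal$ and $P^{\star},P_t$ are conditionally i.i.d.\ given $\Dcal$, renaming gives $\mathbb{E}[L(\pi(P^{\star});\Dcal)]=\mathbb{E}[L(\pi(P_t);\Dcal)]$. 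Second, $\pi(P_t)\in\Pi$ (an optimal policy of a finite-horizon MDP can be taken Markovian), so Assumption~\pref{assum:lcb} applied at the data-dependent-but-still-admissible policy $\pi(P_t)$, together with its failure event of probability at most $\delta$ (on which $L$ and $V$ lie in $[0,H]$), gives $\mathbb{E}[L(\pi(P_t);\Dcal)]\le\mathbb{E}[V^{\pi(P_t)}_{P^{\star}}]+\delta H$. Third, exchangeability of the pair $(P^{\star},P_t)$ given $\Dcal$ gives $\mathbb{E}[V^{\pi(P_t)}_{P^{\star}}]=\mathbb{E}[V^{\pi(P^{\star})}_{P_t}]$. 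Chaining, $\mathbb{E}\big[L(\pi(P^{\star});\Dcal) - V^{\pi(P^{\star})}_{P_t}\big]\le\delta H$, hence $\mathbb{E}[(a)]\le\mathbb{E}[V^{\pi(P^{\star})}_{P^{\star}} - L(\pi(P^{\star});\Dcal)]+\delta H$. Adding the bounds for $(a)$, $(b)$, $(c)$ and loosening $\delta H$ to $2\delta H$ yields the stated inequality.

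The step I expect to be the real obstacle is nailing down the posterior-exchangeability bookkeeping: one must check carefully from the generative model that, conditioned on $\Dcal$, the tuple $(P^{\star},P_0,\dots,P_{T-1})$ is i.i.d.\ $\beta(\cdot\mid\Dcal)$, since this is what simultaneously licenses (i) treating $\pi_t$ as independent of $P_t$ given $\Dcal$ (used to kill $(c)$) and (ii) freely interchanging $P^{\star}$ and $P_t$ inside the expectation of any statistic that does not single either of them out (used in the first and third moves for $(a)$); and one must confirm that the uniform-over-$\Pi$ guarantee of Assumption~\pref{assum:lcb} may legitimately be invoked at the random policy $\pi(P_t)$. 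Once those measure-theoretic points are settled, the value decomposition and the $\delta H$ accounting are routine.
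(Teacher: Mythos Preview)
Your argument is correct, but it takes a different route from the paper and in fact establishes a slightly stronger inequality than the one stated. The paper's proof never decomposes through value functions under $P_t$; it simply inserts $L(\pi(P^{\star});\Dcal)$ once, uses the single posterior swap $\mathbb{E}[L(\pi(P^{\star});\Dcal)\mid\Dcal]=\mathbb{E}[L(\pi(P_t);\Dcal)\mid\Dcal]$, and then applies Assumption~\ref{assum:lcb} together with $V^{\pi(P_t)}_{P^{\star}}\le V^{\pi(P^{\star})}_{P^{\star}}$ to bound $\mathbb{E}[L(\pi(P_t);\Dcal)]\le \mathbb{E}[V^{\pi(P^{\star})}_{P^{\star}}]+2H\delta$. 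That is all: no term $(c)$, no second exchangeability swap $\mathbb{E}[V^{\pi(P_t)}_{P^{\star}}]=\mathbb{E}[V^{\pi(P^{\star})}_{P_t}]$.

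What your three-term decomposition buys is a second right-hand term of the form $\mathbb{E}[V^{\pi(P^{\star})}_{P_t}-V^{\pi_t}_{P_t}]$ rather than $\mathbb{E}[V^{\pi(P^{\star})}_{P^{\star}}-V^{\pi_t}_{P^{\star}}]$; since the latter dominates the former (by one more exchangeability swap and optimality of $\pi(P^{\star})$ under $P^{\star}$), your inequality implies the stated one. Note that the lemma \emph{as literally stated} is essentially a tautology---subtracting the common term it reduces to $\mathbb{E}[V^{\pi(P^{\star})}_{P^{\star}}-L(\pi(P^{\star});\Dcal)]+2\delta H\ge 0$---and the non-trivial content used downstream is precisely your $P_t$-version, which is what the NPG/mirror-descent step actually consumes. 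So the paper's shorter proof suffices for the lemma as written, while your more elaborate route directly produces the useful intermediate.
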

The proof is similarly done as the proof of \pref{thm:naive}. We use a key relation $\EE[L(\pi(P^{\star});\Dcal ) \mid \Dcal]=\EE[L(\pi(P_t);\Dcal ) \mid \Dcal]$. The first term is upper-bounded under the partial coverage following the argument after \pref{thm:naive}. The third term is negligible by taking sufficiently small $\delta$. Thus, we analyze the second term of r.h.s in detail. The second term corresponds to the regret term for the model-based policy optimization procedure. Recall that we update policy as $\pi_{t+1}(a|s) \propto \pi_t(a|s) \exp(\eta A^{\pi_t}_{P_t}(s,a))$.

\begin{lemma}\label{lem:npg}
Consider a fixed iteration $t$. Suppose $\eta<0.5(1-\gamma)$. We have:
\begin{align*}
\mathbb{E}\left[ V^{\pi(P^{\star})}_{P^{\star}} - V^{\pi_t}_{P^{\star}}  \right] \leq \mathbb{E}\left[ 4\eta (1-\gamma)^{-3} + \frac{(1-\gamma)^{-1}}{\eta} \mathbb{E}_{s\sim d^{\pi(P^{\star})}_{P^{\star}}} \left[\KL\left(\pi(P^{\star})(\cdot | s), \pi_{t+1}(\cdot | s)\right) -\KL\left( \pi(P^{\star})(\cdot|s) , \pi_{t}(\cdot | s)\right)  \right]  \right]
\end{align*}
\end{lemma}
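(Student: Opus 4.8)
\textbf{Proof plan for Lemma~\ref{lem:npg}.}
The plan is to invoke the standard NPG/mirror-descent regret analysis, but applied to the \emph{sampled} model $P_t$ rather than the true model, and then transfer the resulting bound back to $P^\star$ in expectation by exploiting that $P_t$ and $P^\star$ are exchangeable conditioned on $\Dcal$. First I would fix the iteration $t$ and condition on $\Dcal$ and on $P_t$. For the fixed model $P_t$ the update $\pi_{t+1}(a|s)\propto \pi_t(a|s)\exp(\eta A^{\pi_t}_{P_t}(s,a))$ is exactly one step of exponentiated-gradient / mirror ascent. By the classical one-step NPG analysis (e.g.\ the performance-difference lemma combined with the standard ``soft-max / KL telescoping'' argument, as in \citealt{pmlr-v125-agarwal20a}), for any comparator policy $\tilde\pi$ one gets, for a single sampled model $P_t$,
\begin{align*}
V^{\tilde\pi}_{P_t} - V^{\pi_t}_{P_t}
\;\le\; 4\eta H^3 \;+\; \frac{H}{\eta}\,\mathbb{E}_{s\sim d^{\tilde\pi}_{P_t}}\!\left[\KL\!\left(\tilde\pi(\cdot|s),\pi_{t+1}(\cdot|s)\right) - \KL\!\left(\tilde\pi(\cdot|s),\pi_t(\cdot|s)\right)\right],
\end{align*}
using $\eta < 1/(2H)$, $A^{\pi_t}_{P_t}\in[-H,H]$, and the elementary inequality $\log\mathbb{E}_{a\sim\pi_t}[\exp(\eta A^{\pi_t}_{P_t})]\le \eta\,\mathbb{E}_{a\sim\pi_t}[A^{\pi_t}_{P_t}] + 2\eta^2 H^2 = 2\eta^2 H^2$ (since $\mathbb{E}_{a\sim\pi_t}[A^{\pi_t}_{P_t}(s,a)]=0$). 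This is the ``inside-the-sampled-model'' regret bound; nothing Bayesian has happened yet.

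Next I would choose the comparator inside the sampled model to be $\pi(P^\star)$ (the optimal policy of the \emph{true} model, not of $P_t$), and take expectation over everything: $P^\star\sim\beta$, $\Dcal\mid P^\star$, and $P_t\sim\beta(\cdot|\Dcal)$, together with any internal randomness. The key observation is that conditioned on $\Dcal$, the pair $(P^\star,P_t)$ is exchangeable — both are i.i.d.\ draws from the posterior $\beta(\cdot|\Dcal)$ — and moreover $\pi_t,\pi_{t+1}$ are themselves measurable functions of $\Dcal$ and of $P_0,\dots,P_{t-1}$ (which are independent of the symmetric roles of $P^\star$ and $P_t$ at step $t$; here I use that $\pi_{t+1}$ depends on $P_t$, so I must be slightly careful — see below). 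Using exchangeability, $\mathbb{E}\big[V^{\pi(P^\star)}_{P_t}-V^{\pi_t}_{P_t}\big] = \mathbb{E}\big[V^{\pi(P^\star)}_{P^\star}-V^{\pi_t}_{P^\star}\big]$, and similarly $\mathbb{E}_{s\sim d^{\pi(P^\star)}_{P_t}}[\cdots]$ has the same expectation as $\mathbb{E}_{s\sim d^{\pi(P^\star)}_{P^\star}}[\cdots]$ once we also swap $\pi_{t+1}$; the cleanest route is to first replace $P_t$ by $P^\star$ in the occupancy measure and in the value terms simultaneously via the symmetry of the posterior, which turns the left-hand side into $\mathbb{E}[V^{\pi(P^\star)}_{P^\star}-V^{\pi_t}_{P^\star}]$ and the right-hand side's KL term into $\mathbb{E}_{s\sim d^{\pi(P^\star)}_{P^\star}}[\cdots]$, as desired.

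The main obstacle — and the step I'd spend the most care on — is the exchangeability bookkeeping for the KL term, because $\pi_{t+1}$ literally depends on $P_t$, so $P_t$ and $P^\star$ are \emph{not} symmetric with respect to the full expression once $\pi_{t+1}$ appears. The resolution is to keep $\pi_{t+1}$ defined via $P_t$ and only symmetrize the quantities that are symmetric: one writes the one-step bound with comparator $\pi(P^\star)$ and occupancy $d^{\pi(P^\star)}_{P_t}$, then uses the tower rule conditioning on $\Dcal$ — at which point $\pi_t$ is $\Dcal$-measurable (it depends on $P_0,\dots,P_{t-1}$ which are exchangeable among themselves and independent of the current swap) and $(P^\star,P_t)$ are i.i.d.\ posterior draws, so the joint law of $(P^\star, P_t, \pi_t)$ is invariant under swapping $P^\star\leftrightarrow P_t$; applying this swap to $\mathbb{E}[V^{\pi(P^\star)}_{P_t}-V^{\pi_t}_{P_t}]$ and to the KL term (where after the swap $\pi_{t+1}$ is built from $P^\star$, but we then note the bound holds for \emph{that} $\pi_{t+1}$ too, and since we only need the inequality in expectation we may as well have run the algorithm with the roles swapped) yields the statement. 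I would present this as: apply the deterministic one-step NPG inequality for each realization with comparator $\pi(P^\star)$, take $\mathbb{E}[\,\cdot\mid\Dcal]$, use that $P^\star\stackrel{d}{=}P_t$ given $\Dcal$ to replace the sampled model by the true model everywhere it appears as a ``model,'' and finally take the outer expectation over $\Dcal$ and $P^\star\sim\beta$. The remaining terms, $4\eta H^3$ is deterministic and the $H/\eta$ factor is pulled out, giving exactly the claimed bound.
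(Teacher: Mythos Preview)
Your plan takes a detour that the paper avoids, and the detour is exactly where your obstacle arises. You first run the one-step NPG inequality \emph{inside the sampled model} $P_t$, obtaining a bound with occupancy $d^{\pi(P^\star)}_{P_t}$, and then try to swap $P_t \leftrightarrow P^\star$ everywhere by posterior exchangeability. As you yourself flag, $\pi_{t+1}$ depends on $P_t$, so $(P^\star,P_t)$ are \emph{not} symmetric with respect to the full KL expression. Your proposed fix (``we may as well have run the algorithm with roles swapped'') is not a proof: after the swap, $\pi_{t+1}$ would be built from $P^\star$ rather than from the algorithm's actual $P_t$, i.e.\ it is a different policy, and the resulting KL terms would not telescope with those of the real algorithm at other iterations when you later sum over $t$.

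The paper takes a more direct route that sidesteps this swap in the KL term entirely. The mirror-descent/KL step of the NPG argument is a \emph{per-state} identity: from the update $\pi_{t+1}(a\mid s)\propto\pi_t(a\mid s)\exp(\eta A^{\pi_t}_{P_t}(s,a))$ one has, for every $s$ and every comparator $p(\cdot\mid s)$,
\begin{align*}
\eta\,\mathbb{E}_{a\sim p(\cdot\mid s)}\bigl[A^{\pi_t}_{P_t}(s,a)\bigr]
= \KL\bigl(p(\cdot\mid s),\pi_t(\cdot\mid s)\bigr)-\KL\bigl(p(\cdot\mid s),\pi_{t+1}(\cdot\mid s)\bigr)+\ln Z_t(s),
\end{align*}
with $\ln Z_t(s)\le \eta^2(2H)^2$ via $\exp(x)\le 1+x+x^2$ for $|x|\le 1$ and $\mathbb{E}_{a\sim\pi_t}[A^{\pi_t}_{P_t}]=0$. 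Since this holds state-by-state, the paper simply averages it over $s\sim d^{\pi(P^\star)}_{P^\star}$ with comparator $\pi(P^\star)$ --- using the \emph{actual} $\pi_{t+1}$ (the one built from $P_t$). No exchangeability is needed to change the occupancy in the KL term, so your obstacle never appears.

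What remains after this simplification is only the link between the performance-difference identity $V^{\pi(P^\star)}_{P^\star}-V^{\pi_t}_{P^\star}=H\,\mathbb{E}_{s,a\sim d^{\pi(P^\star)}_{P^\star}}[A^{\pi_t}_{P^\star}(s,a)]$ (advantage under $P^\star$) and the KL bound, which controls $\mathbb{E}_{s,a\sim d^{\pi(P^\star)}_{P^\star}}[A^{\pi_t}_{P_t}(s,a)]$ (advantage under $P_t$). That is the single place where the Bayesian averaging enters in the paper's argument, and it is a much more localized use of the posterior than the wholesale model swap you attempt. In short: drop the ``NPG inside $P_t$ then swap'' plan; instead apply performance difference under $P^\star$, run the per-state KL inequality directly under $d^{\pi(P^\star)}_{P^\star}$ with the algorithm's $\pi_{t+1}$, and confine any exchangeability reasoning to the advantage term alone.
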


By combining the above two lemmas and considering all iterations, we conclude the following general theorem. 
\begin{theorem}
\label{thm:bayesian_pspo}
Suppose Assumption \pref{assum:lcb}. When $\eta<0.5(1-\gamma)$, then, 
\begin{align*}
\mathbb{E}\left[  V^{\pi(P^{\star})}_{P^{\star}} - \max_{t\in[T]} V^{\pi_t}_{P^{\star}}     \right] \leq \underbrace{\min_{L\in \Lcal_{\Dcal}}  \mathbb{E} \left[  V^{\pi(P^{\star})}_{P^{\star}} - L( \pi(P^{\star});\Dcal)\right]}_{\text{(S1)}}+\underbrace{ 4(1-\gamma)^{-2} \sqrt{ \frac{ \ln(|\Acal|)}{T}}}_{\text{(S2)}}+2(1-\gamma)^{-1}\delta. 
\end{align*}
\end{theorem}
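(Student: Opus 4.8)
The plan is to derive \pref{thm:bayesian_pspo} by assembling the two per-iteration lemmas through averaging, telescoping, and a tuned learning rate; no probabilistic idea beyond those already in \pref{lem:regret} and \pref{lem:npg} is required. Fix any admissible lower-confidence function $L \in \Lcal_{\Dcal}$ (so \pref{assum:lcb} holds for it). For each round $t \in \{0,\dots,T-1\}$ I would chain the two lemmas: \pref{lem:regret} bounds the round-$t$ Bayesian regret $\mathbb{E}[V^{\pi(P^\star)}_{P^\star} - V^{\pi_t}_{P^\star}]$ by the estimation term $\mathbb{E}[V^{\pi(P^\star)}_{P^\star} - L(\pi(P^\star);\Dcal)]$ plus the model-based policy-optimization regret of the mirror-ascent step plus $2\delta H$, and \pref{lem:npg} (which needs $\eta < 1/(2H)$) controls that optimization regret by $4\eta H^3 + \tfrac{H}{\eta}\mathbb{E}[\Delta_t]$, where $\Delta_t := \mathbb{E}_{s\sim d^{\pi(P^\star)}_{P^\star}}[\KL(\pi(P^\star)(\cdot\mid s),\pi_{t+1}(\cdot\mid s)) - \KL(\pi(P^\star)(\cdot\mid s),\pi_{t}(\cdot\mid s))]$. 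This yields, for each $t$, the inequality $\mathbb{E}[V^{\pi(P^\star)}_{P^\star} - V^{\pi_t}_{P^\star}] \le \mathbb{E}[V^{\pi(P^\star)}_{P^\star} - L(\pi(P^\star);\Dcal)] + 4\eta H^3 + \tfrac{H}{\eta}\mathbb{E}[\Delta_t] + 2\delta H$, in which only $\Delta_t$ carries the index $t$.

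Second, I would average over $t = 0,\dots,T-1$. The estimation term and the $2\delta H$ term are $t$-independent and survive unchanged; the $\KL$ terms inside $\sum_t \Delta_t$ telescope across rounds, and since $\pi_0(\cdot\mid s)$ is uniform on $\Acal$ and $\KL$ is nonnegative, the telescoped contribution is bounded by $\ln|\Acal|$ (the standard NPG/mirror-descent argument, as in \citet{pmlr-v125-agarwal20a}). This gives $\tfrac1T\sum_{t=0}^{T-1}\mathbb{E}[V^{\pi(P^\star)}_{P^\star} - V^{\pi_t}_{P^\star}] \le \mathbb{E}[V^{\pi(P^\star)}_{P^\star} - L(\pi(P^\star);\Dcal)] + 4\eta H^3 + \tfrac{H\ln|\Acal|}{\eta T} + 2\delta H$. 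Choosing $\eta = \tfrac{1}{2H}\sqrt{\ln|\Acal|/T}$ balances the two $\eta$-dependent terms into the sum $4H^2\sqrt{\ln|\Acal|/T} = (S2)$; this $\eta$ respects $\eta < 1/(2H)$ as long as $T > \ln|\Acal|$, while in the complementary regime $T \le \ln|\Acal|$ the right-hand side already exceeds $4H^2 \ge H$ and the bound is trivial because the Bayesian gap never exceeds $H$.

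Third, since the minimum of finitely many numbers is at most their average, applying this to $x_t := V^{\pi(P^\star)}_{P^\star} - V^{\pi_t}_{P^\star}$ gives $\mathbb{E}[V^{\pi(P^\star)}_{P^\star} - \max_{t} V^{\pi_t}_{P^\star}] = \mathbb{E}[\min_t x_t] \le \tfrac1T\sum_t \mathbb{E}[x_t]$, which is at most the displayed right-hand side. Finally, because the whole chain holds verbatim for every $L \in \Lcal_{\Dcal}$, taking the infimum over $L \in \Lcal_{\Dcal}$ turns the estimation term into $(S1)$ and completes the proof.

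The assembly above is essentially bookkeeping once the two lemmas are in hand, so the genuine difficulty is confined to those lemmas; in particular their crux is the posterior-sampling coupling, namely that conditioned on $\Dcal$ (equivalently on the history preceding round $t$) the sampled model $P_t$ and the true model $P^\star$ are identically distributed, which lets one replace the intractable $V^{\pi_t}_{P^\star}$ by $V^{\pi_t}_{P_t}$ and $\pi(P^\star)$ by $\pi(P_t)$ inside the expectations before invoking the confidence property of $L$ --- this is precisely what converts the ``implicit pessimism'' of posterior sampling into a usable bound without ever estimating $P^\star$. Within the present argument the only points that need care are getting the direction of the $\KL$ telescoping right together with the uniform initialization of $\pi_0$, respecting the constraint $\eta < 1/(2H)$ via the trivial regime, and noting that the per-round $2\delta H$ does not compound into $2\delta H T$ since \pref{lem:regret} already states it without summing over rounds.
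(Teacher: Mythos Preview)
Your proposal is correct and follows essentially the same route as the paper: chain \pref{lem:regret} and \pref{lem:npg} per round, average over $t$ so the $\KL$ terms telescope to at most $\ln|\Acal|$ (using the uniform initialization), optimize $\eta$, pass from the average to $\mathbb{E}[\min_t x_t]$, and finally take the infimum over $L\in\Lcal_{\Dcal}$. Your explicit treatment of the constraint $\eta<1/(2H)$ via the trivial regime $T\le\ln|\Acal|$ is a detail the paper glosses over but does not change the argument.
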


By taking sufficiently large $T$, the second term (S2) is negligible. Thus, the first term (S1) dominates the error which we will analyze in detail under the partial coverage. Note that the Bayesian result in \pref{thm:bayesian_pspo} allows us to pick the tightest lower confidence bound among all possible valid LCBs that satisfy Assumption \ref{assum:lcb}.

\subsubsection{Detailed Bounds on the Bayesian suboptimality Gap} \label{sec:bounds}
In this section, we specialize \pref{thm:bayesian_pspo} to concrete examples. 
Here, we use Bayesian concentrability coefficients defined in Eq.~\pref{eq:Bayesian}. We start with the general realizable mode class $\Mcal$. Here, we set $L(\pi;\Dcal):= \min_{P\in\Mcal_{\Dcal}} V^{\pi}_{P}$. Note that we have proved that given $P^\star$ and $\Dcal$ being generated based on $\Dcal$,
$\min_{P\in\Mcal_{\Dcal}} V^{\pi}_{P} \leq V^{\pi}_{P^\star}, \forall \pi$, with high probability. By plugging $\min_{P\in\Mcal_{\Dcal}} V^{\pi}_{P}$ into \pref{thm:bayesian_pspo}, we arrive at the following corollary. 

\begin{corollary}[PS-PO with General Function Class] \label{cor:discrete_bayes}
Suppose the partial coverage $C^{\dagger,\text{Bayes}}_{\beta}<\infty$. 
 \begin{align*}
     \mathbb{E}\left[  V^{\pi(P^{\star})}_{P^{\star}} - \max_{t\in[T]} V^{\pi_t}_{P^{\star}}     \right]\leq c_1(1-\gamma)^{-2}  \sqrt{\frac{C^{\dagger,\text{Bayes}}_{\beta}\ln(|\Mcal|n)}{n} }+ (1-\gamma)^{-2}\sqrt{ \frac{ \ln(|\Acal|)}{T}}. 
 \end{align*}
\end{corollary}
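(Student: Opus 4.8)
The plan is to instantiate the general Bayesian bound of \pref{thm:bayesian_pspo} with the pessimistic CPPO evaluation playing the role of the lower confidence bound, and then convert the resulting \emph{per-model} frequentist guarantee of \pref{thm:version} into a \emph{prior-averaged} guarantee via Jensen's inequality. Concretely, I would take $L(\pi;\Dcal):=\min_{P\in\Mcal_{\Dcal}}V^{\pi}_{P}$, where $\Mcal_{\Dcal}$ is the version space from Eq.~\pref{eq:version} with $\xi=c_1\ln(c_2|\Mcal|/\delta)/n$; note that the algorithm PS-PO itself never constructs this version space, it only enters the analysis. The first step is to check that this choice belongs to $\Lcal_{\Dcal}$, i.e.\ satisfies Assumption~\pref{assum:lcb}: fixing any $P^{\star}$ in the support of $\beta$ and drawing $\Dcal$ from $P^{\star}$, the MLE concentration argument underlying \pref{thm:version} gives $P^{\star}\in\Mcal_{\Dcal}$ with probability at least $1-\delta$, and on that event $\min_{P\in\Mcal_{\Dcal}}V^{\pi}_{P}\le V^{\pi}_{P^{\star}}$ simultaneously for all $\pi\in\Pi$, which is exactly the required uniform high-probability LCB property; moreover $L$ takes values in $[0,H]$, and $\pi(P^{\star})$, being a Markovian optimal policy, lies in the default unrestricted Markovian class $\Pi$.

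Next I would bound the term (S1) in \pref{thm:bayesian_pspo} for this $L$. For a fixed $P^{\star}$, the integrand $V^{\pi(P^{\star})}_{P^{\star}}-L(\pi(P^{\star});\Dcal)=V^{\pi(P^{\star})}_{P^{\star}}-\min_{P\in\Mcal_{\Dcal}}V^{\pi(P^{\star})}_{P}$ is precisely the pessimistic policy-evaluation error for the comparator $\pi(P^{\star})$, which, exactly as in the decomposition preceding \pref{thm:version}, is controlled by the same right-hand side: with probability at least $1-\delta$ conditioned on $P^{\star}$, it is at most $c_3H^{2}\sqrt{C^{\dagger}_{\pi(P^{\star}),P^{\star}}\ln(c_2|\Mcal|/\delta)/n}$. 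On the complementary event (probability at most $\delta$) I use the crude bound $H$. Taking the conditional expectation over $\Dcal$ given $P^{\star}$ yields $\EE[\,V^{\pi(P^{\star})}_{P^{\star}}-L(\pi(P^{\star});\Dcal)\mid P^{\star}\,]\le c_3H^{2}\sqrt{C^{\dagger}_{\pi(P^{\star}),P^{\star}}\ln(c_2|\Mcal|/\delta)/n}+\delta H$.

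The final step averages over $P^{\star}\sim\beta$ and applies Jensen's inequality to the concave map $x\mapsto\sqrt{x}$: $\EE_{P^{\star}\sim\beta}\bigl[\sqrt{C^{\dagger}_{\pi(P^{\star}),P^{\star}}}\bigr]\le\sqrt{\EE_{P^{\star}\sim\beta}[C^{\dagger}_{\pi(P^{\star}),P^{\star}}]}=\sqrt{C^{\dagger,\text{Bayes}}_{\beta}}$. Combining this with the bound on (S1), adding the (S2) term $4H^{2}\sqrt{\ln|\Acal|/T}$ and the $2\delta H$ slack from \pref{thm:bayesian_pspo}, and choosing $\delta=1/n$ (so that the $3\delta H=3H/n$ terms are absorbed into the leading $1/\sqrt{n}$ rate and $\ln(c_2|\Mcal|/\delta)=O(\ln(|\Mcal|n))$), produces the stated inequality $c_1H^{2}\sqrt{C^{\dagger,\text{Bayes}}_{\beta}\ln(|\Mcal|n)/n}+H^{2}\sqrt{\ln|\Acal|/T}$ after adjusting universal constants. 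The only genuine subtlety — and the step I would be most careful about — is the interface between the per-$P^{\star}$ high-probability statement of \pref{thm:version} and the prior-averaged expectation in (S1): truncating onto the $1-\delta$ good event and using the crude $H$ bound on its complement is what makes the expectation well defined and ready for Jensen; everything else is bookkeeping of constants and the choices of $\xi$ and $\delta$.
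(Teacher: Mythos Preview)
Your proposal is correct and follows essentially the same route as the paper's own proof: choose $L(\pi;\Dcal)=\min_{P\in\Mcal_{\Dcal}}V^{\pi}_{P}$ (the paper additionally clips to $[0,H]$, which is redundant since values already lie in that range), verify Assumption~\ref{assum:lcb} via the event $P^{\star}\in\Mcal_{\Dcal}$, bound (S1) by splitting on the good event and using the crude bound $H$ on its complement, apply Jensen to pass from $\EE_{P^{\star}\sim\beta}\sqrt{C^{\dagger}_{\pi(P^{\star}),P^{\star}}}$ to $\sqrt{C^{\dagger,\text{Bayes}}_{\beta}}$, and set $\delta=1/n$. The only cosmetic difference is that the paper re-expands the simulation-lemma steps inside the (S1) bound, whereas you invoke the conclusion of \pref{thm:version} directly; both are valid.
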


\begin{corollary}[PS-PO for Tabular MDPs] \label{cor:tablar_bayes}
Suppose the partial coverage $C^{\text{Bayes}}_{\beta}<\infty$. 
 \begin{align*}
    \mathbb{E}\left[  V^{\pi(P^{\star})}_{P^{\star}} - \max_{t\in[T]} V^{\pi_t}_{P^{\star}}     \right] \leq  c_1 (1-\gamma)^{-2}\sqrt{\frac{ C^{\Bayes}_{\beta}|\Scal|^2|\Acal|\ln(n|\Scal||\Acal|c_2)}{n}}
    +  (1-\gamma)^{-2}\sqrt{ \frac{ \ln(|\Acal|)}{T}}. 
 \end{align*}
\end{corollary}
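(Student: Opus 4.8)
The plan is to instantiate \pref{thm:bayesian_pspo} with the lower confidence bound induced by the tabular version space of \pref{alg:main_version}. Concretely, I would take $L(\pi;\Dcal) := \min_{P\in\Mcal_{\Dcal}} V^{\pi}_P$, where $\Mcal_{\Dcal}$ is built around the empirical-transition MLE $\widehat P_{\MLE}$ with threshold $\xi = c_1 |\Scal|^2|\Acal|\ln(n|\Scal||\Acal|c_2/\delta)/n$, exactly as in \pref{cor:tabular}. The first step is to verify \pref{assum:lcb}: for any fixed $P^{\star}$ in the support of $\beta$ and any $\Dcal$ generated from it, the tabular MLE concentration underlying \pref{cor:tabular} (a union bound over $\Scal\times\Acal$ together with the standard uniform-in-$n$ argument that yields the $\ln(n|\Scal||\Acal|)$ factor) guarantees $\mathbb{E}_{\Dcal}\|\widehat P_{\MLE}(\cdot\mid s,a) - P^{\star}(\cdot\mid s,a)\|_1^2 \leq \xi$ with probability at least $1-\delta$; on this event $P^{\star}\in\Mcal_{\Dcal}$, hence $L(\pi;\Dcal) \leq V^{\pi}_{P^{\star}}$ for all $\pi\in\Pi$. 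Crucially, this probability bound is uniform over all $P^{\star}$ in the support of $\beta$, which is exactly what \pref{assum:lcb} demands.

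With \pref{assum:lcb} in hand, \pref{thm:bayesian_pspo} reduces the Bayesian suboptimality gap to the term (S1) $= \mathbb{E}[V^{\pi(P^{\star})}_{P^{\star}} - L(\pi(P^{\star});\Dcal)]$, plus the optimization error (S2) $= 4H^2\sqrt{\ln(|\Acal|)/T}$ and the negligible $2\delta H$. To control (S1), I would re-run the frequentist CPPO analysis of \pref{cor:tabular}, now taking the comparator policy to be $\pi(P^{\star})$, the optimal policy of the sampled model. That analysis gives, conditioned on $P^{\star}$, that with probability $1-\delta$,
\[
V^{\pi(P^{\star})}_{P^{\star}} - L(\pi(P^{\star});\Dcal) \leq c_3 H^2 \sqrt{\frac{C_{\pi(P^{\star}),P^{\star}}\,|\Scal|^2|\Acal|\ln(n|\Scal||\Acal|c_4/\delta)}{n}},
\]
while on the complement the gap is at most $H$ (since $L\in[0,H]$ and $V^{\pi(P^\star)}_{P^\star}\le H$). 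Taking the expectation over $\Dcal$ pointwise in $P^{\star}$ and then over $P^{\star}\sim\beta$, and using concavity of $\sqrt{\cdot}$ to pull the expectation inside, $\mathbb{E}_{P^{\star}}\sqrt{C_{\pi(P^{\star}),P^{\star}}} \leq \sqrt{\mathbb{E}_{P^{\star}}C_{\pi(P^{\star}),P^{\star}}} = \sqrt{C^{\Bayes}_{\beta}}$ by the definition in Eq.~\pref{eq:Bayesian}, while the low-probability event contributes only $\delta H$. Choosing $\delta$ of order $1/n$ makes $\delta H$ and $2\delta H$ dominated by the leading term; absorbing constants then yields the claimed bound $c_1 H^2\sqrt{C^{\Bayes}_{\beta}|\Scal|^2|\Acal|\ln(n|\Scal||\Acal|c_2)/n} + H^2\sqrt{\ln(|\Acal|)/T}$.

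The genuinely routine pieces here are the tabular MLE concentration and the NPG/optimization error, both already packaged in \pref{cor:tabular} and \pref{thm:bayesian_pspo}. The main point requiring care is the interaction between the randomness of the comparator $\pi(P^{\star})$ and the frequentist guarantee: \pref{thm:version} and \pref{cor:tabular} are stated for a \emph{fixed} comparator $\pi^{*}\in\Pi$, so I must check that their proof goes through verbatim when $\pi^{*} = \pi(P^{\star})$ — which it does, because conditioning on $P^{\star}$ (drawn before any data) makes $\pi(P^{\star})$ deterministic and the dataset $\Dcal$ is generated afterwards, so the comparator is independent of $\Dcal$ given $P^\star$. I also need $C_{\pi(P^{\star}),P^{\star}} < \infty$ for $\beta$-almost every $P^{\star}$, which follows from $C^{\Bayes}_{\beta} = \mathbb{E}_{P^{\star}\sim\beta}[C_{\pi(P^{\star}),P^{\star}}] < \infty$. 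A minor bookkeeping point is to apply Jensen to the right conditional-expectation structure so that the $\ln(n|\Scal||\Acal|)$ factor, which does not depend on $P^{\star}$, simply passes through the expectation.
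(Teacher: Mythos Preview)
Your proposal is correct and follows essentially the same route as the paper: the paper's own proof simply says ``done as in the proof of Corollary~\ref{cor:discrete_bayes},'' which amounts to instantiating \pref{thm:bayesian_pspo} with the version-space lower confidence bound $L(\pi;\Dcal)=\min_{P\in\Mcal_{\Dcal}}V^{\pi}_P$ (clipped to $[0,H]$), verifying \pref{assum:lcb} via the tabular MLE concentration, bounding (S1) by the frequentist estimate of \pref{cor:tabular} applied with comparator $\pi(P^{\star})$, pulling the expectation through $\sqrt{\cdot}$ by Jensen to obtain $\sqrt{C^{\Bayes}_{\beta}}$, and setting $\delta=1/n$. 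Your explicit remark that conditioning on $P^{\star}$ makes $\pi(P^{\star})$ a deterministic comparator is exactly the point that justifies reusing the frequentist bound inside the Bayesian expectation.
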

\begin{corollary}[PS-PO for Linear Mixture MDPs]\label{cor:linear_mixture_bayes}
Suppose $\|\theta^{\star}\| \leq R$, $\sup_{(s,a)}\|\psi_V(s,a)\|_2\leq 1,\forall V\in \{\Scal \to [0,1]\}$.  Then, we have 
\begin{align*}
    \mathbb{E}\left[  V^{\pi(P^{\star})}_{P^{\star}} - \max_{t\in[T]} V^{\pi_t}_{P^{\star}}     \right] \leq   c_1 (1-\gamma)^{-2}  \sqrt{\frac{ dC^{\dagger,\Bayes}_{\beta} \ln(c_2 n\iota(\Scal)R)}{n} }    +  (1-\gamma)^{-2}\sqrt{ \frac{ \ln(|\Acal|)}{T}}.  
\end{align*}
\end{corollary}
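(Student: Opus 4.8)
The plan is to derive \pref{cor:linear_mixture_bayes} as a direct instantiation of the general Bayesian guarantee \pref{thm:bayesian_pspo}, choosing the pessimistic version-space value of CPPO as the lower confidence function $L$. Concretely, I take $\Mcal=\Mcal_{\text{Mix}}$, the threshold $\xi=c_1 d\ln^2(c_2 nR/\delta)/n$ from \pref{cor:linear_mixture}, and $L(\pi;\Dcal):=\min_{P\in\Mcal_\Dcal}V^{\pi}_{P}$, where $\Mcal_\Dcal$ is the version-space set of CPPO (\pref{alg:main_version}). The first step is to check that this $L$ belongs to $\Lcal_\Dcal$, i.e.\ that it satisfies \pref{assum:lcb}: conditioned on any $P^\star$ in the support of $\beta$ (on which, by the standing assumptions, $\inf_{s,a,s'}P^\star(s'\mid s,a)\ge c_3$, $\|\theta^\star\|\le R$ and $\sup\|\psi\|_2\le 1$ hold), the MLE-concentration / bracketing-entropy argument that underlies \pref{cor:linear_mixture} shows that with probability at least $1-\delta$ over $\Dcal$ (generated from $P^\star$) we have $P^\star\in\Mcal_\Dcal$, and hence $L(\pi;\Dcal)=\min_{P\in\Mcal_\Dcal}V^{\pi}_{P}\le V^{\pi}_{P^\star}$ for every $\pi\in\Pi$.

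Given this choice of $L$, \pref{thm:bayesian_pspo} reduces the Bayesian suboptimality gap to $\text{(S1)}+\text{(S2)}+2\delta H$, where $\text{(S2)}=4H^2\sqrt{\ln(|\Acal|)/T}$ is the NPG term (made negligible by taking $T$ large), and $\text{(S1)}\le\mathbb{E}\bigl[V^{\pi(P^\star)}_{P^\star}-\min_{P\in\Mcal_\Dcal}V^{\pi(P^\star)}_{P}\bigr]$. The heart of the proof is bounding $\text{(S1)}$. I would condition on $P^\star=P$: then $\pi(P)$ is a fixed (deterministic) comparator policy and $\Dcal$ is an honest sample drawn from $P$, so the per-comparator frequentist estimate behind \pref{cor:linear_mixture} applies with comparator $\pi^*=\pi(P)$ --- namely, with probability at least $1-\delta$, $V^{\pi(P)}_{P}-\min_{P'\in\Mcal_\Dcal}V^{\pi(P)}_{P'}\le c_4 H^2\sqrt{d\,C^{\dagger}_{\pi(P),P}\ln^2(c_5 nR/\delta)/n}$ (this is exactly the intermediate estimate in the CPPO analysis, where the suboptimality gap of $\widehat\pi$ is upper bounded by the pessimistic-evaluation error of the comparator). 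Since $V^{\pi(P)}_{P}-\min_{P'\in\Mcal_\Dcal}V^{\pi(P)}_{P'}\le H$ always (both values lie in $[0,H]$), the $\delta$-probability bad event contributes at most $\delta H$ to the conditional expectation; taking the expectation over $\Dcal\mid P$ and then over $P\sim\beta$, and applying Jensen's inequality to pull the concave map $x\mapsto\sqrt{x}$ out of the prior expectation (using $\mathbb{E}_{P\sim\beta}[\sqrt{C^{\dagger}_{\pi(P),P}}]\le\sqrt{\mathbb{E}_{P\sim\beta}C^{\dagger}_{\pi(P),P}}=\sqrt{C^{\dagger,\Bayes}_\beta}$), I obtain $\text{(S1)}\le c_4 H^2\sqrt{d\,C^{\dagger,\Bayes}_\beta\ln^2(c_5 nR/\delta)/n}+\delta H$.

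Finally, I would set $\delta=1/n$ (any polynomially small choice works), so that $\delta H+2\delta H$ is dominated by the $\widetilde{O}\bigl(H^2\sqrt{d\,C^{\dagger,\Bayes}_\beta/n}\bigr)$ main term, and observe that $\ln^2(c_5 n^2 R)\le 4\ln^2(c_2 nR)$ for a suitable constant $c_2$, the factor $4$ being absorbed into $c_1$; collecting the two surviving terms yields the stated bound. The step I expect to be the main obstacle is the conditioning/coupling bookkeeping in the bound on $\text{(S1)}$: the frequentist guarantee is of the form ``for each fixed comparator, with high probability over the data,'' whereas in the Bayesian setting the comparator $\pi(P^\star)$ and the dataset $\Dcal$ are coupled through the single random draw $P^\star\sim\beta$, so one must first condition on $P^\star$ --- which simultaneously freezes the comparator and turns $\Dcal$ into a genuine sample from it --- before invoking the frequentist bound, and one must also confirm that the entropy-integral computation of \pref{cor:linear_mixture} is valid $\beta$-almost surely, which holds because $\inf_{s,a,s'}P^\star\ge c_3$, $\|\theta^\star\|\le R$ and $\sup\|\psi\|_2\le 1$ are standing assumptions on the support of $\beta$.
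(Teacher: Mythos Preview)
Your proposal is correct and follows essentially the same route as the paper: the paper's own proof simply says ``done as in the proof of Corollary~\ref{cor:discrete_bayes},'' which is exactly what you sketch --- take $L(\pi;\Dcal)$ to be the CPPO pessimistic value (the paper clips it to $[0,H]$, which is redundant here since all $V^{\pi}_P\in[0,H]$), verify \pref{assum:lcb} via $P^\star\in\Mcal_\Dcal$, plug into \pref{thm:bayesian_pspo}, bound (S1) by conditioning on $P^\star$, invoking the frequentist estimate behind \pref{cor:linear_mixture} with comparator $\pi(P^\star)$, controlling the failure event by $O(\delta H)$, applying Jensen to pull out $\sqrt{C^{\dagger,\Bayes}_\beta}$, and finally setting $\delta=1/n$. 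Your identification of the conditioning/coupling step as the only delicate point is exactly right, and your handling of it matches the paper's.
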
 

For KNRs with the known feature $\phi$, we can use the Bayesian relative condition number.  Here again we set $L(\pi;\Dcal) = \min_{W\in\Wcal_{\Dcal}} V^{\pi}_{P(W)}$.
\begin{corollary}[PS-PO for KNRs] \label{cor:linear_bayes}
 Assume $\|\phi(s,a)\|_2\leq 1,\forall(s,a)\in \Scal\times \Acal$. Suppose the partial coverage $\bar C^{\text{Bayes}}_{\beta}<\infty$. By letting $\|W^*\|^2_2=O(1),\upsilon^2=O(1)$,  we have 
\begin{align*}
     \mathbb{E}\left[  V^{\pi(P^{\star})}_{P^{\star}} - \max_{t\in[T]} V^{\pi_t}_{P^{\star}}     \right] \leq c_1 (1-\gamma)^{-2} \min(d^{1/2},  \bar R)\sqrt{ \bar R }   \sqrt{\frac{d_{\Scal}\bar C^{\Bayes}_{\beta}\ln (1+n) }{n}}+  (1-\gamma)^{-2} \sqrt{ \frac{ \ln(|\Acal|)}{T}}, 
\end{align*}
where $\bar R=  \mathrm{rank}[\Sigma_{\rho}]\{\mathrm{rank}[\Sigma_{\rho}] +\ln(c_2n)\}$. 
\end{corollary}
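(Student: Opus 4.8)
The plan is to instantiate the general Bayesian guarantee \pref{thm:bayesian_pspo} with the pessimistic evaluator used in the KNR-tailored version of CPPO, namely $L(\pi;\Dcal)\defeq \min_{W\in\Wcal_{\Dcal}} V^{\pi}_{P(W)}$ with $\Wcal_{\Dcal}=\{W:\|(\hat W_{\MLE}-W)\Sigma_n^{1/2}\|_2\leq \xi\}$. The first step is to verify that this choice of $L$ satisfies Assumption~\pref{assum:lcb}. Fix any $P^{\star}=P(W^{\star})$ in the support of $\beta$ and let $\Dcal$ be generated under it. The same self-normalized / ridge-regression concentration argument that underlies \pref{cor:knrs} shows that, with probability at least $1-\delta$ over $\Dcal$ (conditioned on $P^{\star}$), one has $W^{\star}\in\Wcal_{\Dcal}$ for the stated $\xi$; on this event $L(\pi;\Dcal)=\min_{W\in\Wcal_{\Dcal}}V^{\pi}_{P(W)}\leq V^{\pi}_{P(W^{\star})}=V^{\pi}_{P^{\star}}$ for all $\pi\in\Pi$, so $L(\cdot;\Dcal)\in\Lcal_{\Dcal}$. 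Hence this $L$ is admissible in the minimization defining the term (S1) of \pref{thm:bayesian_pspo}, and it remains to bound $\mathbb{E}[V^{\pi(P^{\star})}_{P^{\star}}-\min_{W\in\Wcal_{\Dcal}}V^{\pi(P^{\star})}_{P(W)}]$ together with the tail term.

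For (S1), condition on $P^{\star}$. On the good event $\{W^{\star}\in\Wcal_{\Dcal}\}$ I would run verbatim the estimate in the proof of \pref{cor:knrs}, but now with the comparator policy taken to be $\pi^{*}=\pi(P^{\star})$: using $\|P(W)(\cdot|s,a)-P(W^{\star})(\cdot|s,a)\|_1^2=\Theta(\|(W-W^{\star})\phi(s,a)\|_2^2)$, the simulation lemma, and the relative-condition-number reduction, this yields
\begin{align*}
V^{\pi(P^{\star})}_{P^{\star}}-\min_{W\in\Wcal_{\Dcal}}V^{\pi(P^{\star})}_{P(W)}\leq c\,H^2\min(d^{1/2},\bar R)\sqrt{\bar R}\,\sqrt{\frac{d_{\Scal}\,\bar C_{\pi(P^{\star}),P^{\star}}\ln(1+n)}{n}},
\end{align*}
where $\bar R=\mathrm{rank}[\Sigma_{\rho}]\{\mathrm{rank}[\Sigma_{\rho}]+\ln(c_2 n)\}$ is non-random because $\rho$ is fixed; on the complementary event the left side is at most $H$. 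Taking expectation over $\Dcal$ conditioned on $P^{\star}$ and then over $P^{\star}\sim\beta$, and applying Jensen's inequality to pull the expectation through the concave map $t\mapsto\sqrt{t}$, the factor $\mathbb{E}_{P^{\star}\sim\beta}\sqrt{\bar C_{\pi(P^{\star}),P^{\star}}}$ becomes $\sqrt{\mathbb{E}_{P^{\star}\sim\beta}\bar C_{\pi(P^{\star}),P^{\star}}}=\sqrt{\bar C^{\Bayes}_{\beta}}$ (Eq.~\pref{eq:Bayesian}), giving $(\mathrm{S1})\leq c_1 H^2\min(d^{1/2},\bar R)\sqrt{\bar R}\sqrt{d_{\Scal}\bar C^{\Bayes}_{\beta}\ln(1+n)/n}+\delta H$.

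Finally I would plug this into \pref{thm:bayesian_pspo} (with any $\eta<1/(2H)$, e.g.\ $\eta$ of order $1/H$, which only changes the constant in front of the $\sqrt{\ln|\Acal|/T}$ term) to get the total bound $(\mathrm{S1})+4H^2\sqrt{\ln|\Acal|/T}+2\delta H$, then choose $\delta=\Theta(1/n)$ so the $\delta H$ contributions are dominated by the $1/\sqrt{n}$ main term and so $\ln(c_2/\delta)$ collapses into the $\ln(c_2 n)$ appearing in $\bar R$; absorbing universal constants yields exactly the claimed inequality. The only genuinely new content beyond \pref{thm:bayesian_pspo} and \pref{cor:knrs} is (i) checking that the pessimistic KNR evaluator meets Assumption~\pref{assum:lcb} \emph{conditionally on each prior-supported $P^{\star}$}, and (ii) the Jensen step converting the per-$P^{\star}$ relative condition number into its Bayesian average; the technical weight — the regression concentration for $\hat W_{\MLE}$ and the $\min(d^{1/2},\bar R)\sqrt{\bar R}$ scaling — is inherited from the proof of \pref{cor:knrs}. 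The point demanding the most care is the clean integration of a high-probability (in $\Dcal$) frequentist bound against the prior $\beta$, i.e.\ handling the $\delta$-failure event uniformly over the support of $\beta$.
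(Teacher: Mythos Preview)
Your proposal is correct and follows essentially the same route as the paper: instantiate \pref{thm:bayesian_pspo} with the KNR pessimistic evaluator $L(\pi;\Dcal)=\min_{W\in\Wcal_{\Dcal}}V^{\pi}_{P(W)}$, verify Assumption~\pref{assum:lcb} via the ridge-regression concentration (so $W^{\star}\in\Wcal_{\Dcal}$ with high probability conditioned on $P^{\star}$), rerun the chain from the proof of \pref{cor:knrs} with comparator $\pi(P^{\star})$ on the good event, apply Jensen to obtain $\sqrt{\bar C^{\Bayes}_{\beta}}$, and set $\delta=1/n$. The only cosmetic difference is that the paper explicitly clips $L$ to $[0,H]$ to match the range requirement in Assumption~\pref{assum:lcb}; since rewards lie in $[0,1]$ this clipping is vacuous, so your omission is harmless.
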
 
Similarly, we can also extend the above result to KNRs with infinite-dimensional $\phi$ based on the result of Corollary \pref{cor:gps} by using the effective dimension $d^*$. 

Finally, for low-rank MDPs, we use $L(\pi;\Dcal) = \min_{P\in\Mcal_{\Dcal}} V^{\pi}_{P}$.
\begin{corollary}[PS-PO for low-rank MDPs] \label{cor:low_rank_bayes}
Suppose (a): $\|\phi(s,a)\|_2\leq 1, \forall (s,a)\in \Scal\times\Acal,\forall \phi \in \Phi$, $\int \mu(s')^{\top}\phi(s,a)\mathrm{d}\iota(s')=1$ and $\int \|\mu(s)\|_2 \mathrm{d}\iota(s)\leq \sqrt{d},\forall \mu\in \Psi,\phi\in \Phi$, (b) $\rho(s,a)=d^{\pi_b}_{P^{\star}}(s,a)$.  We have 
\begin{align}\textstyle     
\mathbb{E}\left[  V^{\pi(P^{\star})}_{P^{\star}} - \max_{t\in[T]} V^{\pi_t}_{P^{\star}}     \right] &\leq    c_3 \sqrt{\bar C^{\Bayes}_{\beta}\omega_{\pi^{*}}\rank(\Sigma_{\rho}) \frac{\ln(|\Psi||\Phi|c_4/\delta)}{(1-\gamma)^4 n}}+(1-\gamma)^{-2} \sqrt{ \frac{ \ln(|\Acal|)}{T}}, \,\omega_{\pi^{*}}=\prns{\max_{(s,a)}\frac{\pi^{*}(a\mid s)}{\pi_b(a\mid s)}}. 
\end{align}

\end{corollary}
}

\section{Conclusion}\label{sec:conclusion}

We study model-based offline RL with function approximation under partial coverage. We show that for the model-based setting, realizability in function class and partial coverage together are enough to learn a policy that is comparable to \emph{any} policies (including history-dependent policies) covered by the offline distribution. Our result demonstrates a sharp contrast to model-free offline RL approaches which often require additional structural conditions in the function class (e.g., Bellman completion) and have restrictions on the pool of candidate policies that they can compete against.

{Some readers might wonder whether CPPO-TV and CPPO-LR is computationally efficient. The minimax optimization problem $\argmax_{\pi \in \Pi}\min_{P\in M}V^{\pi}_P$ fits into a framework of planning on robust MDPs \citep{nilim2005robust,iyengar2005robust}. By introducing a robust Bellman equation, they proposed value iteration and policy iteration algorithms and showed that algorithms are practically tractable in the tabular setting. In the non-tabular setting, \citet{lim2019kernel,tamar2014scaling} propose the extension using function approximation. Thus, we can apply their methods to approximately solve the minimax optimization problem in a model-free fashion. We leave the formal theoretical justification when using these approximation planning algorithms as an important direction for future work. {\ouredit As a first step, we propose a natural policy gradient based policy optimization method based on posterior sampling in \pref{sec:alg}. In some models such as low-rank MDPs, follow-up works propose computationally efficient algorithms \citep{uehara2021representation,zhang2022making,qiu2022contrastive} . }

\section*{Acknowledgement}
~
The authors would like to thank Nan Jiang, Tengyang Xie, Audrey Huang, Jinglin Chen, Runzhe Wu for their valuable feedback.

Masatoshi Uehara was partially supported by Masason foundation. 

\bibliography{reference,reference2}

\newpage 
\appendix

\section{Comparison to \texorpdfstring{\citet{XieTengyang2021BPfO}}{a}}\label{ape:comparison}

We compare a result in \citep{XieTengyang2021BPfO} to our result in detail. Let $\Fcal$ be a function class for $Q$-functions. Here, we consider a more general version of their algorithm by replacing the original $\mathcal{E}(f,\pi;\Dcal)$ in their algorithm with 
\begin{align*}
  \mathcal{E}(f,\pi;\Dcal):= \mathcal{L}(f,f;\pi,\Dcal)-\min_{g\in \Gcal}\mathcal{L}(g,f;\pi,\Dcal). 
\end{align*}
In their original algorithm, they set $\Gcal=\Fcal$. Here, we consider the version such that a discriminator class $\Gcal$ can be different from $\Fcal$. 

They show the PAC result under partial coverage as follows. Here, $\Tcal^{\pi}_{P^{\star}}$ is a Bellman operator under $\pi$ and $P^{\star}$:
\begin{align*}
    \Tcal^{\pi}_{P^{\star}}: \{\Scal \times \Acal \to \RR\}\ni f\mapsto r(s,a)+\E_{P^{*}(s'\mid s,a)}[f(s',\pi)]\in\{\Scal \times \Acal \to \RR\}. 
\end{align*}
\begin{theorem}[Extension of Result in \citep{XieTengyang2021BPfO} ]\label{thm:model_free}
Suppose realizaibility $Q^{\pi}_{P^{\star}}\in \Fcal,\forall \pi \in \Pi$ and closeness $\max_{f\in\Fcal} \min_{g\in\Gcal} \mathbb{E}_{s,a\sim \rho} [ (g- \Tcal^{\pi}_{P^{\star}} f)^2(s,a) ] = 0,\forall \pi \in \Pi $. Then, with $1-\delta$, for any $\pi^{*}\in \Pi$, we have 
\begin{align*}
        V^{\pi^{*}}_{P^{\star}}-V^{\hat \pi}_{P^{\star}}=  O(\sqrt{C^{\diamond}\ln(|\Pi||\Fcal||\Gcal|/\delta)/n}),\quad C^{\diamond}=\sup_{f\in \Fcal}\frac{\EE_{(s,a)\sim d^{\pi^{*}}_{P^{\star}}}[(f-\Tcal f)^2(s,a)] }{\EE_{(s,a)\sim \rho}[(f-\Tcal f)^2(s,a)]}. 
\end{align*}
\end{theorem}
By combining this result with the conversion from model-free results to model-based results in \citep[Corollary 6]{ChenJinglin2019ICiB}, we can obtain the following result under partial coverage. 
\begin{theorem}\label{thm:model_free2}(\textbf{PAC guarantee from the direct application of \citep{XieTengyang2021BPfO} to mode-based RL  })
Assume $P^{\star}\in \Mcal$. Then, there exists an algorithm s.t. with $1-\delta$, for any policy $\pi^{\star}\in \Pi$,
\begin{align*}
        V^{\pi^{*}}_{P^{\star}}-V^{\hat \pi}_{P^{\star}}=  O(\sqrt{C^{\diamond}\ln(|\Pi||\Mcal|/\delta)/n}). 
\end{align*}
\end{theorem}
\begin{proof}[Proof of \pref{thm:model_free2}]
Given a model class $\Mcal$, consider the following reduction. We define a $Q$-function class: 
\begin{align*}
    \Fcal=\{q^{\pi}_{P} \mid  \pi \in \Pi, P \in \Mcal \}. 
\end{align*}
Then, we define a discriminator class $\Gcal$: 
\begin{align*}
    \Gcal=\{\Tcal^{\pi'}_{P'}q^{\pi}_{P} \mid  \pi \in \Pi, \pi'\in \Pi, P \in \Mcal ,P'\in \Mcal\}. 
\end{align*}

The above satisfies the realizability $Q^{\pi}_{P^{\star}}\in \Fcal,\forall \pi \in \Pi$ and the closedness $\Tcal^{\pi}_{P^{\star}}\Fcal \subset \Gcal,\forall \pi\in \Pi$. Thus, the assumptions in \pref{thm:model_free} are satisfied. Then, we have 
\begin{align*}
 V^{\pi^{*}}_{P^{\star}}-V^{\hat \pi}_{P^{\star}}&=O(\sqrt{C^{\diamond}\ln(|\Pi||\Fcal||\Gcal|/\delta)/n}) \\
      &= O(\sqrt{C^{\diamond}\ln(|\Pi||\Mcal|/\delta)/n}), 
\end{align*}
noting $|\Fcal|=|\Pi||\Mcal|$ and $|\Gcal|=|\Pi|^2|\Mcal|^2$. 

\end{proof} 

As we mentioned, this is  worse than our result since it includes $|\Pi|$. Besides, the algorithm can only compete against policies restricted in $\Pi$, while our algorithm works for the unrestricted policy class $\Pi$ which could even include history dependent policies. For completeness, we give the proof as follows. 

We remark that their results (Theorem 4.1) with NPG that can possibly compete with any stochastic policies, are not applicable here. This is because they need an assumption that the comparator policy $\pi^{*}$ needs to satisfy  $Q^{\pi^{*}}_{P^{\star}}\in \Fcal$ and $\max_{f\in\Fcal} \min_{g\in\Gcal} \mathbb{E}_{s,a\sim \rho} [ (g- \Tcal^{\pi^{*}}_{P^{\star}} f)^2(s,a) ] = 0$, which does not hold for the corresponding Q-function class $\Fcal$ after the conversion. %
As a notable exception, when the model is a linear Bellman-complete MDP \citep{zanette2021provable}, any stochastic policies satisfy the Bellman completeness for the linear Q-function class; then, their algorithms can learn policies that can compete with any stochastic policies satisfying partial coverage.

\section{Missing Proofs in \pref{sec:version}}

Below we use $c,c_1,c_2,\cdots$ to denote universal constants. For a $d$-dimensional vector $a$ and a matrix $A\in \RR^{d\times d}$, we denote $\|a\|^2_{A}=a^{\top}Aa$. Here, $a \lesssim B$ means $a\leq c B$ for some universal constant. $c$ 

\subsection{Proofs for General Function Approximation for CPPO-TV (Proof of \pref{thm:version})}\label{ape:gene}

From \pref{lem:mle}, the MLE guarantee gives us the following generalization bound: with probability $1-\delta$, 
\begin{align}\label{eq:mle_version}
\mathbb{E}_{s,a\sim \rho} [\TV(\widehat{P}_{\MLE}(\cdot \mid s,a),P^\star(\cdot \mid s,a))^2] \lesssim \frac{ \ln(|\Mcal| / \delta) }{n}. 
\end{align}
Letting
\begin{align*}
     A(P) \coloneqq |\EE_{s,a\sim \rho} [\TV(P(\cdot \mid s,a),P^\star(\cdot \mid s,a))^2] -\EE_{\Dcal} [\TV(P(\cdot \mid s,a),P^\star(\cdot \mid s,a))^2] |. 
\end{align*}
with probability $1-\delta$, from union bound and Bernstein's inequality, we also have
\begin{align}\label{eq:bernstein_easy}
    A(P)\leq \sqrt{\frac{c_1\mathrm{var}_{(s,a)\sim \rho}[\TV(P(\cdot \mid s,a),P^\star(\cdot \mid s,a))^2] \ln(|\Mcal|/\delta)}{n}}+\frac{c_2\ln(|\Mcal|/\delta)}{n},\forall P\in \Mcal. 
\end{align}
Hereafter, we condition on the above two events. Recall that we construct the version space using $\Dcal$ and $\widehat{P}_{\MLE}$ as follows:
\begin{align*}
\Mcal_{\Dcal} := \left\{ P \in \Mcal: \mathbb{E}_{\Dcal} [\TV(P(\cdot \mid s,a),\hat P_{\MLE}(\cdot \mid s,a))^2] \leq \xi \right\}.
\end{align*}

\paragraph{First Step: Show $P^{\star}\in \Mcal_{\Dcal}$ in high-probability. }

We set $\xi= c \frac{ \ln(|\Mcal| / \delta) }{n}$. Conditioning on the above two events equations \pref{eq:mle_version} and \pref{eq:bernstein_easy}, we prove $P^{\star}\in \Mcal_{\Dcal}$. This is proved by 
\begin{align*}
    & \mathbb{E}_{\Dcal} [\TV(\widehat{P}_{\MLE}(\cdot \mid s,a),P^\star(\cdot \mid s,a))^2]\\
    &=     \mathbb{E}_{\Dcal} [\TV(\widehat{P}_{\MLE}(\cdot \mid s,a),P^\star(\cdot \mid s,a))^2]-\mathbb{E}_{(s,a)\sim \rho} [\TV(\widehat{P}_{\MLE}(\cdot \mid s,a),P^\star(\cdot \mid s,a))^2] \\
    & +\mathbb{E}_{(s,a)\sim \rho} [\TV(\widehat{P}_{\MLE}(\cdot \mid s,a),P^\star(\cdot \mid s,a))^2] \\
    &=  \mathbb{E}_{\Dcal} [\TV(\widehat{P}_{\MLE}(\cdot \mid s,a),P^\star(\cdot \mid s,a))^2]-\mathbb{E}_{(s,a)\sim \rho} [\TV(\widehat{P}_{\MLE}(\cdot \mid s,a),P^\star(\cdot \mid s,a))^2]+  \frac{c_1 \ln(|\Mcal| / \delta) }{n}\\ 
    &\lesssim  \sqrt{\frac{\mathrm{var}_{(s,a)\sim \rho}[\TV(\widehat{P}_{\MLE}(\cdot \mid s,a),P^\star(\cdot \mid s,a))^2]\ln(|\Mcal|/\delta)}{n}} +  \frac{\ln(|\Mcal| / \delta) }{n} \tag{From \pref{eq:bernstein_easy}}\\ 
     &\lesssim  \sqrt{\frac{\mathrm{E}_{(s,a)\sim \rho}[\TV(\widehat{P}_{\MLE}(\cdot \mid s,a),P^\star(\cdot \mid s,a))^2]\ln(|\Mcal|/\delta)}{n}} +  \frac{\ln(|\Mcal| / \delta) }{n} \tag{$\TV(\widehat{P}_{\MLE}(\cdot \mid s,a),P^\star(\cdot \mid s,a))^2\leq 4$}\\
     &\lesssim \frac{1}{n}\ln(|\Mcal| / \delta) \tag{Plug in MLE guarantee}. 
\end{align*}

\paragraph{Second Step: Show $ \EE_{s,a\sim \rho} [\TV(P(\cdot \mid s,a),P^\star(\cdot \mid s,a))^2]\leq  c\xi,\quad \forall P\in \Mcal_{\Dcal}$ in high probability.  }
We show for any $P \in \Mcal_{\Dcal}$, the distance between $P^{\star}$ is sufficiently controlled in terms of TV distance. More concretely (conditioning on the above two events  \pref{eq:mle_version} and \pref{eq:bernstein_easy} ), we show 
\begin{align*}
    \EE_{s,a\sim \rho} [\TV(P(\cdot \mid s,a),P^\star(\cdot \mid s,a))^2]\lesssim \xi,\quad \forall P\in \Mcal_{\Dcal}. 
\end{align*}

In order to observe this, for any $P \in \Mcal_{\Dcal}$, we have 
\begin{align*}  
& \EE_{\Dcal} [\TV(P(\cdot \mid s,a),P^\star(\cdot \mid s,a))^2]  \nonumber \\ 
&\leq 2\EE_{\Dcal} [\TV(\widehat{P}_{\MLE}(\cdot \mid s,a),P(\cdot \mid s,a))^2] + 2\EE_{\Dcal} [\TV(\widehat{P}_{\MLE}(\cdot \mid s,a),P^\star(\cdot \mid s,a))^2] \leq 4 \xi  \tag{From $(a+b)^2\leq 2a^2+2b^2$.}
\end{align*} 
Thus,  we have:
\begin{align}
&\EE_{s,a\sim \rho} [\TV(P(\cdot \mid s,a),P^\star(\cdot \mid s,a))^2] \nonumber \\
&= \EE_{s,a\sim \rho} [\TV(P(\cdot \mid s,a),P^\star(\cdot \mid s,a))^2] -\EE_{\Dcal} [\TV(P(\cdot \mid s,a),P^\star(\cdot \mid s,a))^2] +\EE_{\Dcal}[\TV(P(\cdot \mid s,a),P^\star(\cdot \mid s,a))^2]  \nonumber  \\ 
&\leq  A(P)+ c\xi .   \label{eq:key_version}
\end{align}
Here, from \pref{eq:bernstein_easy}, we have
\begin{align*}
    A(P)\leq \sqrt{\frac{c_1\mathrm{var}_{(s,a)\sim \rho}[\TV(P(\cdot \mid s,a),P^\star(\cdot \mid s,a))^2] ]\ln(|\Mcal|/\delta)}{n}}+\frac{c_2\ln(|\Mcal|/\delta)}{n},\forall P\in \Mcal_{\Dcal}. 
\end{align*}
Then, for any $P\in \Mcal_{\Dcal}$, we have 
\begin{align*}
    A(P) &\leq \sqrt{\frac{c_1\mathrm{E}_{(s,a)\sim \rho}[\TV(P(\cdot \mid s,a),P^\star(\cdot \mid s,a))^4] \ln(|\Mcal|/\delta)}{n}}+\frac{c_2\ln(|\Mcal|/\delta)}{n}\\  
    &\leq \sqrt{\frac{4c_1\mathrm{E}_{(s,a)\sim \rho}[\TV(P(\cdot \mid s,a),P^\star(\cdot \mid s,a))^2] \ln(|\Mcal|/\delta)}{n}}+\frac{c_2\ln(|\Mcal|/\delta)}{n} \tag{$[\TV(P(\cdot \mid s,a),P^\star(\cdot \mid s,a))^2]\leq 4$. }\\ 
    &\leq \sqrt{\frac{4c_1(A(P)+c\xi)\ln(|\Mcal|/\delta)}{n}}+\frac{c_2\ln(|\Mcal|/\delta)}{n}. 
\end{align*}
From $(a+b)^2\leq 2a^2+2b^2$, 
\begin{align*}
    A^2(P) & \lesssim \prns{\sqrt{\frac{c (A(P)+\xi)\ln(|\Mcal|/\delta)}{n}}+\frac{c \ln(|\Mcal|/\delta)}{n}}^2\lesssim \frac{(A(P)+\xi)\ln(|\Mcal|/\delta)}{n}+\braces{\frac{c\ln( |\Mcal|/\delta)}{n}}^2 \\
       &\lesssim \frac{(A(P)+\xi)\ln(|\Mcal|/\delta)}{n}  \tag{$\xi$ includes $\ln(|\Mcal|/\delta)$ } \\
       &\lesssim  \frac{(A(P)+1/n\ln(|\Mcal|/\delta) )\ln(|\Mcal|/\delta)}{n}. 
\end{align*}
Then, we have
\begin{align*}
    A^2(P)-B_1 A(P)- B_2\leq 0,\quad B_1=c\ln(|\Mcal|/\delta)/n,\quad B_2=c(1/n)^2 \ln(|\Mcal|/\delta)^2. 
\end{align*}
This concludes 
\begin{align*}
    0\leq A(P)\leq \frac{B_1+\sqrt{B^2_1+4B_2}}{2}\leq c(B_1+\sqrt{B_2})\leq  c\frac{\ln(|\Mcal|/\delta)}{n}\lesssim \xi.
\end{align*}
Thus, by using the above $A(P)\lesssim \xi (P\in \Mcal_{\Dcal})$ and \pref{eq:key_version}, with probability $1-\delta$,  we have:
\begin{align*}
\EE_{s,a\sim \rho} [\TV(P(\cdot \mid s,a),P^\star(\cdot \mid s,a))^2] \leq  A(P)+ c\xi \lesssim \xi,\quad P\in \Mcal_{\Dcal}. 
\end{align*}

\paragraph{Third Step: Calculate the final error bound taking the distribution shift into account.}

For any $P\in \Mcal_{\Dcal}$, we prove 
\begin{align}\label{eq:third}
           V^{\pi^{*}}_{P^{\star}}-V^{\pi^{*}}_{P} \leq (1-\gamma)^{-2} c\sqrt{C^{\dagger}_{\pi^{*}}}\sqrt{\frac{\ln(|\Mcal|/\delta)}{n}}. 
\end{align}
For any $P\in \Mcal_{\Dcal}$, this is proved as follows:  
\begin{align*}
  V^{\pi^{*}}_{P^{\star}}-       V^{\pi^{*}}_{P} & \leq (1-\gamma)^{-2}\EE_{(s,a)\sim d^{\pi^{*}}_{P^{\star}}}[\TV(P(\cdot \mid s,a),P^\star(\cdot \mid s,a))] \tag{Simulation lemma, \pref{lem:simulation}}\\
        &\leq  (1-\gamma)^{-2}\sqrt{\EE_{(s,a)\sim d^{\pi^{*}}_{P^{\star}}}[\TV(P(\cdot \mid s,a),P^\star(\cdot \mid s,a))^2]}\\
       &\leq (1-\gamma)^{-2} \sqrt{C^{\dagger}_{\pi^{*}}\EE_{(s,a)\sim \rho}[\TV(P(\cdot \mid s,a),P^\star(\cdot \mid s,a))^2]} \\
        &\leq  c(1-\gamma)^{-2} \sqrt{C^{\dagger}_{\pi^{*}}}\sqrt{\frac{\ln(|\Mcal|/\delta)}{n}} \tag{Based on the consequence of the second step}. 
\end{align*}

Combining all things together, with probability $1-\delta$, for any $\pi^{*}\in \Pi$, we have 
\begin{align*}
    V^{\pi^{*}}_{P^{\star}}-V^{\hat \pi}_{P^{\star}}&\leq    V^{\pi^{*}}_{P^{\star}}-\min_{P\in \Mcal_{\Dcal}}V^{\pi^{*}}_{P}+ \min_{P\in \Mcal_{\Dcal}}V^{\pi^{*}}_{P}- V^{\hat \pi}_{P^{\star}}\\ 
    &\leq    V^{\pi^{*}}_{P^{\star}}-\min_{P\in \Mcal_{\Dcal}}V^{\pi^{*}}_{P}+ \min_{P\in \Mcal_{\Dcal}}V^{\hat \pi}_{P}- V^{\hat \pi}_{P^{\star}} \tag{definition of $\hat \pi$}\\ 
      &\leq  V^{\pi^{*}}_{P^{\star}}-\min_{P\in \Mcal_{\Dcal}}V^{\pi^{*}}_{P}  \tag{Fist step, $P^{\star}\in \Mcal_{\Dcal}$}\\
      &\lesssim  (1-\gamma)^{-2} c_1\sqrt{C^{\dagger}_{\pi^{*}}}\sqrt{\frac{\ln(|\Mcal|c_2/\delta)}{n}}.   \tag{From \pref{eq:third}}
\end{align*}

\begin{remark}[To compete with all history-dependent polices]\label{rem:history}
Consider the case where $\Pi$ is all Markovian polices. We want to show we can compete with all history-dependent non-Markovian polices: 
\begin{align*}
   \bar \Pi=\braces{ \prod_{i=1}^{\infty} \pi_i \mid \pi_i\in \bracks{\prns{\prod_{k=1}^{i-1} \Scal\times \Acal}\to \Delta(\Acal)} }.
\end{align*}
We take an element $\pi^{*}$ from $\bar \Pi$. Then, $V^{\pi^{*}}_{P^{\star}}$ and $d^{\pi^{*}}_{P^{\star}}$ are still well-defined. Then, every step in the proof still holds. The only step we need to check carefully is this line: 
\begin{align*}
    V^{\pi^{*}}_{P^{\star}}-V^{\hat \pi}_{P^{\star}}&\leq    V^{\pi^{*}}_{P^{\star}}-\min_{P\in \Mcal_{\Dcal}}V^{\pi^{*}}_{P}+ \min_{P\in \Mcal_{\Dcal}}V^{\pi^{*}}_{P}- V^{\hat \pi}_{P^{\star}}\\ 
    &\leq    V^{\pi^{*}}_{P^{\star}}-\min_{P\in \Mcal_{\Dcal}}V^{\pi^{*}}_{P}+ \min_{P\in \Mcal_{\Dcal}}V^{\hat \pi}_{P}- V^{\hat \pi}_{P^{\star}}. 
\end{align*}
This is proved by $\max_{\pi \in \bar \Pi}V^{\pi}_{P}= \max_{\pi \in \Pi}V^{\pi}_{P}$ for any $P$. 

\end{remark}

{\newedit 
\subsection{Proofs for general function approximation for CPPO-LR with infinite hypothesis class (Proof of \pref{thm:version3}) }

We firsts show two lemmas as building blocks to prove the main statement.

\begin{lemma}\label{lem:first_type}
Set $\epsilon = 1/(n\iota(\Scal))$. With probability $1-\delta$, for any $P\in \Mcal$, 
\begin{align*}
    \EE_{(s,a)\sim \rho}[ \text{TV}( P(\cdot \mid s,a),P^{\star}(\cdot \mid s,a) )^2 ] \leq    2\EE_{\Dcal}[\log(P^{\star}(s'\mid s,a)/P(s'\mid s,a)) ] + 4n^{-1}\log (c_1 N_{[]}(\epsilon, \Mcal, \|\cdot\|_{\infty}/\delta). 
\end{align*}
\end{lemma}
\begin{proof}
 Take a $\epsilon = 1/(n\iota(\Scal))$-bracket $\{[u_i,l_i]\}$ and denote the set of upper bounds $\{l_i\}$ by $\tilde \Mcal$. Using the proof of \citet[Lemma 25 and Theorem 21]{Agarwal2020_flambe}, for any $\tilde P\in \tilde \Mcal$, 
\begin{align*}
   \EE_{(s,a)\sim \rho}[\text{TV}( \tilde P(\cdot \mid s,a),P^{\star}(\cdot \mid s,a) )^2  ] \leq  \EE_{\Dcal}[\log(P^{\star}(s'\mid s,a)/\tilde P(s'\mid s,a)) ] + 2n^{-1}\log (|\tilde \Mcal|/\delta). 
\end{align*}
Here, for any $P \in \Mcal$, we can take $\tilde P \in \tilde \Mcal$ such that 
\begin{align}
    \EE_{\Dcal}[\log(P^{\star}(s'\mid s,a)/\tilde P(s'\mid s,a)) ]\leq \EE_{\Dcal}[\log(P^{\star}(s'\mid s,a)/  P(s'\mid s,a)) ]. 
\end{align}
Besides, it satisfies 
\begin{align*}
   &\EE_{(s,a)\sim \rho}[\text{TV}( P(\cdot \mid s,a),P^{\star}(\cdot \mid s,a) )^2 ]\\
   &\leq  2\EE_{(s,a)\sim \rho}[\text{TV}( \tilde P(\cdot \mid s,a),P^{\star}(\cdot \mid s,a) )^2 ]+ 2\EE_{(s,a)\sim \rho}[\text{TV}( P(\cdot \mid s,a),\tilde P(\cdot \mid s,a) )^2 ] \\
   &\leq  2\EE_{(s,a)\sim \rho}[\text{TV}( \tilde P(\cdot \mid s,a),P^{\star}(\cdot \mid s,a) )^2 ]+ 2n^{-2}. 
\end{align*}
This concludes the statement. In the last line, we use 
\begin{align}\label{eq:tv}
    \text{TV}( \tilde P(\cdot \mid s,a),P^{\star}(\cdot \mid s,a) ) &=0.5 \int_{s'\in \Scal} | \tilde P(s' \mid s,a)-P^{\star}(s' \mid s,a)|\mathrm{d}\iota(s')\\
    &\leq 0.5 \iota(\Scal)/n \times \iota(\Scal) = 0.5/n. \nonumber
\end{align}
\end{proof}

Next, we show the following lemma.  

\begin{lemma}\label{lem:second_type}
Set $\epsilon = 1/(n\iota(\Scal))$. With probability $1-\delta$, for any $P \in \Mcal$, we have 
 \begin{align*}
     \EE_{\Dcal}[\log (P^{\star}/P)(s'\mid s,a) ]\geq -n^{-1} \log(N_{[]}(\epsilon,\Mcal,\|\cdot\|_{\infty})/\delta). 
 \end{align*}
\end{lemma}
\begin{proof}
We use Cramer-Chernoff's method. Take a  $1/(n\iota(\Scal))$-bracket $\{[u_i,l_i]\}$ and denote the set of upper bounds $\{l_i\}$ by $\tilde M$ and denote it by $\tilde \Mcal$. For $\tilde P\in \tilde \Mcal$, we have 
\begin{align*}
    & \EE\bracks{\exp\prns{ \sum_{i=1}^n \log \bracks{ \frac{\tilde P(s'^{(i)}\mid s^{(i)},a^{(i)})}{P^{\star}(s'^{(i)}\mid s^{(i)},a^{(i)})} } } } \\ 
    & \leq \EE\bracks{\exp\prns{ \sum_{i=1}^{n-1} \log \bracks{ \frac{\tilde P(s'^{(i)}\mid s^{(i)},a^{(i)})}{P^{\star}(s'^{(i)}\mid s^{(i)},a^{(i)})} } }\frac{\tilde P(s'^{(n)}\mid s^{(n)},a^{(n)})}{P^{\star}(s'^{(n)}\mid s^{(n)},a^{(n)})}  }\\
    & \leq \EE\bracks{\exp\prns{ \sum_{i=1}^{n-1} \log \bracks{ \frac{\tilde P(s'^{(i)}\mid s^{(i)},a^{(i)})}{P^{\star}(s'^{(i)}\mid s^{(i)},a^{(i)})} } }} \EE\bracks{\frac{\tilde P(s'^{(n)}\mid s^{(n)},a^{(n)})}{P^{\star}(s'^{(n)}\mid s^{(n)},a^{(n)})}  }\\
     & \leq \EE\bracks{\exp\prns{ \sum_{i=1}^{n-1} \log \bracks{ \frac{\tilde P(s'^{(i)}\mid s^{(i)},a^{(i)})}{P^{\star}(s'^{(i)}\mid s^{(i)},a^{(i)})} } }}\{1+1/n\} \tag{Use \pref{eq:tv}}\\
    &\leq ....\leq  (1+1/n)^n\leq \epsilon. 
\end{align*}
Hence by Markov's inequality, we have 
\begin{align*}
    \PP\prns{ \sum_{i=1}^n \log \bracks{ \frac{\tilde P(s'^{(i)}\mid s^{(i)},a)}{P^{\star}(s'\mid s,a)} } > \log(1/\delta)} \leq e \delta. 
\end{align*}
By taking a union bound, for any $\tilde P \in \tilde \Mcal$, we obtain 
\begin{align*}
    \PP\prns{ \sum_{i=1}^n \log \bracks{ \frac{\tilde P(s'^{(i)}\mid s^{(i)},a^{(i)})}{P^{\star}(s'^{(i)}\mid s^{(i)},a^{(i)})} } > \log( |\tilde \Mcal |/\delta)} \leq e \delta. 
\end{align*}
Finally, noting for any $P \in \Mcal$, there exists $\tilde P\in \tilde \Mcal$ s.t. $P(s'\mid s,a)\leq \tilde P(s'\mid s,a)$, we have for any $ P \in \tilde \Mcal$,  
\begin{align*}
    \PP\prns{ \sum_{i=1}^n \log \bracks{ \frac{P(s'^{(i)}\mid s^{(i)},a^{(i)})}{P^{\star}(s'^{(i)}\mid s^{(i)},a^{(i)})} } > \log( |\tilde \Mcal |/\delta)} \leq e \delta. 
\end{align*}
    
\end{proof}

We condition on events where \pref{lem:first_type} and \pref{lem:second_type} hold. 

\paragraph{First Step (pessimism).}

Lemma~\ref{lem:second_type} tells us that  $P^{\star} \in \bar \Mcal_{\Dcal}$.  

\paragraph{Second Step.}

Lemma~\ref{lem:first_type} implies for any $P \in \bar \Mcal_{\Dcal}$, 
\begin{align*}
    \EE_{(s,a)\sim \rho}[\text{TV}( P(\cdot \mid s,a),P^{\star}(\cdot \mid s,a) )^2 ]\leq n^{-1} \log(N_{[]}(\epsilon,\Mcal,\|\cdot\|_{\infty})/\delta) 
\end{align*}
using the definition of $P \in \bar \Mcal_{\Dcal}$.

\paragraph{Third step: calculate the final bound taking the distribution shift into account.}

For any $P\in \bar \Mcal_{\Dcal}$, we prove 
\begin{align}\label{eq:third_lr}
           V^{\pi^{*}}_{P^{\star}}-V^{\pi^{*}}_{P} \leq (1-\gamma)^{-2} c\sqrt{C^{\dagger}_{\pi^{*}}}\sqrt{\frac{\ln(N_{[]}(\epsilon,\Mcal,\|\cdot\|_{\infty})/\delta)}{n}}. 
\end{align}
For any $P\in \bar \Mcal_{\Dcal}$, this is proved as follows:  
\begin{align*}
  V^{\pi^{*}}_{P^{\star}}-       V^{\pi^{*}}_{P} & \leq (1-\gamma)^{-2}\EE_{(s,a)\sim d^{\pi^{*}}_{P^{\star}}}[\TV(P(\cdot \mid s,a),P^\star(\cdot \mid s,a))] \tag{Simulation lemma}\\
        &\leq  (1-\gamma)^{-2}\sqrt{\EE_{(s,a)\sim d^{\pi^{*}}_{P^{\star}}}[\TV(P(\cdot \mid s,a),P^\star(\cdot \mid s,a))^2]}\\
       &\leq (1-\gamma)^{-2} \sqrt{C^{\dagger}_{\pi^{*}}\EE_{(s,a)\sim \rho}[\TV(P(\cdot \mid s,a),P^\star(\cdot \mid s,a))^2]} \\
        &\leq  c(1-\gamma)^{-2} \sqrt{C^{\dagger}_{\pi^{*}}}\sqrt{\frac{\ln(N_{[]}(\epsilon,\Mcal,\|\cdot\|_{\infty})/\delta)}{n}} \tag{Based on the consequence of the second step}. 
\end{align*}

Combining all things together, with probability $1-\delta$, for any $\pi^{*}\in \Pi$, we have 
\begin{align*}
    V^{\pi^{*}}_{P^{\star}}-V^{\hat \pi}_{P^{\star}}&\leq    V^{\pi^{*}}_{P^{\star}}-\min_{P\in \Mcal_{\Dcal}}V^{\pi^{*}}_{P}+ \min_{P\in \Mcal_{\Dcal}}V^{\pi^{*}}_{P}- V^{\hat \pi}_{P^{\star}}\\ 
    &\leq    V^{\pi^{*}}_{P^{\star}}-\min_{P\in \Mcal_{\Dcal}}V^{\pi^{*}}_{P}+ \min_{P\in \Mcal_{\Dcal}}V^{\hat \pi}_{P}- V^{\hat \pi}_{P^{\star}} \tag{definition of $\hat \pi$}\\ 
      &\leq  V^{\pi^{*}}_{P^{\star}}-\min_{P\in \Mcal_{\Dcal}}V^{\pi^{*}}_{P}  \tag{First step, $P^{\star}\in \Mcal_{\Dcal}$}\\
      &\lesssim  (1-\gamma)^{-2} c_1\sqrt{C^{\dagger}_{\pi^{*}}}\sqrt{\frac{\ln( N_{[]}(\epsilon,\Mcal,\|\cdot\|_{\infty}) c_2/\delta)}{n}}.   \tag{From \pref{eq:third_lr}}
\end{align*}
}

\section{Missing Proofs in \pref{sec:examples}}

\subsection{Proofs for Tabular MDPs (Proof of Corollary \ref{cor:tabular})}

Here, we show the result for CPPO-TV. The result in CPPO-LR is obtained in the proof of Corollary \ref{cor:linear_mixture}. We prove in a similar way as \pref{thm:version}. 

\paragraph{First step.}
We set $\xi=c \frac{|\Scal|^2|\Acal|\ln (n|\Scal|\Acal|c_2/\delta)\}}{n}. $  Then, from \pref{lem:tabular_mle}, with probability $1-\delta$, we can show $P^{\star}\in \Mcal_{\Dcal}$ since 
\begin{align*}
     \mathbb{E}_{s,a\sim \Dcal}\bracks{\TV(\widehat{P}_{\MLE}(\cdot \mid s,a),P^\star(\cdot \mid s,a))^2} \leq \xi. 
\end{align*}
Hereafter, we condition on the above event. 

\paragraph{Second step. }

Following the second step in the proof of \pref{thm:version} based on \pref{eq:key_version}, for any $P\in \Mcal_{\Dcal}$, we have
\begin{align}\label{eq:second_bound}
    \EE_{s,a\sim \rho}\bracks{ \TV(P(\cdot \mid s,a),P^\star(\cdot \mid s,a))^2}\leq  c\xi+ A(P) 
\end{align}
where
\begin{align*}
    A(P)\coloneqq |\EE_{s,a\sim \rho} [\TV(P(\cdot \mid s,a),P^\star(\cdot \mid s,a))^2] -\EE_{\Dcal} [\TV(P(\cdot \mid s,a),P^\star(\cdot \mid s,a))^2]|. 
\end{align*}
Our goal here is showing with probability $1-\delta$, 
\begin{align}\label{eq:empirical}
   A(P) \lesssim \xi,\forall P\in \Mcal_{\Dcal}. 
\end{align}

To prove \pref{eq:empirical}, consider an $\epsilon$-net $\{P_1(s,a),\cdots,P_{K}(s,a)\}$ covering a simplex in terms of $\|\cdot\|_{1}$ \footnote{In the tabular setting, since the state space is countable, it is equivalent to L1 distance.} for each fixed pair $(s,a)\in \Scal \times \Acal$. We take $\epsilon=1/n$. Since the covering number $K$ is upper-bounded by $(c/\epsilon)^{|\Scal|}$ \citep[Lemma 5.7]{WainwrightMartinJ2019HS:A}, we can obtain $\bar M=\{P_1,\cdots,P_{K^{|\Scal| \times|\Acal|} } \}$ s.t. for any possible $P\subset \Scal\times \Acal\to \Delta(\Scal)$, there exists $P_i$ s.t. $$\TV(P_i(\cdot \mid s,a),P(\cdot \mid s,a))\leq \epsilon,\forall (s,a).$$ 
This implies for any $P\subset \Scal\times \Acal\to \Delta(\Scal)$, there exists $P_i(\cdot\mid s,a)$ s.t. $\forall(s,a)$, 
\begin{align}
   &| \TV(P(\cdot \mid s,a),P^\star(\cdot \mid s,a))^2-\TV(P_i(\cdot \mid s,a),P^\star(\cdot \mid s,a))^2| \nonumber  \\
& \leq 4 | \TV(P(\cdot \mid s,a),P^\star(\cdot \mid s,a))-\TV(P_i(\cdot \mid s,a),P^\star(\cdot \mid s,a))| \tag{$a^2-b^2=(a-b)(a+b)$}\\
   &\leq 4 \TV(P\cdot \mid s,a),P_i(\cdot \mid s,a)) \tag{$|\|a\|-\|b\||\leq \|a-b\|$ }\\
   &\leq 4 \epsilon.  \label{eq:epsilon}
\end{align}

\begin{figure}[!tbp]
    \centering
    \includegraphics[width=0.5\textwidth]{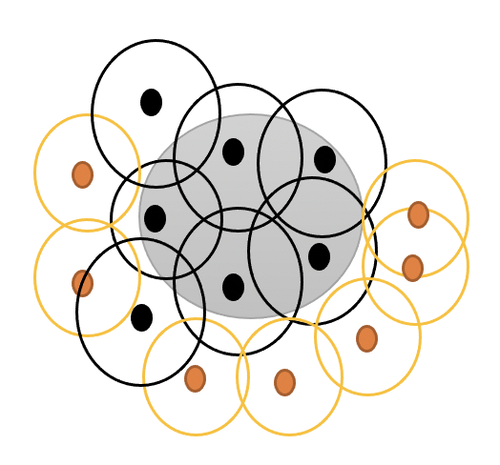}
    \caption{$\Mcal_{\Dcal}$ is colored in gray. $\Mcal'$ corresponds to the set of black dots. Orange dots correspond to $\bar \Mcal$, which do not belong to $\Mcal'$. }
    \label{fig:illusration}
\end{figure}
We often use this property \pref{eq:epsilon} hereafter.   

Next, we define $\Mcal'\subset \bar M$ so that it covers $\Mcal_{\Dcal}$. Concretely, we define $\Mcal'$: 
\begin{align}\label{eq:impo_fact}
    \Mcal'=\{P\in \bar M: \exists P''\in \Mcal_{\Dcal} , \TV(P(\cdot \mid s,a),P''(\cdot \mid s,a))\leq \epsilon \quad \forall(s,a) \}. 
\end{align}
The construction is illustrated  in \pref{fig:illusration}. Here, from the definition, for any $P\in \Mcal_{\Dcal}$ , we can also find $P'\in \Mcal'$ s.t. $$\TV(P(\cdot \mid s,a),P'(\cdot \mid s,a))\leq \epsilon, \forall(s,a).$$
This is because from the definition of $\bar M$, we can always find $P\in \bar \Mcal$ satisfying the above. Such $P$ belongs to $\Mcal'$ from the definition of $\Mcal'$. We use this fact later. 

 Then, from \pref{eq:impo_fact} and recalling \pref{eq:second_bound}, we have 
\begin{align}\label{eq:third_bound}
    \EE_{s,a\sim \rho}\bracks{ \TV(P(\cdot \mid s,a),P^\star(\cdot \mid s,a))^2}\lesssim  \xi+ A(P),\quad \forall P \in \Mcal'. 
\end{align}
because
\begin{align*}
        &\EE_{s,a\sim \rho}\bracks{ \TV(P(\cdot \mid s,a),P^\star(\cdot \mid s,a))^2}\\
        &\leq \EE_{s,a\sim \rho}\bracks{ \TV(P(\cdot \mid s,a),P''(\cdot \mid s,a))^2}+\EE_{s,a\sim \rho}\bracks{ \TV(P''(\cdot \mid s,a),P^\star(\cdot \mid s,a))^2}\\
        & \leq \EE_{s,a\sim \rho}\bracks{\TV(P''(\cdot \mid s,a),P^\star(\cdot \mid s,a))^2 }+\epsilon^2 \tag{Take some $P''\in \Mcal_{\Dcal}$ noting \pref{eq:impo_fact}}\\
        &\leq c\xi+ A(P).  \tag{From \pref{eq:second_bound}} 
\end{align*}

Then, with probability $1-\delta$, from Bernstein's inequality, we have
\begin{align*}
    A(P)   \leq \sqrt{c\frac{\mathrm{var}[ \TV(P(\cdot \mid s,a),P^\star(\cdot \mid s,a))^2]\ln(K^{|\Scal| \times|\Acal|}/\delta)}{n}}+\frac{c \ln(K^{|\Scal| \times|\Acal|}/\delta)}{n},\forall P\in \Mcal . 
\end{align*}
Hereafter, we condition on the above event. Based on \pref{eq:third_bound},  we can state 
\begin{align*}
    \mathrm{var}[ \TV(P(\cdot \mid s,a),P^\star(\cdot \mid s,a))^2] \lesssim \mathrm{E}[ \TV(P(\cdot \mid s,a),P^\star(\cdot \mid s,a))^2]\lesssim \xi+ A(P),\quad \forall P \in \Mcal', 
\end{align*}
with probability $1-\delta$. Following the argument of \pref{thm:version}, for $P\in \Mcal'$, we have 
\begin{align*}
    A^2(P)-A(P) B_1-B_2\leq 0, \quad B_1=\frac{\ln(K^{|\Scal| \times|\Acal|}/\delta)}{n}, \quad B_2= \xi \frac{\ln(K^{|\Scal| \times|\Acal|}/\delta)}{n}+\prns{\frac{\ln(K^{|\Scal| \times|\Acal|}/\delta)}{n}}^2. 
\end{align*}
Then, with probability $1-\delta$, we have 
\begin{align}\label{eq:union}
    A(P)\leq \frac{\ln(K^{|\Scal| \times|\Acal|}/\delta)}{n}+\sqrt{\frac{\ln(K^{|\Scal| \times|\Acal|}/\delta)}{n}}\xi^{1/2}\lesssim \xi,\quad \forall P\in \Mcal'. 
\end{align}
 This shows  for any $P\in \Mcal_{\Dcal}$, we have 
\begin{align*}
      &|\{\EE_{\Dcal}-\EE_{(s,a)\sim \rho} \}[\TV(P(\cdot \mid s,a),P^\star(\cdot \mid s,a))^2]| \\
      &\leq |\{\EE_{\Dcal}-\EE_{(s,a)\sim \rho} \}[\TV(P'(\cdot \mid s,a),P(\cdot \mid s,a))^2+\TV(P'(\cdot \mid s,a),P^\star(\cdot \mid s,a))^2 ]| \tag{We take $P'\in \Mcal'$ such that \pref{eq:impo_fact}} \\
      &\leq |\{\EE_{\Dcal}-\EE_{(s,a)\sim \rho} \}[\TV(P'(\cdot \mid s,a),P^\star(\cdot \mid s,a))^2]+8\epsilon \tag{From the definition of $\Mcal'$} \\ 
      &\lesssim  \xi.  \tag{From \pref{eq:union} and $P'\in \Mcal'$}
\end{align*}
Thus, \pref{eq:empirical} is proved. 

\paragraph{Third step.  }

We follow the third step of \pref{thm:version}: 
\begin{align*}
        V^{\pi^{*}}_{P^{\star}}-V^{\hat \pi}_{P^{\star}}  \lesssim (1-\gamma)^{-2}\sqrt{C^{\dagger}_{\pi^{*}}\xi}. 
\end{align*}

\subsection{Proofs for Linear Mixture  MDPs (Proof of Corollary \ref{cor:linear_mixture})}

We follow the way in Theorem \ref{thm:version2}. Let $P(\theta)=\theta^{\top}\psi(s,a,s')$.

We first calculate the bracketing number. By letting $\theta^{(1)},\cdots,\theta^{(K)}$ be an $\epsilon$-cover of the $d$-dimensional ball with a radius $R$, i.e, $B_d(R)$, we have the brackets $\{[P(\theta^{(i)})-\epsilon,P(\theta^{(i)})+\epsilon]\}_{i=1}^{K}$, which cover $\Mcal_{mix}$. This is because for any $P(\theta)\in \Mcal_{mix}$, we can take $\theta^{(i)}$ s.t. $\|\theta-\theta^{(i)}\|_2\leq \epsilon $, then, 
\begin{align*}
    P(\theta^{(i)})-\epsilon<P(\theta)<     P(\theta^{(i)})+\epsilon,\quad \forall(s,a,s')
\end{align*}
noting 
\begin{align}\label{eq:braket}
|P(\theta)(s,a,s') -P(\theta^{(i)})(s,a,s')|\leq \|\theta-\theta^{(i)}\|_2\leq \epsilon, \quad \forall(s,a,s')
\end{align}
The last equality is from \pref{lem:mixture}. 

The brackets above are size of $\epsilon$. Therefore, we have
\begin{align*}
    \Ncal_{[]}(\epsilon,\Mcal_{mix},\|\cdot\|_2)\leq \Ncal(\epsilon, B_d(cR),\|\cdot\|_2), 
\end{align*}
where $\Ncal(\epsilon, B_d(cR),\|\cdot\|_2)$ is a covering number of $ B_d(cR)$ w.r.t $\|\cdot\|_2$. This is upper-bounded by $(c R/\epsilon)^d$ \citep[Lemma 5.7]{WainwrightMartinJ2019HS:A}.  

\paragraph{First Step (pessimism).}

Lemma 2 tells us that  $P(\theta^{\star}) \in \Mcal_{\Dcal}$.  

\paragraph{Second Step.}

Lemma 1 implies for any $P(\theta) \in \Mcal_{\Dcal}$, 
\begin{align*}
    \EE_{(s,a)\sim \rho}[\TV(P(\theta)(\cdot \mid s,a), P(\theta^{\star})(\cdot \mid s,a))^2 ]\leq \beta. 
\end{align*}
using the definition of $P(\theta) \in \Mcal_{\Dcal}$. 

\paragraph{Third step: distribution shift part.}

Here, for $P\in \Mcal_{\Dcal}$ we prove 
\begin{align}\label{eq:inter_gooal}
      V^{\pi^{*}}_{P^{\star}}-       V^{\pi^{*}}_{P} &\lesssim (1-\gamma)^{-2}\sqrt{dC_{\pi^{*},\mathrm{mix}}\beta},\\
       V^{\pi^{*}}_{P^{\star}}-       V^{\pi^{*}}_{P} &\lesssim (1-\gamma)^{-2}\sqrt{C^{\dagger}_{\pi^{*}}\beta}. \label{eq:inter_gooal2}
\end{align}
Following the third step of the proof of Theorem~\ref{thm:version3}, this immediately concludes the bound 
\begin{align*}
      V^{\pi^{*}}_{P^{\star}}-        V^{\hat \pi}_{P^{\star}} &\lesssim (1-\gamma)^{-2}\sqrt{d C_{\pi^{*},\mathrm{mix}}\beta},\\
        V^{\pi^{*}}_{P^{\star}}-        V^{\hat \pi}_{P^{\star}}&\lesssim (1-\gamma)^{-2}\sqrt{C^{\dagger}_{\pi^{*}}\beta}. 
\end{align*}

Since \pref{eq:inter_gooal2} is obvious from simulation lemma, we only prove \pref{eq:inter_gooal}. To prove \pref{eq:inter_gooal}, we take a distribution $P(\theta)\in \Mcal_{\Dcal}$. First, recall for $P(\theta)\in \Mcal_{\Dcal}$,  we have 
\begin{align*}
   \EE_{(s,a)\sim \rho}[\TV(P(\theta^{\star})(\cdot \mid s,a),P(\theta)(\cdot \mid s,a))^2]\lesssim  \beta. 
\end{align*}
From the third statement of \pref{lem:mixture}, for any $V:\Scal \to [0,1]$, we have 
\begin{align*}
   \E_{(s,a)\sim \rho}[ |(\theta-\theta^{*})^{\top}\psi_{V}(s,a)|^2]\lesssim \beta.  
\end{align*}
Thus,
\begin{align*}
 \forall V:\Scal \to [0,1],\quad (\theta-\theta^{*})^{\top}  \Sigma_{\rho,V }(\theta-\theta^{*})\lesssim \beta,\quad \Sigma_{\rho,V }=\E_{(s,a)\sim \rho}[\psi_{V}(s,a)\psi^{\top}_{V}(s,a)]. 
\end{align*}
Here, we have 
\begin{align*}
  V^{\pi^{*}}_{P^{\star}}-       V^{\pi^{*}}_{P} & \leq (1-\gamma)^{-1}\left|\EE_{(s,a)\sim d^{\pi^{*}}_{P^{\star}}}\bracks{\int \{P(s' \mid s,a) - P^\star(s' \mid s,a)\}V^{\pi^{*}}_{P}(s')\rd(s')  }\right| \tag{Simulation lemma, \pref{lem:simulation}}\\
    &\leq  (1-\gamma)^{-1}\left|\EE_{(s,a)\sim d^{\pi^{*}}_{P^{\star}}}\bracks{(\theta-\theta^{*})\psi_{V^{\pi^{*}}_{P}}(s,a) }\right| \tag{Recall $\psi_{V}=\int \psi(s,a,s')V^{\pi^{*}}_{P}(s')\rd(s')$} \\ 
    &\leq  (1-\gamma)^{-1}\underbrace{\|\theta-\theta^{*}\|_{\lambda I+\Sigma_{\rho,V^{\pi^{*}}_{P} }}}_{(a)}\underbrace{\EE_{(s,a)\sim d^{\pi^{*}}_{P^{\star}}}  \bracks{\|\psi_{V^{\pi^{*}}_{P}}(s,a)\|_{(\Sigma_{\rho,V^{\pi^{*}}_{P} }+\lambda I)^{-1}} }}_{(b)}.  \tag{CS inequality}
\end{align*}
The first term (a) is upper-bounded by $\sqrt{\{(1-\gamma)^{-2} \beta+\lambda R^2 \}}$ noting $0\leq V^{\pi^*}_P\leq (1-\gamma)^{-1}$. The term (b) is  upper-bounded by 
\begin{align*}
    \EE_{(s,a)\sim d^{\pi^{*}}_{P^{\star}}}  \bracks{\|\psi_{V^{\pi^{*}}_{P}}(s,a)\|_{{(\Sigma_{\rho,V^{\pi^{*}}_{P} }+\lambda I)^{-1}} } } &\leq     \EE_{(s,a)\sim d^{\pi^{*}}_{P^{\star}}}  \tag{Jensen's inequality} \bracks{\|\psi_{V^{\pi^{*}}_{P}}(s,a)\|^2_{{(\Sigma_{\rho,V^{\pi^{*}}_{P} }+\lambda I)^{-1}} } }^{1/2} \\
    &=  \sqrt{\Tr( \Sigma_{d^{\pi^{*}}_{P^{\star}},V^{\pi^{*}}_{P} }(\lambda I+\Sigma_{\rho,V^{\pi^{*}}_{P} })^{-1} )  }\\
    &\leq \sqrt{C_{\pi^{*},\mathrm{mix}}\Tr( \Sigma_{\rho,V^{\pi^{*}}_{P} }(\lambda I+\Sigma_{\rho,V^{\pi^{*}}_{P} })^{-1} )  } \tag{From \pref{lem:distribution_shift}}\\
    &\leq  \sqrt{C_{\pi^{*},\mathrm{mix}} \rank(\Sigma_{\rho,V^{\pi^{*}}_{P} })}\leq \sqrt{C_{\pi^{*},\mathrm{mix}}d } . 
\end{align*}
By taking $\lambda$ s.t. $\lambda R^2 \lesssim (1-\gamma)^{-2} \beta$, \pref{eq:inter_gooal} is proved. 

For linear MDPs, from the fourth statement of \pref{lem:mixture}, $ C_{\pi^{*},\mathrm{mix}}\leq \bar C_{\pi^{*}}$. Then, the statement is concluded.

\subsection{Proofs for Low-rank MDPs (Proof of \pref{thm:low_rank})}

Until the second step, we can perform the same analysis as \pref{thm:version}. More concretely, with probability $1-\delta$, we have $P^{\star}\in \Mcal_{\Dcal}$ and 
\begin{align}\label{eq:second_step_guara}
    \EE_{s,a\sim \rho} [\TV(P(\cdot \mid s,a),P^\star(\cdot \mid s,a))^2]\leq  \xi,\quad \forall P\in \Mcal_{\Dcal}, \xi\coloneqq c\frac{\ln(|\Mcal|/\delta)}{n}. 
\end{align}
Hereafter, we condition on the above event. 

Letting $f(s,a)=\TV(P(\cdot \mid s,a), P^\star(\cdot \mid s,a))$, we use \pref{lem:useful_offline2}, which will be showed later. Then, 
\begin{align*}
    \E_{(s,a)\sim d^{\pi}_{P^{\star}}}[f(s,a)]\leq \E_{(s,a)\sim d^{\pi}_{P^{\star}}}[\|\phi^{\star}(s,a)\|_{\Sigma^{-1}_{\rho,\phi^{\star}}}]\sqrt{n\gamma\omega_{\pi}\E_{\rho}[f^2(s,a)]+4\gamma^2 \lambda d}+\sqrt{(1-\gamma)\omega_{\pi}\E_{\rho}[f^2(s,a)]} 
\end{align*}
where  $\Sigma_{\rho,\phi^{\star}}=n \E_{\rho}[\phi^{\star}{\phi^{\star}}^{\top}]+\lambda I$. 
We consider how to bound $\E_{(s,a)\sim d^{\pi}_{P^{\star}}}[\|\phi^{\star}(s,a)\|_{\Sigma^{-1}_{\rho,\phi^{\star}}}]$. This is upper-bounded by
\begin{align*}
    \E_{(s,a)\sim d^{\pi}_{P^{\star}}}[\|\phi^{\star}(s,a)\|_{\Sigma^{-1}_{\rho,\phi^{\star}}}] &\leq \sqrt{\tr(\E_{(s,a)\sim d^{\pi}_{P^{\star}}}[\phi^{\star}{\phi^{\star}}^{\top}]\Sigma^{-1}_{\rho,\phi^{\star}})}\\ 
    &\leq \sqrt{\bar C_{\pi,\phi^{\star}}\tr(\E_{(s,a)\sim \rho}[\phi^{\star}{\phi^{\star}}^{\top}]\Sigma^{-1}_{\rho,\phi^{\star}})} \tag{From \pref{lem:distribution_shift}}\\ 
    &\leq \sqrt{\bar C_{\pi,\phi^{\star}}\rank(\Sigma_{\rho})/n}. 
\end{align*}
Here, in the last line, by letting the SVD of $\Sigma_{\rho}=\E_{\rho}[\phi\phi^{\top}]$ be $U\tilde \Sigma_{\rho}U^{\top}$ where $\tilde \Sigma_{\rho}$ is a $d\times d$ diagonal matrix and $U$ is a  $d\times d$ orthogonal matrix ,  we use 
\begin{align*}
    \tr\left( \Sigma_{\rho} \Sigma^{-1}_{\rho,\phi^{\star}}  \right)&=\mathrm{tr}(U\tilde \Sigma_{\rho} U^{\top}\{n U\tilde \Sigma_{\rho} U^{\top}+\lambda I\}^{-1})= \mathrm{tr}(\tilde \Sigma_{\rho} U^{\top}\{n U\tilde \Sigma_{\rho} U^{\top}+\lambda I\}^{-1}U)\\
    &= \mathrm{tr}(\tilde \Sigma_{\rho} U^{\top}\{U\{n \tilde\Sigma_{\rho}+\lambda I\} U^{\top}\}^{-1}U)\\
      &= \mathrm{tr}(\tilde \Sigma_{\rho} U^{\top}U\{n \tilde\Sigma_{\rho}+\lambda I \}^{-1}U^{\top}U)\\
          &= \mathrm{tr}(\tilde \Sigma_{\rho} \{n \tilde\Sigma_{\rho}+\lambda I\})^{-1}\leq \rank(\Sigma_{\rho})/n. 
\end{align*}

Hence, when $P\in \Mcal_{\Dcal}$, by setting $\lambda$ s.t. $\lambda d\lesssim n\omega_{\pi}\xi $, we have
\begin{align*}
      \E_{(s,a)\sim d^{\pi}_{P^{\star}}}[f(s,a)]\leq \sqrt{\frac{\gamma\bar C_{\pi^{*},\phi^{\star}}\rank(\Sigma_{\rho}) \omega_{\pi}\ln(|\Mcal|/\delta)}{n}}+\sqrt{\frac{(1-\gamma)\omega_{\pi}\ln(|\Mcal|/\delta)}{n}}. 
\end{align*}
We use \pref{eq:second_step_guara} here.

Finally, 
\begin{align*}
    & V^{\pi^{*}}_{P^{\star}}-V^{\hat \pi}_{P^{\star}}\\
    &\leq  V^{\pi^{*}}_{P^{\star}}-\min_{P\in \Mcal_{\Dcal}}V^{\pi^{*}}_{P}  \tag{Recall the proof of the third step in the proof of \pref{thm:version}}\\
    &\leq  (1-\gamma)^{-2} \mathbb{E}_{s,a\sim d_{P^\star}^{\pi^\star}} \TV({P'}(s,a) ,P^\star(\cdot \mid s,a)) \tag{$P'=\argmin_{P\in \Mcal_{\Dcal}} V^{\pi^{*}}_{P} $}\\
    &\lesssim \sqrt{\frac{\bar C_{\pi^{*},\phi^{\star}}\rank(\Sigma_{\rho}) \omega_{\pi^{*}}\ln(|\Mcal|/\delta)}{n(1-\gamma)^4}}.
\end{align*}

The following inequality is an important lemma to connect $\EE_{(s,a)\sim d^{\pi}_{P^{\star} }}\braces{f(s,a)}$  with an elliptical potential $\EE_{(\tilde s,\tilde a)\sim d^{\pi}_{P^{\star} }} \|\phi^{\star}(\tilde s,\tilde a)\|_{\Sigma^{-1}_{\rho,\phi^{\star}}}$. 

\begin{lemma}[One-step back inequality]\label{lem:useful_offline2}
Take any $f\subset \Scal\times \Acal \to \RR$ s.t.  $\|f\|_{\infty}\leq B$ and $0<\lambda\in \RR$. Letting  $\omega=\max_{s,a}(\pi(a\mid s)/\pi_b(a\mid s))$, for any policy $\pi$,  we have 
\begin{align*}
|\EE_{(s,a)\sim d^{\pi}_{P^{\star} }}\braces{f(s,a)}| & \leq 
\EE_{(\tilde s,\tilde a)\sim d^{\pi}_{P^{\star} }} \|\phi^{\star}(\tilde s,\tilde a)\|_{\Sigma^{-1}} \sqrt{\braces{n\omega_{\pi}\gamma\EE_{ (s,a)\sim \rho}\bracks{f^2(s,a) }}+\gamma^2 \lambda d B^2}
 \\
&+ \sqrt{(1-\gamma) \omega_{\pi}\EE_{ (s,a)\sim \rho}\bracks{f^2(s,a) } } . 
\end{align*}
where $\Sigma=n \E_{(s,a)\sim \rho}[\phi^{\star}(s,a){\phi^{\star}}^{\top}(s,a)]+\lambda I$. 

\end{lemma}
\begin{proof}[Proof of Lemma~\ref{lem:useful_offline2}]

First, we have an equality: 
\begin{align}\label{eq:first_offline}
    \EE_{(s,a)\sim d^{\pi}_{P^{\star} }}\braces{f(s,a)}=\gamma   \EE_{(\tilde s,\tilde a)\sim d^{\pi}_{P^{\star} },s\sim P^{\star}(\tilde s,\tilde a)}\braces{f(s,a)}+(1-\gamma)\EE_{s \sim d_0, a\sim \pi(s_0)}\braces{f(s,a)}. 
\end{align}

The second term in \pref{eq:first_offline}  is upper-bounded by 
\begin{align*}
\EE_{s \sim d_0, a\sim \pi(s_0)}\braces{f(s,a)}\leq \EE_{s \sim d_0, a\sim \pi(s_0)}\braces{f^2(s,a)}\}^{1/2}=  \sqrt{\omega_{\pi}\EE_{ (s,a)\sim \rho}\bracks{f^2(s,a) }/(1-\gamma)  } . 
\end{align*}

Next we consider the first term in \pref{eq:first_offline}. By CS inequality, we have 
\begin{align*}
    &\left |\EE_{(\tilde s,\tilde a)\sim d^{\pi}_{P^{\star} },s\sim P^{\star}(\tilde s,\tilde a)}\braces{f(s,a)}\right|=\left|\EE_{(\tilde s,\tilde a)\sim d^{\pi}_{P^{\star} }}\phi^{\star}(\tilde s,\tilde a)^{\top}\int \hat \mu(s)\pi(a\mid s)f(s,a) d(s,a)\right|\\ 
   &\leq \EE_{(\tilde s,\tilde a)\sim d^{\pi}_{P^{\star} }} \|\phi^{\star}(\tilde s,\tilde a)\|_{\Sigma_{\rho,\phi^{\star}}^{-1}}\|\int \hat \mu(s)\pi(a\mid s)f(s,a) d(s,a)\|_{\Sigma_{\rho,\phi^{\star}}}.
\end{align*} 
Then, 
\begin{align*}
  & \|\int \hat \mu(s)\pi(a\mid s)f(s,a) d(s,a)\|^2_{\Sigma_{\rho,\phi^{\star}}}\\
&\leq  \braces{\int \hat \mu(s)\pi(a\mid s)f(s,a) d(s,a)}^{\top}\braces{n \EE_{(s,a)\sim \rho}[\phi^{\star} {\phi^{\star}}^{\top}]+\lambda I  }\braces{\int \hat \mu(s)\pi(a\mid s)f(s,a) d(s,a)}\\
&\leq  n \braces{\EE_{(\tilde s,\tilde a)\sim \rho}\bracks{\int \hat \mu(s)^{\top}\phi^{\star}(\tilde s,\tilde a)\pi(a\mid s)f(s,a) d(s,a)}}^2+ B^2\lambda d \tag{Use the assumption $\|f(s,a)\|_{\infty}\leq B$ and $\|\int \hat \mu(s)\rd(s)\|_2\leq \sqrt{d}$ }\\
&=  n \braces{\EE_{(\tilde s,\tilde a)\sim \rho, s\sim P^{\star}(\tilde s,\tilde a), a\sim \pi(s)}\bracks{f(s,a) }}^2+ B^2\lambda d \\
&\leq  n \braces{\EE_{(\tilde s,\tilde a)\sim \rho, s\sim P^{\star}(\tilde s,\tilde a), a\sim \pi(s)}\bracks{f^2(s,a) }}+ B^2\lambda d.  \tag{Jensen} 
\end{align*}
Finally, the the first term in \pref{eq:first_offline} is upper-bounded by 
\begin{align*}
   & n  \braces{\EE_{(\tilde s,\tilde a)\sim \rho, s\sim P^{\star}(\tilde s,\tilde a), a\sim \pi(s)}\bracks{f^2(s,a) }}+ \lambda d B^2 \\
   &\leq n\omega_{\pi} \braces{\EE_{(\tilde s,\tilde a)\sim \rho, s\sim P^{\star}(\tilde s,\tilde a), a\sim \pi_b(s)}\bracks{f^2(s,a) }}+ \lambda d  B^2  \tag{Importance sampling}\\
     &\leq n\omega_{\pi} \braces{\frac{1}{\gamma}\EE_{ (s,a)\sim \rho}\bracks{f^2(s,a) }}+ \lambda d B^2. \tag{Definition of $\rho$}
\end{align*}
The final statement is immediately concluded. 

\end{proof}

\subsection{Proofs for Factored MDPs (Proof of \pref{thm:factoed})}\label{subsec:factored_mdp}

We focus on the proof of modified CPPO-TV. The proof of CPPO-LR is similarly completed. 

We denote the constrained set as $\Mcal_{\Dcal}$:
\begin{align*}
    \Mcal_{\Dcal}= \braces{P=\prod_i P_i \mid  \EE_{\Dcal}\bracks{\TV(  \widehat{P}_{\MLE,i}(\cdot \mid s[pa_i],a), P_i(\cdot \mid s[pa_i],a))^2 }\leq \xi_i,\forall i \in [1,\cdots,d] }. 
\end{align*}
Following the first step in the proof of Corollary \pref{cor:tabular}, with probability $1-\delta$, the product $\prod_i P^\star_i$ is in $\Mcal_{\Dcal}$, i.e., 
\begin{align*}\textstyle
     \mathbb{E}_{s,a\sim \Dcal}\bracks{\TV( \widehat{P}_{\MLE,i}(\cdot \mid s[pa_i],a), P^\star_i(\cdot \mid s[pa_i],a))^2 } \leq \xi_i, \forall i \in [1,\cdots,d], \quad \xi_i=\sqrt{\frac{L_i\log(L_i d/\delta)}{n}}. 
\end{align*}
Note $d$ comes from the union bound. Besides, following the second step in the proof of Corollary \pref{cor:tabular}, for any $P\in \Mcal_{\Dcal}$, with probability $1-\delta$, 
\begin{align*}
     \mathbb{E}_{s,a\sim \rho}\bracks{TV( \widehat{P}_{i}(\cdot \mid s[pa_i],a), P^\star_i(\cdot \mid s[pa_i],a))^2} \leq \xi_i,\forall i \in [1,\cdots,d]. 
\end{align*}

After conditioning on the above two events, then, for any $P\in \Mcal_{\Dcal}$ and $\pi^{\star}\in \Pi$, we have 
\begin{align*}
  V^{\pi^{*}}_{P^{\star}}-       V^{\pi^{*}}_{P} & \leq (1-\gamma)^{-2}\EE_{(s,a)\sim d^{\pi^{*}}_{P^{\star}}}[\TV(P(\cdot \mid s,a),P^\star(\cdot \mid s,a))] \tag{Simulation lemma, \pref{lem:simulation}}\\
  & \leq (1-\gamma)^{-2}\EE_{(s,a)\sim d^{\pi^{*}}_{P^{\star}}}[\sum_i \TV(P_i(\cdot \mid s[pa_i],a),P^\star_i(\cdot \mid s[pa_i],a))] \\
    & \leq (1-\gamma)^{-2}\sum_i \sqrt{\EE_{(s,a)\sim\rho}\bracks{\prns{\frac{d^{\pi^{*}}_{P^{\star}}(s[pa_i],a)}{\rho(s[pa_i],a)}}^2 }\EE_{(s,a)\sim \rho}[\TV(P_i(\cdot \mid s[pa_i],a),P^\star_i(\cdot \mid s[pa_i],a))^2]} \tag{CS inequality} \\
       & \leq (1-\gamma)^{-2}\sum_i \sqrt{\ddot C_{\pi^{*},\infty}\EE_{(s,a)\sim \rho}[\TV(P_i(\cdot \mid s,a),P^\star_i(\cdot \mid s,a))^2]} \leq (1-\gamma)^{-2}\sum_i \sqrt{\ddot C_{\pi^{*},\infty}\xi_i} \\ 
           & \leq (1-\gamma)^{-2} \sqrt{d \ddot C_{\pi^{*},\infty}\sum_i\xi_i}  \tag{CS inequality} \\ 
        &\leq c(1-\gamma)^{-2} \sqrt{d \ddot{C}_{\pi^{*},\infty}\frac{L\ln(Lnd/\delta)}{n}}. 
\end{align*}
Here, recall  
\begin{align*}
    \ddot{C}_{\pi^{*},\infty}=\max_{i\in [1,\cdots,d]} \EE_{(s,a)\sim\rho}\bracks{\prns{\frac{d^{\pi^{*}}_{P^{\star}}(s[pa_i],a)}{\rho(s[pa_i],a)}}^2 }. 
\end{align*}
Following the third step in the proof of Corollary \ref{cor:tabular}, the statement is concluded. 

\paragraph{Proof of \pref{lem:comparison}. }

Next, we show that $ \ddot{C}_{\pi^{*},\infty} \leq  C_{\pi^{*}, P^{\star}} = \max_{s,a}\frac{d^{\pi^{*}}_{P^{\star}}(s,a) }{\rho(s,a)}$.

 From now on, for any $i\in [1,\cdots,d]$, by defining $\Scal'_i$ s.t. $\Scal=\Scal_i \times \Scal'_i$, we prove $$\max_{s_i\in \Scal_i,a\in \Acal} \frac{d^{\pi^{*}}_{P^{\star}}(s_i,a) }{\rho(s_i,a)}\leq \max_{s\in \Scal_i,s'_i\in \Scal_i,a\in \Acal}\frac{d^{\pi^{*}}_{P^{\star}}(s_i,s'_i,a) }{\rho(s_i,s'_i,a)}=   C_{\pi^{*},\infty}. $$

First, for any $s_i\in \Scal_i,a\in \Acal$, we have 
\begin{align}\label{eq:key}
\max_{s'_i}\frac{d^{\pi^{*}}_{P^{\star}}(s_i,s'_i,a) }{\rho(s_i,s'_i,a)}=\max_{s'_i}\frac{d^{\pi^{*}}_{P^{\star}}(s_i,a)d^{\pi^{*}}_{P^{\star}}(s'_i\mid s_i,a) }{\rho(s_i,a)\rho(s'_i\mid s_i,a)}=\frac{d^{\pi^{*}}_{P^{\star}}(s_i,a) }{\rho(s_i,a)}\max_{s'_i}\frac{d^{\pi^{*}}_{P^{\star}}(s'_i\mid s_i,a) }{\rho(s'_i\mid s_i,a)}\geq \frac{d^{\pi^{*}}_{P^{\star}}(s_i,a) }{\rho(s_i,a)}. 
\end{align}
Here, we use 
\begin{align*}
    1\leq \max_{s'_i} \frac{d^{\pi^{*}}_{P^{\star}}(s'_i\mid s_i,a) }{\rho(s'_i\mid s_i,a)}, 
\end{align*}
which is proved by the contradiction argument, that is, if $ 1> \max_{s'_i} \frac{d^{\pi^{*}}_{P^{\star}}(s'_i\mid s_i,a) }{\rho(s'_i\mid s_i,a)}$, both $\rho$ and $d^{\pi^{*}}_{P^{\star}}$ cannot be probability mass functions since we would get
\begin{align*}
    1=\sum_{s'_i} d^{\pi^{*}}_{P^{\star}}(s'_i\mid s_i,a)\leq \max_{s'_i} \prns{\frac{d^{\pi^{*}}_{P^{\star}}(s'_i\mid s_i,a) }{\rho(s'_i\mid s_i,a)}}\sum_{s'_i}\rho(s'_i\mid s_i,a)< \sum_{s'_i}\rho(s'_i\mid s_i,a). 
\end{align*}
Then, by taking the maximum over $s_i\in \Scal_i,a\in \Acal$ for both sides on \pref{eq:key}, we have 
 $$\max_{s_i,a} \frac{d^{\pi^{*}}_{P^{\star}}(s_i,a) }{\rho(s_i,a)}\leq \max_{s_i,s'_i,a}\frac{d^{\pi^{*}}_{P^{\star}}(s_i,s'_i,a) }{\rho(s_1,s'_i,a)}.$$

\paragraph{Proof of \pref{lem:comparison2}. }

By denoting $s_j=s[pa_i]$, we prove for any $i$, 
\begin{align*}
    \E_{(s,a)\sim \rho}\bracks{\prns{ \frac{d^{\pi^{*}}_{P^{\star}}(s_i,a) }{\rho(s_i,a)}}^2}\leq C_{\pi^{*}, 2}. 
\end{align*}
Here, letting $s'_i$ be a value s.t. $s=(s_i,s'_i)$, we have 
\begin{align*}
    C_{\pi^{*}, 2}&= \E_{(s,a)\sim d^{\pi^{*}}_{P^{\star}}}\bracks{\frac{d^{\pi^{*}}_{P^{\star}}(s,a) }{\rho(s,a)}}=\E_{(s_i,s'_i,a)\sim d^{\pi^{*}}_{P^{\star}}}\bracks{\frac{d^{\pi^{*}}_{P^{\star}}(s_i,s'_i,a) }{\rho(s_i,s'_i,a)}}\\
    &= \E_{(s_i,a)\sim d^{\pi^{*}}_{P^{\star}}}\bracks{\E_{s'_i\sim d^{\pi^{*}}_{P^{\star}}(s_i,a) }\bracks{\frac{d^{\pi^{*}}_{P^{\star}}(s_i,s'_i,a) }{\rho(s_i,s'_i,a)}}}\\
   &= \E_{(s_i,a)\sim d^{\pi^{*}}_{P^{\star}}}\bracks{\frac{d^{\pi^{*}}_{P^{\star}}(s_i,a) }{\rho(s_i,a)}\E_{s'_i\sim d^{\pi^{*}}_{P^{\star}}(s_i,a) }\bracks{\frac{d^{\pi^{*}}_{P^{\star}}(s'_i \mid s_i,a) }{\rho(s'_i \mid s_i, a)}}}\\
   &\geq \E_{(s_i,a)\sim d^{\pi^{*}}_{P^{\star}}}\bracks{\frac{d^{\pi^{*}}_{P^{\star}}(s_i,a) }{\rho(s_i,a)}}= \E_{(s,a)\sim \rho}\bracks{\prns{ \frac{d^{\pi^{*}}_{P^{\star}}(s_i,a) }{\rho(s_i,a)}}^2}. 
\end{align*}
In the above inequality, we use 
\begin{align*}
    \E_{s'_i\sim d^{\pi^{*}}_{P^{\star}}(s_i,a) }\bracks{\frac{d^{\pi^{*}}_{P^{\star}}(s'_i \mid s_i,a) }{\rho(s'_i \mid s_i, a)}}-1 \geq 0 ,\forall s_i\in \Scal_i,\forall a\in \Acal 
\end{align*} 
as this is Chi-square divergence between two conditional distributions.

\section{Missing Proofs in \pref{sec:knrs}}

\subsection{Proofs for Finite-Dimensional KNRs (Proof of Corollary \pref{cor:knrs}) }\label{sec:proof_knrs}

We prove in a similar way as \pref{thm:version}. 

\paragraph{First Step.}
Recall 
\begin{align*}
    \xi=  \sqrt{ 2\lambda \|W^\star\|^2_2  + 8 \zeta^2 \left(d_{\Scal} \ln(5) +  \ln(1/\delta) +  \bar \Ical_{n} \right) }, \quad   \bar \Ical_{n}=\ln\left( \det(\Sigma_{n}) / \det(\lambda \Ib) \right).
\end{align*}
Thus, from \pref{lem:mle_knrs}, with probability $1-\delta$, we can show $W^{*}\in \Wcal_{\Dcal}$ since 
\begin{align*}
    \left\| \left(\widehat{W}_{\MLE}  - W^\star\right) \left(\Sigma_{n}\right)^{1/2}  \right\|_2 \leq \xi . 
\end{align*}
Hereafter, we condition on this event.

\paragraph{Second step.} For any $W \in \Wcal_{\Dcal}$, with probability $1-\delta$, we have 
\begin{align*}
    \left\| \left(W - W^\star\right) \left(\Sigma_{n}\right)^{1/2}  \right\|_2 \leq    \left\| \left(W - \widehat W \right) \left(\Sigma_{n}\right)^{1/2}  \right\|_2+\left\| \left(W^{*} - \widehat W \right) \left(\Sigma_{n}\right)^{1/2}  \right\|_2\leq  \xi. 
\end{align*}

\paragraph{Third step.}

Note $P^{\star}=P(W^{*})$. Then, 
\begin{align*}
    V^{\pi^{*}}_{P^{\star}}-V^{\hat \pi}_{P^{\star}}&\leq    V^{\pi^{*}}_{P^{\star}}-\min_{W\in \Wcal_{\Dcal}}V^{\pi^{*}}_{P(W)}+\min_{W\in \Wcal_{\Dcal}}V^{\pi^{*}}_{P(W)}- V^{\hat \pi}_{P^{\star}}\\ 
    &\leq    V^{\pi^{*}}_{P^{\star}}-\min_{W\in \Wcal_{\Dcal}}V^{\pi^{*}}_{P(W)} +\min_{W\in \Wcal_{\Dcal}}V^{\hat \pi}_{P(W)}- V^{\hat \pi}_{P^{\star}} \tag{definition of $\hat \pi$}\\ 
      &\leq  V^{\pi^{*}}_{P^{\star}}-\min_{W\in \Wcal_{\Dcal}}V^{\pi^{*}}_{P(W)} \tag{Fist step, $W^{*}\in \Wcal_{\Dcal}$}. 
\end{align*}
Then, by setting $W'=\argmin_{W\in \Mcal_{\Dcal}}V^{\pi^{*}}_{P(W)}$, we have 
\begin{align*}
       V^{\pi^{*}}_{P^{\star}}-V^{\hat \pi}_{P^{\star}} &\leq  (1-\gamma)^{-2}\EE_{(s,a)\sim d^{\pi^{*}}_{P^{\star}}}[\|P'(s,a)-P^{\star}(s,a)\|_{\mathrm{TV}}]\\
        &\leq \frac{(1-\gamma)^{-2}}{\zeta}\EE_{(s,a)\sim d^{\pi^{*}}_{P^{\star}}}[\left\| (W' - W^\star) \phi(s,a)   \right\|_2 ] \tag{\pref{lem:gaussian_tv}}\\
          &\leq   \frac{(1-\gamma)^{-2}}{\zeta} \EE_{(s,a)\sim d^{\pi^{*}}_{P^{\star}}}\bracks{\left\| (W' - W^\star) (\Sigma_{n})^{1/2}  \right\|_2 \left\| \phi(s,a) \right\|_{\Sigma_{n}^{-1}}} \tag{CS inequality }\\
          &\leq  \frac{(1-\gamma)^{-2}}{\zeta}\xi\EE_{(s,a)\sim d^{\pi^{*}}_{P^{\star}}}[ \left\| \phi(s,a) \right\|_{\Sigma_{n}^{-1}}] \tag{Second step}
\end{align*}

From \citet[Theorem 20]{ChangJonathanD2021MCSi}, with probability $1-\delta$,  we have 
\begin{align*} 
  \xi \leq c_1\sqrt{\|W^{*}\|_2+d_{\Scal}\min(\mathrm{rank}(\Sigma_{\rho})\{\mathrm{rank}(\Sigma_{\rho})+\ln(c_2/\delta)\},d)\ln (1+n) }.
\end{align*}
In addition, from \citet[Theorem 21]{ChangJonathanD2021MCSi}, with probability $1-\delta$, we also have 
\begin{align*}
    \EE_{(s,a)\sim d^{\pi^{*}}_{P^{\star}}}[\|\phi(s,a)\|_{\Sigma^{-1}_{n}}]&\leq c_1\sqrt{\frac{\bar C_{\pi^*,\phi}\mathrm{rank}[\Sigma_{\rho}]\{\mathrm{rank}[\Sigma_{\rho}] +\ln(c_2/\delta)\}}{n}}. 
\end{align*}
Finally, by combining all things, we have 
 \begin{align*}
    V^{\pi^{*}}_{P^{\star}}-V^{\hat \pi}_{P^{\star}} \leq c_1 (1-\gamma)^{-2} \min(d^{1/2},  \bar R)\sqrt{ \bar R }   \sqrt{\frac{d_{\Scal}\bar C_{\pi^*,\phi}\ln (1+n) }{n}}, \bar R=  \mathrm{rank}[\Sigma_{\rho}]\{\mathrm{rank}[\Sigma_{\rho}] +\ln(c_2/\delta)\}. 
\end{align*}

{\newedit \subsection{Proof of \pref{lem:knrs_basic} }
Using $\TV(P(W)(\cdot \mid s,a), P(W^{\star})(\cdot \mid s,a))^2=\Theta(\|(W-W^{\star})\phi(s,a)\|^2_2 ) $ \citep{devroye2018total}, we have 
\begin{align*}
    C^{\dagger}_{\pi^{*}}\leq \sup_{W}\frac{\EE_{(s,a)\sim d^{\pi^{*}}_{P^{\star}} }[\|(W-W^{\star})\phi(s,a)\|^2_2  ] }{\EE_{(s,a)\sim \rho} [\|(W-W^{\star})\phi(s,a)\|^2_2 ] }. 
\end{align*}
Here,  we have
\begin{align*}
    &\EE_{(s,a)\sim d^{\pi^{*}}_{P^{\star}} }[\|(W-W^{\star})\phi(s,a)\|^2_2  ]\\
    &=\tr( (W-W^{\star})^{\top}(W-W^{\star}) \EE_{(s,a)\sim d^{\pi^{*}}_{P^{\star}} }[\phi(s,a)\phi(s,a)^{\top}])  \\ 
    &=\sum_i a_i u^{\top}_i\EE_{(s,a)\sim d^{\pi^{*}}_{P^{\star}} }[\phi(s,a)\phi(s,a)^{\top}]) u_i. 
\end{align*}
In the above derivation, we use SVD: 
\begin{align*}
    (W-W^{\star})^{\top}(W-W^{\star}) = \sum_{i=1}^d a_i u_i u^{\top}_i. 
\end{align*}
Hence, 
\begin{align*}
    C^{\dagger}_{\pi^{*}}\leq d\bar C_{\pi^{*},\phi}. 
\end{align*}
}

{\newedit \subsection{Proofs for Infinite-Dimensional KNRs (Proof of Corollary \ref{cor:gps}) }

We prove in a similar way as \pref{thm:version}. 

\paragraph{First step.}
Recall 
\begin{align*}
   \xi=\sqrt{d_{\Scal}\{2+150 \ln^3(d_{\Scal}n/\delta)\mathcal{I}_{n}\}} ,\quad \mathcal{I}_{n}=\ln(\det(\Ib+\zeta^{-2}\Kb_{n})). 
\end{align*}
From \citet[Leemma 14]{ChangJonathanD2021MCSi}, with probability $1-\delta$, we can show $g^{*}\in \Gcal_{\Dcal}$ since
\begin{align*}
   \sum_{i=1}^{d_{\Scal}} \|\hat g_i-g^{*}_i\|^2_{k_n}\leq \xi^2. 
\end{align*}
Hereafter, we condition on this event. 

\paragraph{Second step.} For any $g\in \Gcal_{\Dcal}$, with probability $1-\delta$, we have 
\begin{align*}
     \sum_{i=1}^{d_{\Scal}}  \|g_i-g^{*}_i \|^2_{k_n}\leq         \sum_{i=1}^{d_{\Scal}}  \|g_i-\hat g_i\|^2_{k_n}+      \sum_{i=1}^{d_{\Scal}} \|g^{*}_i-\hat g_i\|^2_{k_n}\leq 2\xi. 
\end{align*}

\paragraph{Third step.}

Note $P^{\star}=P(g^{*})$. Then, 
\begin{align*}
        V^{\pi^{*}}_{P^{\star}}-V^{\hat \pi}_{P^{\star}}&\leq    V^{\pi^{*}}_{P^{\star}}-\min_{g \in \Gcal_{\Dcal}}V^{\pi^{*}}_{P(g)}+\min_{g\in \Gcal_{\Dcal}}V^{\pi^{*}}_{P(g)}- V^{\hat \pi}_{P^{\star}}\\ 
    &\leq    V^{\pi^{*}}_{P^{\star}}-\min_{g\in \Gcal_{\Dcal}}V^{\pi^{*}}_{P(g)} +\min_{g\in \Wcal_{\Dcal}}V^{\hat \pi}_{P(g)}- V^{\hat \pi}_{P^{\star}} \tag{definition of $\hat \pi$}\\ 
      &\leq  V^{\pi^{*}}_{P^{\star}}-\min_{g\in \Gcal_{\Dcal}}V^{\pi^{*}}_{P(g)} \tag{Fist step, $g^{*}\in \Gcal_{\Dcal}$}. 
\end{align*}
Then, by setting $g'=\argmin_{g\in \Gcal_{\Dcal}}V^{\pi^{*}}_{P(g)}$, we have
\begin{align*}
          V^{\pi^{*}}_{P^{\star}}-V^{\hat \pi}_{P^{\star}} &\leq  (1-\gamma)^{-2}\EE_{(s,a)\sim d^{\pi^{*}}_{P^{\star}}}[\|P'(s,a)-P^{\star}(s,a)\|_{\mathrm{TV}}]\\
        &\leq \frac{(1-\gamma)^{-2}}{\zeta}\EE_{(s,a)\sim d^{\pi^{*}}_{P^{\star}}}[\left\|g'(s,a)-g(s,a) \right\|_2 ] \tag{\pref{lem:gaussian_tv}}\\
        &\leq \frac{(1-\gamma)^{-2}}{\zeta}\EE_{(s,a)\sim d^{\pi^{*}}_{P^{\star}}}\bracks{\sqrt{k_n((s,a),(s,a))}  \prns{\sum_{i=1}^{d_{\Scal}} \left\|g'_i-g_i \right\|^2_{k_n}}^{1/2} } \tag{CS inequality}\\
        &\leq \frac{(1-\gamma)^{-2}\xi}{\zeta}\EE_{(s,a)\sim d^{\pi^{*}}_{P^{\star}}}[\sqrt{k_n((s,a),(s,a))}]. \tag{Second step}
\end{align*}

From \citet[Theorem 24]{ChangJonathanD2021MCSi}, with probability $1-\delta$, we have 
\begin{align*}
    \xi \leq c_1\sqrt{d_{\Scal}\ln^3(c_2d_{\Scal}n/\delta) \{ d^{*}+\ln(c_2/\delta) \} d^{*}\ln (1+n) }. 
\end{align*} 
In addition, from \citet[Theorem 25]{ChangJonathanD2021MCSi}, with probability $1-\delta$, we have 
\begin{align*}
   \EE_{x\sim d^{\pi^{*}}_{P^{\star}}}[\sqrt{k_{n}(x,x)}]\leq c_1\sqrt{\frac{\bar C_{\pi^*,\phi}d^{*}\{d^{*}+\ln(c_2/\delta)\}}{n}}.
\end{align*}
Combining all things together, with probability $1-\delta$, we have 
\begin{align*}
     V^{\pi^{*}}_{P^{\star}}-V^{\hat \pi}_{P^{\star}} &\leq     c_1 (1-\gamma)^{-2} \{d^{*}+\ln(c_2/\delta)\}d^{*}\sqrt{\frac{d_{\Scal}\bar C_{\pi^*,\phi}\ln^3(c_2d_{\Scal}n/\delta)\ln (1+n) }{n}}. 
\end{align*}

}

{\newedit 
\section{Missing Proofs in \pref{sec:linear_mdps}}

The proof consists of three steps. 

\paragraph{First step (pessimism).} 
We set $\lambda=1$. 
Using a result in \citet[Lemma 8.7]{agarwal2019reinforcement}, with probability $1-\delta$, 
\begin{align*}
    \left|\int \{\hat P(s' \mid s,a)-P^{\star}(s' \mid s,a)\}v(s')d(s') \right|\lesssim (1-\gamma)^{-1}  \|\phi(s,a)\|_{\Sigma^{-1}_n}\sqrt{d\ln(n |\Pi|W/\delta) }
\end{align*}
for all $(s,a)$ and $v \in \Vcal$. Hereafter, we condition on this event. Then, 
\begin{align*}
    \EE_{\Dcal}\left[\left|\int \{\hat P(s' \mid s,a)-P^{\star}(s' \mid s,a)\}v(s')\rd \iota(s') \right|^2\right] &\lesssim (1-\gamma)^{-2} \EE_{\Dcal}\left[\|\phi(s,a)\|^2_{\Sigma^{-1}_n}\right] d\ln(n |\Pi| W/\delta)\\
     &\lesssim (1-\gamma)^{-2} \frac{d^2\ln(n |\Pi|W/\delta)}{n}. 
\end{align*}
for any $v \in \Vcal$.

\paragraph{Second step.}

From the construction of the algorithm, for any $P \in \Mcal_{\Dcal}$, we have 
\begin{align*}
     \EE_{\Dcal}\left[\left|\int \{P(s' \mid s,a)-P^{\star}(s' \mid s,a)v(s')\rd \iota(s') \right|^2\right]^{1/2} \lesssim (1-\gamma)^{-1} \sqrt{\frac{d^2\ln(n |\Pi|W/\delta)}{n} }. 
\end{align*}

\paragraph{Third step: distribution shift part.}

Here, for any $P \in \Mcal_{\Dcal}$, we will prove
\begin{align}\label{eq:main_linear}
  V^{\pi^{*}}_{P^{\star}}-       V^{\pi^{*}}_{P} \leq c_1(1-\gamma)^{-2}  \sqrt{\frac{\bar C_{\pi^*,\phi}\mathrm{rank}[\Sigma_{\rho}]^2d \ln(c_2n|\Pi|W/\delta)\}}{n}}. 
\end{align}
Following the third step of the proof of  \pref{thm:version}, this immediately concludes the bound:
\begin{align*}
      V^{\pi^{*}}_{P^{\star}}-       V^{\hat \pi}_{P^{\star}} \lesssim (1-\gamma)^{-2}  \sqrt{\frac{\bar C_{\pi^*,\phi}\mathrm{rank}[\Sigma_{\rho}]^2d \ln(c_2n|\Pi|W/\delta)\}}{n}}. 
\end{align*}

From now on, we focus on the proof of \pref{eq:main_linear}. Here, we have 
\begin{align*}
  V^{\pi^{*}}_{P^{\star}}-       V^{\pi^{*}}_{P} & \leq (1-\gamma)^{-1}\left|\EE_{(s,a)\sim d^{\pi^{*}}_{P^{\star}}}\bracks{\int \{P(s' \mid s,a) - P^\star(s' \mid s,a)\}V^{\pi^{*}}_{P}(s')\rd \iota(s')  }\right| \tag{Simulation lemma, \pref{lem:simulation}}\\
& \leq (1-\gamma)^{-1}\left|\left\langle \EE_{(s,a)\sim d^{\pi^{*}}_{P^{\star}}}\bracks{\phi(s,a)}, \int \{\mu(s') - \mu^{\star}(s')\}V^{\pi^{*}}_{P}(s')\rd \iota(s')  \right \rangle \right|\\
& \leq (1-\gamma)^{-1}\EE_{(s,a)\sim d^{\pi^{*}}_{P^{\star}}}\bracks{\|\phi(s,a)\|_{\Sigma^{-1}_n}}\times  \left\|\int \{\mu(s') - \mu^{\star}(s')\}V^{\pi^{*}}_{P}(s')\rd \iota(s') \right\|_{\Sigma_n}. \tag{CS inequality}
\end{align*}
Recall from \citet[Theorem 21]{ChangJonathanD2021MCSi}, with probability $1-\delta$, we also have 
\begin{align*}
    \EE_{(s,a)\sim d^{\pi^{*}}_{P^{\star}}}[\|\phi(s,a)\|_{\Sigma^{-1}_{n}}]&\leq c_1\sqrt{\frac{\bar C_{\pi^*,\phi}\mathrm{rank}[\Sigma_{\rho}]\{\mathrm{rank}[\Sigma_{\rho}] +\ln(c_2/\delta)\}}{n}}. 
\end{align*}
Furthermore, 
\begin{align*}
   & \left\|\int \{\mu(s') - \mu^{\star}(s')\}V^{\pi^{*}}_{P}(s')\rd \iota(s') \right\|_{\Sigma_n} \\
& \leq  2\sqrt{n}\EE_{\Dcal}\left[\left|\int \{P(s' \mid s,a)-P^{\star}(s' \mid s,a)\}V^{\pi^{*}}_{P}(s')\rd \iota(s') \right|^2\right]^{1/2} + 2 \lambda \left \|\int \mu(s')V^{\pi^{*}}_{P}(s')\rd \iota(s')\right \|_2 \\
&\lesssim 2\sqrt{n}\times (1-\gamma)^{-1} \sqrt{\frac{d^2\ln(n |\Pi|W/\delta)}{n} } + 2\lambda (1-\gamma)^{-1} \|\int \mu(s')\rd\iota(s')\|_2 \\ 
&\lesssim (1-\gamma)^{-1}\sqrt{d^2\ln(n|\Pi|W/\delta)}.
\end{align*}

Hence, 
\begin{align*}
  V^{\pi^{*}}_{P^{\star}}-       V^{\pi^{*}}_{P} \leq c_1(1-\gamma)^{-2}  \sqrt{\frac{\bar C_{\pi^*,\phi}\mathrm{rank}[\Sigma_{\rho}]^2d \ln(c_2n|\Pi| W /\delta)\}}{n}}. 
\end{align*}
}

\section{Missing Proofs in \pref{sec:alg} } 

{\newedit 

Here, $P^{\star}\sim \beta(\cdot),\Dcal \sim  d(\cdot \mid P^{\star})$. Then, by denoting the posterior distribution of $P^{\star}$ given $\Dcal$ as $\beta'(\cdot\mid \cdot)$, then $P_t\sim \beta(\cdot \mid \Dcal)$. %

We start with the proof of \pref{lem:regret}. 

\subsection{Proof of \pref{lem:regret}}
\begin{align*}
  \EE\bracks{  V^{\pi(P^{\star})}_{P^\star}   -  V^{\pi_t}_{P^\star}}&=\EE\bracks{ V^{\pi(P^{\star})}_{P^\star} -L(\pi(P^{\star});\Dcal)+  L(\pi(P^{\star});\Dcal)- V^{\pi_t}_{P^\star} } \\ 
  &=\EE\bracks{ V^{\pi(P^{\star})}_{P^\star} -L(\pi(P^{\star});\Dcal)+  \EE[L(\pi(P^{\star});\Dcal)\mid \Dcal]- V^{\pi_t}_{P^\star} } \\ 
    &=\EE\bracks{ V^{\pi(P^{\star})}_{P^\star} -L(\pi(P^{\star});\Dcal)+  \EE[L(\pi(P_t);\Dcal)\mid \Dcal]- V^{\pi_t}_{P^\star} } \\ 
  &= \EE\bracks{ V^{\pi(P^{\star})}_{P^\star} -L(\pi(P^{\star});\Dcal)+  L(\pi(P_t);\Dcal)- V^{\pi_t}_{P^\star} }. 
\end{align*}
From the second line to the third line, we use $\mathrm{Pr}(P^{\star}\mid \Dcal)=\beta'(P^{\star}\mid \Dcal)$, $\mathrm{Pr}(P_t\mid \Dcal)=\beta'(P_t\mid \Dcal)$.

Besides, by denoting the event $\Zcal=\{ L(\pi;\Dcal)\leq V^{\pi}_{P^{\star}},\forall \pi\in \Pi\}$ from the assumption, 
\begin{align*}
   \EE\bracks{L(\pi(P_t);\Dcal)}&=  \EE[\EE\bracks{L(\pi(P_t);\Dcal)\mid \Zcal,P^{\star}}P(\Zcal|P^{\star})]+ \EE[2(1-\gamma)^{-1}(1-P(\Zcal|P^{\star}))]\\
   &\leq \EE\bracks{\EE\bracks{V_{P^{\star}}^{\pi(P_t)} \mid \Zcal,P^{\star}} }+ 2(1-\gamma)^{-1}\delta \\
   &\leq \EE\bracks{\EE\bracks{V_{P^{\star}}^{\pi(P^{\star})} \mid \Zcal,P^{\star}} }+2(1-\gamma)^{-1}\delta\\
   &= \EE\bracks{V_{P^{\star}}^{\pi(P^{\star})} \EE\bracks{1 \mid \Zcal,P^{\star}} }+2(1-\gamma)^{-1}\delta=\EE{\bracks{V_{P^{\star}}^{\pi(P^{\star})}} }+2(1-\gamma)^{-1}\delta. 
\end{align*}
Thus, 
\begin{align*}
      \EE\bracks{  V^{\pi(P^{\star})}_{P^\star}   -  V^{\pi_t}_{P^\star}}& \leq \EE\bracks{ V^{\pi(P^{\star})}_{P^\star} -L(\pi(P^{\star});\Dcal)+V^{\pi(P^{\star})}_{P^\star}- V_{P^{\star}}^{\pi_t} }+2(1-\gamma)^{-1}\delta. 
\end{align*}

Next, we prove  \pref{lem:npg}. 

\subsection{Proof of \pref{lem:npg}}

\begin{align*}
\EE\bracks{V_{P^{\star}}^{\pi(P^{\star})} - V^{\pi_t}_{P^\star} }=H\EE_{(s,a) \sim d_{P^{\star}}^{\pi(P^{\star})} }[A^{\pi_t}_{P^{\star}}(s,a)] \tag{Performance difference lemma}. 
\end{align*}

Here, we have 
\begin{align*}
    &\EE_{(s) \sim d_{P^{\star}}^{\pi(P^{\star})} }\bracks{\frac{\KL( \pi(P^{\star})(\cdot \mid s),\pi_{t+1}(\cdot \mid s))-\KL(\pi(P^{\star})(\cdot \mid s),\pi_t(\cdot \mid s))}{\eta}}\\
    &= \EE_{(s) \sim d_{P^{\star}}^{\pi(P^{\star})},a\sim \pi(P^{\star})(s) }\bracks{\ln\frac{\pi^{t+1}(a\mid s)}{\pi_t(a\mid s)} }\\ 
    &=\EE_{(s) \sim d_{P^{\star}}^{\pi(P^{\star})},a\sim \pi(P^{\star})(s) }\bracks{A^{\pi_t}_{P_t}(s,a)-\frac{1}{\eta}\ln\EE_{a\sim \pi^{t}}[\exp(\eta  A^{\pi_t}_{P_t}(s,a)] } \\ 
    &\geq \EE_{(s) \sim d_{P^{\star}}^{\pi(P^{\star})},a\sim \pi(P^{\star})(s) }\bracks{A^{\pi_t}_{P_t}(s,a)}-4\eta(1-\gamma)^{-2} . 
\end{align*}
From the second line to third line, we use the following 
\begin{align*}
    \ln\EE_{a\sim \pi^{t}(s)}[\exp(\eta  A^{\pi_t}_{P_t}(s,a)]& \leq  \ln\braces{\EE_{a\sim \pi^{t}(s)}[1+\eta  A^{\pi_t}_{P_t}(s,a)+\{\eta  A^{\pi_t}_{P_t}(s,a)\}^2] } \tag{$\eta\leq 2(1-\gamma),\exp(x)\leq 1+x+x^2 (x\leq 1)$}\\
    & \leq \ln\braces{\EE_{a\sim \pi^{t}(s)}[1+4\eta^2(1-\gamma)^{-2} ] } \tag{$\log(1+x)\leq x$}\\
    &\leq 4\eta^2(1-\gamma)^{-2} . 
\end{align*}
   
Then, 
\begin{align*}
    &\EE\bracks{V_{P^{\star}}^{\pi(P^{\star})} - V^{\pi_t}_{P^\star} } \\
    &=H\EE[\EE_{(s,a) \sim d_{P^{\star}}^{\pi(P^{\star})} }[\EE[A^{\pi_t}_{P_{t}}(s,a) ]]\\
      &\leq H\EE\bracks{\EE_{(s) \sim d_{P^{\star}}^{\pi(P^{\star})} }\bracks{4\eta(1-\gamma)^{-2} +\frac{\KL( \pi(P^{\star})(\cdot \mid s),\pi_{t+1}(\cdot \mid s))-\KL(\pi(P^{\star})(\cdot \mid s),\pi_t(\cdot \mid s))}{\eta}} }. 
\end{align*}

\subsection{Proof of \pref{thm:bayesian_pspo}}

Finally, 
\begin{align*}
     & \min_{t\leq T}\EE\bracks{  V^{\pi(P^{\star})}_{P^\star}   -  V^{\pi_t}_{P^\star}}\\
     &\leq \braces{\EE[V^{\pi(P^{\star})}_{P^\star} -L(\pi(P^{\star});\Dcal)  ] }+\frac{1}{T}\sum_{t=1}^T     \EE\bracks{V_{P^{\star}}^{\pi(P^{\star})} - V^{\pi_t}_{P^\star} }+2(1-\gamma)^{-1}\delta\\
     &\leq \braces{\EE[V^{\pi(P^{\star})}_{P^\star} -L(\pi(P^{\star});\Dcal)  ] }\\
     &+H\braces{4\eta(1-\gamma)^{-2}+\frac{1}{T}\EE\bracks{\EE_{(s) \sim d_{P^{\star}}^{\pi(P^{\star})} }\bracks{\frac{\KL( \pi(P^{\star}(\cdot \mid s)),\pi_{T+1}(\cdot \mid s))-\KL(\pi(P^{\star})(\cdot \mid s),\pi_1(\cdot \mid s))}{\eta}}}}\\
     &+2(1-\gamma)^{-1}\delta  \\
     &\leq \braces{\EE[V^{\pi(P^{\star})}_{P^\star} -L(\pi(P^{\star});\Dcal)  ] }+H\braces{4\eta(1-\gamma)^{-2}+\frac{\ln |\Acal|}{\eta T} }+2(1-\gamma)^{-1}\delta\\
     & \leq \braces{\EE[V^{\pi(P^{\star})}_{P^\star} -L(\pi(P^{\star});\Dcal)  ] }+4(1-\gamma)^{-2}\sqrt{\frac{\ln |\Acal|}{T}}+2(1-\gamma)^{-1}\delta. 
\end{align*}
Thus, 
\begin{align*}
 \min_{t\leq T}\EE\bracks{  V^{\pi(P^{\star})}_{P^\star}   -  V^{\pi_t}_{P^\star}} \leq \braces{\min_{L\in \Lcal_{\Dcal}}\EE[V^{\pi(P^{\star})}_{P^\star} -L(\pi(P^{\star});\Dcal)  ] }+4(1-\gamma)^{-2}\sqrt{\frac{\ln |\Acal|}{T}}+2(1-\gamma)^{-1}\delta. 
\end{align*}

\subsection{Proof of Corollary \ref{cor:discrete_bayes}}

We take $L(\pi;\Dcal)$ as $\max(\min(\min_{M\in \Mcal_{\Dcal}}V^{\pi}_{M},H),0)$ in \pref{thm:version}. Then, from the first step in the proof of \pref{thm:version}, conditional on $P^{\star}$, with probability $1-\delta$, $L(\pi;\Dcal)$ satisfies 
\begin{align*}
    L(\pi;\Dcal)\leq V^{\pi}_{P^{\star}},\forall \pi \in \Pi.  
\end{align*}
and 
\begin{align*}
    \EE_{(s,a)\sim \rho}[\TV(P'(\cdot \mid s,a), P^\star(\cdot \mid s,a) )^2 ]\lesssim \xi. 
\end{align*}
where $P'=\argmin_{M\in \Mcal_{\Dcal}}V^{\pi(P^{\star})}_{M}$. We denote the above event as $\Zcal$. We have $\mathrm{P}(\Zcal\mid P^{\star})\geq 1-\delta$. 
In addition, from the third step in the proof of \pref{thm:version},  

\begin{align*}
   & \EE[V^{\pi(P^{\star})}_{P^\star} -L(\pi(P^{\star});\Dcal)  ]\leq  \tag{The third step in the proof of \pref{thm:version}} \EE[V^{\pi(P^{\star})}_{P^{\star}}-V^{\pi(P^{\star})}_{P'}] \\
       & \leq (1-\gamma)^{-2}\EE_{P^{\star}\sim \beta}[\EE_{(s,a)\sim d^{\pi(P^{\star})}_{P^{\star}}}[\TV(P'(\cdot \mid s,a), P^\star(\cdot \mid s,a) ) ]] \tag{Simulation lemma, \pref{lem:simulation}}\\
        &\leq (1-\gamma)^{-2}\EE_{P^{\star}\sim \beta}\bracks{\EE\bracks{\EE_{(s,a)\sim d^{\pi(P^{\star})}_{P^{\star}}}[\TV(P'(\cdot \mid s,a), P^\star(\cdot \mid s,a) ) ]\mid \Zcal,P^{\star}}P(\Zcal\mid P^{\star})}\\
        &+\EE_{P^{\star}\sim \beta}[2(1-\gamma)^{-1}(1-P(\Zcal\mid P^{\star}) )]\\
       &\leq (1-\gamma)^{-2}\EE_{P^{\star}\sim \beta}\bracks{\EE\bracks{\EE_{(s,a)\sim d^{\pi(P^{\star})}_{P^{\star}}}[\TV(P'(\cdot \mid s,a), P^\star(\cdot \mid s,a) ) ]\mid \Zcal,P^{\star}}}+2(1-\gamma)^{-1}\delta \\
         &\leq (1-\gamma)^{-2}\EE_{P^{\star}\sim \beta}\bracks{\EE\bracks{\sqrt{\EE_{(s,a)\sim d^{\pi(P^{\star})}_{P^{\star}}}[\TV(P'(\cdot \mid s,a), P^\star(\cdot \mid s,a) )^2 ]}\mid \Zcal,P^{\star}}}+2(1-\gamma)^{-1}\delta \\
        &\leq (1-\gamma)^{-2}\EE_{P^{\star}\sim \beta}\bracks{\EE\bracks{\sqrt{C^{\dagger}_{\pi(P^{\star}),P^{\star}}\EE_{(s,a)\sim \rho}[\TV(P'(\cdot \mid s,a), P^\star(\cdot \mid s,a) )^2 ]}\mid \Zcal,P^{\star}}}+2(1-\gamma)^{-1}\delta \\
 &\leq (1-\gamma)^{-2}\sqrt{\xi}\EE_{P^{\star}\sim \beta}\bracks{\EE\bracks{\sqrt{C^{\dagger}_{\pi(P^{\star}),P^{\star}}}\mid \Zcal,P^{\star}}}+2(1-\gamma)^{-1}\delta \\
  &\leq (1-\gamma)^{-2}\sqrt{\xi}\EE_{P^{\star}\sim \beta}\bracks{\sqrt{C^{\dagger}_{\pi(P^{\star}),P^{\star}}}\EE\bracks{1\mid \Zcal,P^{\star}}}+2(1-\gamma)^{-1}\delta\\
  &= (1-\gamma)^{-2}\sqrt{\xi}\sqrt{\EE_{P^{\star}\sim \beta}\bracks{C^{\dagger}_{\pi(P^{\star}),P^{\star}}}}+2(1-\gamma)^{-1}\delta. 
\end{align*}
By taking $\delta=1/n$, the statement is concluded. 

\subsection{Proof of Corollary  \ref{cor:tablar_bayes} }
The proof is done as in the proof of Corollary \ref{cor:discrete_bayes}. We omit the proof.

\subsection{Proof of Corollary  \ref{cor:linear_mixture_bayes} }
The proof is done as in the proof of Corollary \ref{cor:discrete_bayes}. We omit the proof.

\subsection{Proof of Corollary \ref{cor:linear_bayes}}
We take  $L(\pi;\Dcal)$ as $\max(\min(\min_{W\in \Wcal_{\Dcal}}V^{\pi}_{P(W)},H),0)$ in \pref{thm:version}. Then, from the first and second step in the proof of Corollary \ref{cor:knrs}, conditioning on $P^{\star}$, with probability $1-\delta$, $L(\pi;\Dcal)$ satisfies 
\begin{align*}
      L(\pi;\Dcal)\leq V^{\pi}_{P^{\star}},\forall \pi \in \Pi. 
\end{align*}
and  $$ \left\| \left(W' - W^\star\right) \left(\Sigma_{n}\right)^{1/2}  \right\|_2\leq \xi, $$
where $W'=\argmin_{W\in \Mcal_{\Dcal}}V^{\pi(P^{\star})}_{P(W)}$. Besides, from \citet[Theoreem 20]{ChangJonathanD2021MCSi}, with probability $1-\delta$,  we have
\begin{align*} 
  \xi \leq c_1\sqrt{1+d_{\Scal}\min(\mathrm{rank}(\Sigma_{\rho})\{\mathrm{rank}(\Sigma_{\rho})+\ln(c_2/\delta)\},d)\ln (1+n) }.
\end{align*}
In addition, from \citet[Theoreem 21]{ChangJonathanD2021MCSi}, with probability $1-\delta$, we also have 
\begin{align*}
    \EE_{(s,a)\sim \rho}[\|\phi(s,a)\|_{\Sigma^{-1}_{n}}]&\leq c_1\sqrt{\frac{\mathrm{rank}[\Sigma_{\rho}]\{\mathrm{rank}[\Sigma_{\rho}] +\ln(c_2/\delta)\}}{n}}. 
\end{align*}
We denote the above event as $\Zcal$. Then, we have $P(\Zcal \mid P^{\star})\geq 1-\delta$.

From the third step in the proof of Corollary \ref{cor:knrs},
\begin{align*}
      &\EE[V^{\pi(P^{\star})}_{P^\star} -L(\pi(P^{\star});\Dcal)  ]\leq \EE[V^{\pi(P^{\star})}_{P^{\star}}-V^{\pi(P^{\star})}_{P(W')}]  \\ 
      &\leq \frac{(1-\gamma)^{-2}}{\zeta} \EE_{P^{\star}\sim \beta}\bracks{\EE_{(s,a)\sim d^{\pi(P^{\star})}_{P^{\star}}}[\|(W' - W^\star) \phi(s,a)  \|_2]} \tag{Simulation lemma}\\ 
       &\leq  \frac{(1-\gamma)^{-2}}{\zeta} \EE_{P^{\star}\sim \beta}\bracks{\EE\bracks{\EE_{(s,a)\sim d^{\pi(P^{\star})}_{P^{\star}}}[\|(W'  - W^\star) \phi(s,a)  \|_2] \mid P^{\star},\Zcal}}+2(1-\gamma)^{-1}\delta \\ 
         &\leq  \frac{(1-\gamma)^{-2}}{\zeta} \EE_{P^{\star}\sim \beta}\bracks{\EE\bracks{\xi \EE_{(s,a)\sim d^{\pi(P^{\star})}_{P^{\star}}}[\|\phi(s,a)\|_{\Sigma^{-1}_n}] \mid P^{\star},\Zcal}}+2(1-\gamma)^{-1}\delta \\  
           &\leq  \frac{(1-\gamma)^{-2}}{\zeta} \EE_{P^{\star}\sim \beta}\bracks{\EE\bracks{\xi \sqrt{\EE_{(s,a)\sim d^{\pi(P^{\star})}_{P^{\star}}}[\|\phi(s,a)\|^2_{\Sigma^{-1}_n}]} \mid P^{\star},\Zcal}}+2(1-\gamma)^{-1}\delta.
\end{align*}           
Then, by letting $ \bar R=  \mathrm{rank}[\Sigma_{\rho}]\{\mathrm{rank}[\Sigma_{\rho}] +\ln(c_2/\delta)\}$, 
\begin{align*}
             &\EE[ V^{\pi(P^{\star})}_{P(W')}-V^{\pi(P^{\star})}_{P^{\star}}]  \\
             &\leq  \frac{(1-\gamma)^{-2}}{\zeta} \EE_{P^{\star}\sim \beta}\bracks{\EE\bracks{\xi\sqrt{\bar C_{\pi(P^{\star} ),P^{\star}  } }\sqrt{\EE_{(s,a)\sim \rho}[\|\phi(s,a)\|^2_{\Sigma^{-1}_n}]} \mid P^{\star},\Zcal}}+2(1-\gamma)^{-1}\delta \\ 
        &\leq  c_1 (1-\gamma)^{-2} \min(d^{1/2},  \bar R)\sqrt{ \bar R }   \sqrt{\frac{d_{\Scal} \ln (1+n) }{n}}\EE_{P^{\star}\sim \beta}\bracks{\EE\bracks{\sqrt{\bar C_{\pi(P^{\star} ),P^{\star}  } }|\Zcal,P^{\star}} }+2(1-\gamma)^{-1}\delta \\ 
     &\leq  c_1 (1-\gamma)^{-2} \min(d^{1/2},  \bar R)\sqrt{ \bar R }   \sqrt{\frac{d_{\Scal} \ln (1+n) }{n}}\EE_{P^{\star}\sim \beta}\bracks{\sqrt{\bar C_{\pi(P^{\star} ),P^{\star}  } }\EE\bracks{1 |\Zcal,P^{\star}} }+2(1-\gamma)^{-1}\delta \\ 
                &\leq  c_1 (1-\gamma)^{-2} \min(d^{1/2},  \bar R)\sqrt{ \bar R }   \sqrt{\frac{d_{\Scal} \ln (1+n) }{n}}\sqrt{\EE_{P^{\star}\sim \beta}\bracks{\bar C_{\pi(P^{\star} ),P^{\star}  } } }+2(1-\gamma)^{-1}\delta . 
\end{align*}
By taking $\delta=1/n$, the statement is concluded.

\subsection{Proof of Corollary \ref{cor:low_rank_bayes}}

We take $L(\pi;\Dcal)$ as $\max(\min(\min_{M\in \Mcal_{\Dcal}}V^{\pi}_{M},H),0)$ in \pref{thm:version}. Then, from the first step in the proof of \pref{thm:version}, conditional on $P^{\star}$, with probability $1-\delta$, $L(\pi;\Dcal)$ satisfies 
\begin{align*}
    L(\pi;\Dcal)\leq V^{\pi}_{P^{\star}},\forall \pi \in \Pi.  
\end{align*}
and 
\begin{align*}
    \EE_{(s,a)\sim \rho}[\TV(P'(\cdot \mid s,a), P^\star(\cdot \mid s,a) )^2 ]\lesssim \zeta. 
\end{align*}
where $P'=\argmin_{M\in \Mcal_{\Dcal}}V^{\pi(P^{\star})}_{M}$. We denote the above event as $\Zcal$. We have $\mathrm{P}(\Zcal\mid P^{\star})\geq 1-\delta$. 
In addition, from the third step in the proof of \pref{thm:version},  

\begin{align*}
      & \EE[V^{\pi(P^{\star})}_{P^\star} -L(\pi(P^{\star});\Dcal)  ]\leq  \tag{The third step in the proof of \pref{thm:version}} \EE[V^{\pi(P^{\star})}_{P^{\star}}-V^{\pi(P^{\star})}_{P'}] \\
       & \leq (1-\gamma)^{-2}\EE_{P^{\star}\sim \beta}[\EE_{(s,a)\sim d^{\pi(P^{\star})}_{P^{\star}}}[\TV(P'(\cdot \mid s,a), P^\star(\cdot \mid s,a) ) ]] \tag{Simulation lemma, \pref{lem:simulation}}\\
        &\leq (1-\gamma)^{-2}\EE_{P^{\star}\sim \beta}\bracks{\EE\bracks{\EE_{(s,a)\sim d^{\pi(P^{\star})}_{P^{\star}}}[\TV(P'(\cdot \mid s,a), P^\star(\cdot \mid s,a) ) ]\mid \Zcal,P^{\star}}P(\Zcal\mid P^{\star})}\\
        &+\EE_{P^{\star}\sim \beta}[2(1-\gamma)^{-1}(1-P(\Zcal\mid P^{\star}) )]\\
       &\leq (1-\gamma)^{-2}\EE_{P^{\star}\sim \beta}\bracks{\EE\bracks{\EE_{(s,a)\sim d^{\pi(P^{\star})}_{P^{\star}}}[\TV(P'(\cdot \mid s,a), P^\star(\cdot \mid s,a) ) ]\mid \Zcal,P^{\star}}}+2(1-\gamma)^{-1}\delta.
\end{align*}
The final statement is immediately concluded. 
}

\section{Auxiliary Lemmas}\label{sec:auxi}

\begin{lemma}[Simulation Lemma]\label{lem:simulation}
Consider any two transitions $P$ and $\widehat{P}$, and any policy $\pi:\Scal\to \Delta(\Acal)$.  We have:

\resizebox{\textwidth}{!}{
\begin{minipage}{\textwidth}
\begin{align*}
    |V^{\pi}_{P} - V^{\pi}_{\widehat{P}}|&\leq  |(1-\gamma)^{-1}\EE_{s,a\sim d^{\pi}_{P}}[\E_{s'\sim P(s,a)}[ V^{\pi}_{\widehat{P}}(s')] -\E_{s'\sim P(s,a)}[ V^{\pi}_{\widehat{P}}(s')]    ] |\\ 
    & \leq (1-\gamma)^{-2}\EE_{s,a\sim d^{\pi}_{P}} \left[   \TV( P(\cdot | s,a),\widehat{P}(\cdot | s,a))   \right] .   
\end{align*}\end{minipage}}
\end{lemma}
\begin{proof}
Such simulation lemma is standard in model-based RL literature and the derivation can be found, for instance, in the proof of Lemma 10 from \cite{Sun2019_model}. 
\end{proof}

\begin{lemma}[MLE guarantee]\label{lem:mle}
Given a set of models $\Mcal = \{ P: \Scal\times\Acal\to \Delta(\Scal)\}$ with $P^\star \in \Mcal$, and a dataset $\Dcal = \{s_i,a_i, s'_i\}_{i=1}^n$ with $s_i,a_i \sim \rho$, and $s'_i\sim P^\star(s_i,a_i)$, let  $\widehat{P}_{\MLE}$ be 
$$ \widehat{P}_{\MLE} = \argmin_{P\in\Mcal} \sum_{i=1}^n - \ln P(s_i' | s_i,a_i). $$ With probability at least $1-\delta$, we have:
\begin{align*}
\mathbb{E}_{s,a\sim \rho} \TV( \widehat{P}_{\MLE}(\cdot |s,a) ,P^\star(\cdot | s,a))^2 \lesssim \frac{ \ln(|\Mcal| / \delta) }{n}.
\end{align*}
\end{lemma}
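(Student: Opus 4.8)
The plan is to route through the squared Hellinger distance and to exploit that, in the offline setting, the triples $(s_i,a_i,s_i')$ are i.i.d.\ with $(s_i,a_i)\sim\rho$ and $s_i'\sim P^\star(\cdot\mid s_i,a_i)$. For distributions $P,Q$ on $\Scal$ write $D_{\mathrm H}^2(P,Q)=\int\big(\sqrt{P(s')}-\sqrt{Q(s')}\big)^2\mathrm{d}s'=2-2\int\sqrt{P(s')Q(s')}\,\mathrm{d}s'$. A one-line Cauchy--Schwarz bound gives $\|P-Q\|_1^2\le 4\,D_{\mathrm H}^2(P,Q)$ pointwise, so it suffices to show $\EE_{s,a\sim\rho}\big[D_{\mathrm H}^2(\widehat P_{\MLE}(\cdot\mid s,a),P^\star(\cdot\mid s,a))\big]\lesssim\ln(|\Mcal|/\delta)/n$. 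Denote this quantity, for a generic model $P$, by $\bar D(P):=\EE_{s,a\sim\rho}\big[D_{\mathrm H}^2(P(\cdot\mid s,a),P^\star(\cdot\mid s,a))\big]\in[0,2]$.

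Next I would run a single exponential-moment (Chernoff) argument with a union bound over the finite class. Fix $P\in\Mcal$ and set $X_i(P)=\tfrac12\ln\frac{P(s_i'\mid s_i,a_i)}{P^\star(s_i'\mid s_i,a_i)}$. Conditioning on $(s_i,a_i)$ and integrating $s_i'\sim P^\star(\cdot\mid s_i,a_i)$ collapses the per-sample moment generating function into a Hellinger affinity, and then averaging over $(s_i,a_i)\sim\rho$ gives
\[
\EE\big[e^{X_i(P)}\big]=\EE_{s,a\sim\rho}\!\int\!\sqrt{P(s'\mid s,a)\,P^\star(s'\mid s,a)}\,\mathrm{d}s'=1-\tfrac12\bar D(P)\le e^{-\frac12\bar D(P)}.
\]
Since the $X_i(P)$ are i.i.d., $\EE\big[\exp\!\big(\sum_{i=1}^n X_i(P)+\tfrac n2\bar D(P)\big)\big]\le1$, so Markov's inequality together with a union bound over the $|\Mcal|$ models yields: with probability at least $1-\delta$, simultaneously for every $P\in\Mcal$,
\[
\sum_{i=1}^n X_i(P)+\frac n2\,\bar D(P)\ \le\ \ln\!\big(|\Mcal|/\delta\big).
\]

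Finally I would specialize this uniform bound to $P=\widehat P_{\MLE}$; this is legitimate even though $\widehat P_{\MLE}$ is data-dependent, because the inequality above holds simultaneously over the fixed class $\Mcal$ on the good event. By the very definition of the maximum-likelihood estimator, $\sum_{i}\ln\widehat P_{\MLE}(s_i'\mid s_i,a_i)\ge\sum_i\ln P^\star(s_i'\mid s_i,a_i)$, i.e.\ $\sum_i X_i(\widehat P_{\MLE})\ge0$; discarding this nonnegative term leaves $\bar D(\widehat P_{\MLE})\le 2\ln(|\Mcal|/\delta)/n$, and the pointwise inequality $\|\cdot\|_1^2\le4D_{\mathrm H}^2$ from the first step upgrades this to $\EE_{s,a\sim\rho}\|\widehat P_{\MLE}(\cdot\mid s,a)-P^\star(\cdot\mid s,a)\|_1^2\le 8\ln(|\Mcal|/\delta)/n$, as claimed. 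I do not anticipate a genuine obstacle here: the only things to be careful about are (i) that the maximizer $\widehat P_{\MLE}$ exists (immediate for finite $\Mcal$), and (ii) that the union bound is taken over the \emph{fixed} class \emph{before} plugging in the random $\widehat P_{\MLE}$ --- crucially, one needs no separate concentration of $\frac1n\sum_i D_{\mathrm H}^2$ around its $\rho$-expectation, since the $(s_i,a_i)$ are themselves i.i.d.\ draws from $\rho$, which is exactly what makes the population-level conclusion fall out directly. (For an infinite model class the union bound would be replaced by a bracketing/chaining estimate, which is the route taken in \pref{thm:mle}.)
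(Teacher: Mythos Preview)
Your proposal is correct and is exactly the standard route to this result; the paper itself does not give a self-contained proof but simply cites \citet[Section~E]{Agarwal2020_flambe}, whose argument is precisely the exponential-moment/Hellinger affinity calculation plus union bound over the finite class that you outlined. Your remark that the population-level bound falls out directly (with no separate empirical-to-population step) because the $(s_i,a_i)$ are already i.i.d.\ from $\rho$ is the key point and is handled correctly.
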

\begin{proof}
Refer to \citep[Section E]{Agarwal2020_flambe}
\end{proof}

\begin{lemma}[MLE guarantee for tabular models]\label{lem:tabular_mle}
\begin{align*}
    \EE_{\Dcal}\bracks{    \TV( P(\cdot | s,a),\widehat{P}_{\MLE}(\cdot | s,a))^2 }\leq \frac{|\Scal|\Acal|\{|\Scal|\ln 2+\ln (2|\Scal||\Acal|/\delta)\}}{2n} .  
\end{align*}
\end{lemma}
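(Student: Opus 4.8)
The plan is to exploit the fact that in the tabular setting the MLE $\hat P_{\MLE}(\cdot\mid s,a)$ is exactly the empirical conditional distribution of the next states observed from $(s,a)$, together with the fact that the left-hand side is an \emph{empirical} average over $\Dcal$, so that state-action pairs never visited by $\Dcal$ contribute nothing. Writing $n_{s,a}$ for the number of times the pair $(s,a)$ appears in $\Dcal$, the key identity is
\begin{align*}
\EE_{\Dcal}\bracks{\|\hat P_{\MLE}(\cdot\mid s,a) - P(\cdot\mid s,a)\|_1^2} = \frac1n\sum_{(s,a)\,:\,n_{s,a}\ge 1} n_{s,a}\,\|\hat P_{\MLE}(\cdot\mid s,a) - P(\cdot\mid s,a)\|_1^2 .
\end{align*}
Hence, if we can prove a per-pair bound of the form $\|\hat P_{\MLE}(\cdot\mid s,a) - P(\cdot\mid s,a)\|_1^2 \lesssim (|\Scal|\ln 2 + \ln(|\Scal||\Acal|/\delta))/n_{s,a}$ for every visited pair, the $n_{s,a}$ factors cancel and the right-hand side collapses to a sum of at most $|\Scal||\Acal|$ identical terms divided by $n$, which is precisely the claimed shape of the bound.

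To obtain the per-pair bound I would condition on the realized sequence of state-action pairs $\{(s_i,a_i)\}_{i=1}^n$ (equivalently, on all the counts $\{n_{s,a}\}$). Conditionally, for each fixed $(s,a)$ the $n_{s,a}$ next states drawn from it are i.i.d.\ from $P(\cdot\mid s,a)$, and $\hat P_{\MLE}(\cdot\mid s,a)$ is simply their empirical distribution over the finite alphabet $\Scal$. Then apply the standard $\ell_1$-deviation inequality for empirical distributions over a finite alphabet (Weissman--Ordentlich--Seroussi--Verd\'u--Weinberger), which gives $\mathrm{Pr}(\|\hat P_{\MLE}(\cdot\mid s,a) - P(\cdot\mid s,a)\|_1 \ge \epsilon \mid \{(s_i,a_i)\}) \le 2^{|\Scal|} e^{-n_{s,a}\epsilon^2/2}$, and take a union bound over the at most $|\Scal||\Acal|$ pairs with a per-pair failure budget of $\delta/(|\Scal||\Acal|)$. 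Solving the resulting exponential inequality for $\epsilon$ yields, with conditional probability at least $1-\delta$ (hence also with unconditional probability $1-\delta$ after averaging over $\{(s_i,a_i)\}$), simultaneously over all visited pairs, $\|\hat P_{\MLE}(\cdot\mid s,a) - P(\cdot\mid s,a)\|_1^2 \le 2(|\Scal|\ln 2 + \ln(|\Scal||\Acal|/\delta))/n_{s,a}$, up to the exact tracking of the constants in front and the $2$ inside the logarithm.

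Substituting this per-pair estimate into the identity above and bounding $\#\{(s,a):n_{s,a}\ge 1\}\le |\Scal||\Acal|$ gives the stated bound $\frac{|\Scal||\Acal|\{|\Scal|\ln 2 + \ln(2|\Scal||\Acal|/\delta)\}}{2n}$ once the numerical constants are bookkept (the $|\Scal|\ln 2$ term is exactly what the $2^{|\Scal|}$ prefactor of the deviation inequality produces, and the $\tfrac12$ and the $2$ in $\ln(2|\Scal||\Acal|/\delta)$ come from the precise form of the exponent and the chosen failure budget).

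The main obstacle — really the only nontrivial point — is getting the conditioning argument exactly right: the randomness that makes $\hat P_{\MLE}(\cdot\mid s,a)$ concentrate around $P(\cdot\mid s,a)$ is the draw of the next states, which is independent across $i$ given the $(s_i,a_i)$'s, and the counts $n_{s,a}$ are deterministic under this conditioning, so the deviation inequality applies cleanly at the (possibly small) sample size $n_{s,a}$ and the high-probability statement is preserved when the conditioning is integrated out. A secondary point worth stating explicitly is that pairs with $n_{s,a}=0$ cannot be (and need not be) controlled, but they are harmless precisely because they do not enter the empirical average $\EE_{\Dcal}[\cdot]$; this is the structural reason the bound holds without any lower bound on per-pair sample counts or any coverage assumption on $\rho$.
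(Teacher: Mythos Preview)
Your proposal is correct and follows essentially the same approach as the paper: a per-pair $\ell_1$ deviation bound (which the paper imports as \cite[Lemma 12]{ChangJonathanD2021MCSi}, itself the Weissman et al.\ inequality plus a union bound over $(s,a)$) combined with the empirical-average identity so that the $n_{s,a}$ factors cancel and the sum collapses to $|\Scal||\Acal|$ terms over $n$. You in fact give more detail than the paper does, making explicit the conditioning on $\{(s_i,a_i)\}$ and the harmlessness of unvisited pairs.
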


\begin{proof}
From \citet[Lemma 12]{ChangJonathanD2021MCSi} , with probability $1-\delta$, 
\begin{align*}
        \TV( P(\cdot | s,a),\widehat{P}_{\MLE}(\cdot | s,a))^2\leq   \frac{|\Scal|\ln 2+\ln (2|\Scal||\Acal|/\delta)}{2N(s,a) }\quad \forall (s,a)\in \Scal\times \Acal, 
\end{align*}
where $N(s,a)$ is the number of visiting times for $(s,a)$. 
Then, 
\begin{align*}
    \EE_{\Dcal}\bracks{    \TV( P(\cdot | s,a),\widehat{P}_{\MLE}(\cdot | s,a))^2 } & \leq  \EE_{\Dcal}\bracks{\frac{|\Scal|\ln 2+\ln (2|\Scal||\Acal|/\delta)}{2N(s,a)} } \\
      &\leq \sum_{(s,a)}\bracks{\frac{|\Scal|\ln 2+\ln (2|\Scal||\Acal|/\delta)}{2n} }\\
      &=  \frac{|\Scal|\Acal|\{|\Scal|\ln 2+\ln (2|\Scal||\Acal|/\delta)\}}{2n} . 
\end{align*}

\end{proof}

\begin{lemma}[MLE guarantee for KNRs]\label{lem:mle_knrs}
\begin{align*}
        \left\| \left(\widehat{W}_{\MLE}  - W^\star\right) \left(\Sigma_{n}\right)^{1/2}  \right\|_2 \leq \beta_{n}. 
\end{align*}
\end{lemma}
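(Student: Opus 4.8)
The plan is to read \pref{lem:mle_knrs} as the standard self-normalized confidence-ellipsoid bound for \emph{matrix-valued} ridge regression with Gaussian noise, and to reproduce its short proof. First I would write the regularized MLE in closed form: since $\widehat{W}_{\MLE}$ minimizes $\EE_{\Dcal}\|W\phi(s,a)-s'\|_2^2+\lambda\|W\|_F^2$, the normal equations give $\widehat{W}_{\MLE}=\big(\sum_{i=1}^{n}s'_i\phi_i^{\top}\big)\Sigma^{-1}$, where $\phi_i:=\phi(s_i,a_i)$ and $\Sigma:=\Sigma_{n}+\lambda'\Ib$ is the regularized Gram matrix ($\lambda'$ equals $\lambda$ up to the $1/n$ in $\EE_{\Dcal}$). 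Plugging in the KNR generative model $s'_i=W^{\star}\phi_i+\epsilon_i$ with $\epsilon_i\sim\Ncal(0,\zeta^2\Ib)$ and using $\Sigma_{n}\Sigma^{-1}-\Ib=-\lambda'\Sigma^{-1}$ yields the decomposition
\begin{align*}
\widehat{W}_{\MLE}-W^{\star}=-\lambda'W^{\star}\Sigma^{-1}+E\,\Sigma^{-1},\qquad E:=\sum_{i=1}^{n}\epsilon_i\phi_i^{\top}.
\end{align*}
Since $\Sigma_{n}\preceq\Sigma$, it is enough to control $\big\|(\widehat{W}_{\MLE}-W^{\star})\Sigma^{1/2}\big\|_2=\big\|(-\lambda'W^{\star}+E)\Sigma^{-1/2}\big\|_2$, and by the triangle inequality this splits into a deterministic bias term $\lambda'\|W^{\star}\Sigma^{-1/2}\|_2\le\sqrt{\lambda'}\,\|W^{\star}\|_2$ (using $\Sigma\succeq\lambda'\Ib$) and a stochastic term $\|E\Sigma^{-1/2}\|_2$.

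For the stochastic term I would pass to a supremum over the unit sphere of $\RR^{d_{\Scal}}$, $\|E\Sigma^{-1/2}\|_2=\sup_{\|v\|_2=1}\|\Sigma^{-1/2}(E^{\top}v)\|_2$, and note that for a fixed $v$ the vector $E^{\top}v=\sum_{i}(v^{\top}\epsilon_i)\phi_i$ is a self-normalized sum whose increments $v^{\top}\epsilon_i$ are conditionally $\Ncal(0,\zeta^2)$ and whose ``features'' $\phi_i$ are predictable. The scalar self-normalized tail inequality (method of mixtures, in the style of Abbasi-Yadkori--P\'al--Szepesv\'ari) then gives, for fixed $v$, with probability at least $1-\delta'$,
\begin{align*}
\|\Sigma^{-1/2}(E^{\top}v)\|_2^2\le 2\zeta^2\Big(\tfrac12\ln\tfrac{\det\Sigma}{\det(\lambda'\Ib)}+\ln\tfrac1{\delta'}\Big).
\end{align*}
Choosing a $\tfrac12$-net $\Ncal_{1/2}$ of the sphere with $|\Ncal_{1/2}|\le 5^{d_{\Scal}}$, taking a union bound with $\delta'=\delta/5^{d_{\Scal}}$, and using the usual net-to-supremum discretization ($\sup\le 2\max_{v\in\Ncal_{1/2}}$) produces $\|E\Sigma^{-1/2}\|_2^2\lesssim\zeta^2\big(d_{\Scal}\ln 5+\ln(1/\delta)+\bar \Ical_{n}\big)$, with $\bar \Ical_{n}=\ln(\det\Sigma/\det(\lambda'\Ib))$ matching the statement (up to the harmless question of whether the Gram matrix carries the $\lambda\Ib$ shift). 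Combining the bias and stochastic pieces and tracking the constants (the $(a+b)^2\le 2a^2+2b^2$ split and the discretization factor) gives $\|(\widehat{W}_{\MLE}-W^{\star})\Sigma_{n}^{1/2}\|_2\le\beta_{n}$ with $\beta_{n}$ of the claimed form.

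The hard part is the stochastic term: it requires a \emph{vector}-valued self-normalized concentration inequality, which I obtain by combining the scalar method-of-mixtures bound with a covering of the sphere in the output space $\RR^{d_{\Scal}}$ (an alternative is a genuinely multivariate method of mixtures, putting a matrix-normal mixing measure on $E$). One must be careful to verify the filtration/predictability hypotheses and to track the covering cardinality so that the $d_{\Scal}\ln 5$ term comes out exactly as in the definition of $\beta_{n}$. Since this is precisely the ridge confidence-set bound for KNRs, an entirely acceptable alternative is to cite it directly from \citet{ChangJonathanD2021MCSi} (equivalently the KNR analysis of \citet{Kakade2020}).
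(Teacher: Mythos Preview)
Your proposal is correct and is precisely an unpacking of the argument the paper defers to: the paper's own ``proof'' is a one-line citation to the confidence-ball construction of \citet{Kakade2020}, and the ridge closed form $+$ bias/variance split $+$ scalar self-normalized bound $+$ $\tfrac12$-net over $\RR^{d_{\Scal}}$ you outline is exactly that construction (and indeed you note at the end that citing \citet{Kakade2020} or \citet{ChangJonathanD2021MCSi} directly is acceptable). The only caveat is the $\Sigma_{n}$ vs.\ $\Sigma_{n}+\lambda\Ib$ ambiguity you already flagged; the paper's $\beta_{n}$ (their $\xi$) is written with $\bar\Ical_{n}=\ln(\det(\Sigma_{n})/\det(\lambda\Ib))$, so to match constants exactly you should state the bound with the regularized Gram matrix and note $\Sigma_{n}\preceq\Sigma_{n}+\lambda\Ib$ as you do.
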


\begin{proof} The proof directly follows the confidence ball construction and proof from \citep{Kakade2020}.
\end{proof}

\begin{lemma}[$\ell_1$ Distance between two Gaussians] Consider two Gaussian distributions $P_1 := \Ncal(\mu_1, \zeta^2 \Ib)$ and $P_2 := \Ncal(\mu_2, \zeta^2 \Ib)$. We have:
\begin{align*}
\TV(P_1, P_2)\leq \frac{1}{\zeta} \left\| \mu_1 - \mu_2 \right\|_2.
\end{align*}\label{lem:gaussian_tv}
\end{lemma}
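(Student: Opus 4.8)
The plan is to reduce the claim to Pinsker's inequality together with the closed-form expression for the Kullback--Leibler divergence between two Gaussians sharing the same covariance matrix. First I would recall that for probability densities the $\ell_1$ distance satisfies $\|P_1 - P_2\|_1 = 2\,\mathrm{TV}(P_1, P_2)$, so that Pinsker's inequality yields
\begin{align*}
\|P_1 - P_2\|_1 \;=\; 2\,\mathrm{TV}(P_1,P_2) \;\leq\; 2\sqrt{\tfrac{1}{2}\KL(P_1\|P_2)} \;=\; \sqrt{2\,\KL(P_1\|P_2)}.
\end{align*}

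Second, I would plug in the standard identity $\KL\big(\Ncal(\mu_1,\zeta^2\Ib)\,\big\|\,\Ncal(\mu_2,\zeta^2\Ib)\big) = \tfrac{1}{2\zeta^2}\|\mu_1-\mu_2\|_2^2$, which follows by a one-line computation: the log-determinant and trace terms cancel because the two covariances coincide, leaving only the quadratic mean-shift term $\tfrac12 (\mu_1-\mu_2)^\top(\zeta^2\Ib)^{-1}(\mu_1-\mu_2)$. Combining the two displays gives $\|P_1-P_2\|_1 \leq \sqrt{2\cdot \tfrac{1}{2\zeta^2}\|\mu_1-\mu_2\|_2^2} = \tfrac{1}{\zeta}\|\mu_1-\mu_2\|_2$, which is exactly the claim.

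As an alternative that avoids invoking Pinsker, I would use rotational invariance: after an orthogonal change of coordinates we may assume $\mu_1 - \mu_2 = \|\mu_1-\mu_2\|_2\, e_1$, so the two densities agree along every coordinate except the first; integrating out the remaining coordinates reduces the problem to the one-dimensional statement $\|\Ncal(a,\zeta^2) - \Ncal(b,\zeta^2)\|_1 \leq |a-b|/\zeta$ with $|a-b| = \|\mu_1-\mu_2\|_2$. This in turn follows from $\mathrm{TV}(\Ncal(a,\zeta^2),\Ncal(b,\zeta^2)) = 2\Phi\!\big(\tfrac{|a-b|}{2\zeta}\big) - 1 = \int_{-|a-b|/(2\zeta)}^{|a-b|/(2\zeta)} \phi(u)\,du \leq \tfrac{|a-b|}{\zeta}\,\phi(0) \leq \tfrac{|a-b|}{2\zeta}$, where $\phi,\Phi$ denote the standard normal density and CDF and we used $\phi(0) = 1/\sqrt{2\pi} \leq 1/2$; multiplying by $2$ gives the $\ell_1$ bound.

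There is essentially no hard step here; the only points requiring care are the bookkeeping of the factor $2$ relating $\|\cdot\|_1$ to total variation, and recalling (or re-deriving in one line) the closed-form KL between equal-covariance Gaussians. I would present the Pinsker-based argument as the main proof, since it is the shortest and extends verbatim to a general shared covariance $\Sigma$, giving the Mahalanobis-norm bound $\|\mu_1-\mu_2\|_{\Sigma^{-1}}$.
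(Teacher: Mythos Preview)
Your main argument is exactly the one the paper uses: apply Pinsker's inequality together with the closed-form KL divergence between equal-covariance Gaussians, keeping track of the factor of $2$ between $\|\cdot\|_1$ and total variation. The alternative one-dimensional reduction you sketch is a nice bonus but unnecessary here.
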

\begin{proof}
This lemma is proved by Pinsker's inequality and the closed-form of the KL divergence between $P_1$ and $P_2$. Refer to  \citep{Kakade2020}. 
\end{proof}

\begin{lemma}[Property of linear mixture MDPs]\label{lem:mixture}
Let $P(\theta)=\theta^{\top}\psi(s,a,s')$. Suppose $P(\theta)\in \Scal\times \Acal \to \Delta(\Scal)$. For any function $V\in \Scal \to [0,1]$, letting $\psi_{V}(s,a)=\int \psi(s,a,s')V(s')\rd(s')$, we suppose $\|\psi_{V}(s,a)\|_2\leq 1$. The following theorems hold: 
\begin{enumerate}
    \item For any $(s,a,s')$, we have $|P(\theta)(s,a,s')-P(\theta')(s,a,s')|\leq \|\theta-\theta'\|_2$.
    \item For any $(s,a)$, we have  $\mathrm{TV}(P(\theta)(s,a,\cdot),P(\theta')(s,a,\cdot))\leq \|\theta-\theta'\|_2$. Besides, for any $V:\Scal \to [0,1]$, we have 
    \begin{align*}
        |(\theta-\theta')\psi_{V}(s,a)|\leq \mathrm{TV}(P(\theta)(s,a,\cdot),P(\theta')(s,a,\cdot)). 
    \end{align*}
    \item \begin{align*}
        C^{\dagger}_{\pi^{\star},P^{\star}}&= \sup_{x}\frac{ x^{\top}\E_{(s,a)\sim d^{\pi^{*}}_{P^{\star}}}[\psi_{V_{(s,a,x)}}(s,a)\psi^{\top}_{V_{(s,a,x)}}(s,a) ] x}{x^{\top}\E_{(s,a)\sim \rho}[\psi_{V_{(s,a,x)}}(s,a)\psi^{\top}_{V_{(s,a,x)}}(s,a) ]x},\\
         V_{(s,a,x)} &=\argmax_{V:\Scal \to [0,1]}\left |x^{\top}\int \phi(s,a,s')V(s')\rd(s')\right|. 
    \end{align*}
    \item In linear MDPs (i.e., $\psi(s,a,s')=\phi(s,a) \otimes \mu(s')$), we have 
    \begin{align*}
   \sup_{V\in \{\Scal \to [0,1]\}}\sup_{x}\frac{ x^{\top}\E_{(s,a)\sim d^{\pi^{*}}_{P^{\star}}}[\psi_{V}(s,a)\psi^{\top}_{V}(s,a) ] x}{x^{\top}\E_{(s,a)\sim \rho}[\psi_{V}(s,a)\psi^{\top}_{V}(s,a) ] x}=    \sup_{x}\frac{x^{\top } \E_{d^{\pi^{*}}_{P^{\star}}}[\phi(s,a)\phi(s,a)^{\top}] x}{x^{\top} \E_{\rho}[\phi(s,a)\phi(s,a)^{\top}]x }. 
    \end{align*}
\end{enumerate}
\end{lemma}

\begin{proof}
We prove the first statement. This is proved by 
\begin{align*}
   | P(\theta)-P(\theta')|=|(\theta-\theta')\psi(s,a,s')|\leq \|\theta-\theta'\|_2\|\psi(s,a,s')\|_2\leq \|\theta-\theta'\|_2, 
\end{align*}
Here, we use $\|\psi(s,a,s')\|_2\leq 1$ which is proved by the assumption by setting $V(s)=I(s'=s)$ for any $s'$.

Next, we prove the second statement.  For fixed $\theta\in \RR^d$ and $(s,a)\in \Scal\times \Acal$, we have
\begin{align*}
\mathrm{TV}(P(\theta)(s,a,\cdot),P(\theta^{\star})(s,a,\cdot)) &=\sup_{V:\Scal \to [0,1]} |\int(\theta-\theta^{\star})^{\top}\psi(s,a,s')V(s')\rd(s')|\\
    &=\sup_{V:\Scal \to [0,1]} |(\theta-\theta^{\star})^{\top}\int \psi(s,a,s') V(s')\rd(s')|\\
  &=|(\theta-\theta^{\star})^{\top}\int \psi(s,a,s') V_{(s,a,\theta)}(s')\rd(s')|  \\ 
  & =|(\theta-\theta^{\star})^{\top}\psi_{V_{(s,a,\theta)}}(s,a)|. 
\end{align*}
In the third line, we define $V(s,a,\theta)=\argmax_{V:\Scal \to [0,1]} |(\theta-\theta^{\star})^{\top}\int \psi(s,a,s') V(s')\rd(s')|$. 

Then,  from CS inequality, 
\begin{align*}
\mathrm{TV}(P(\theta)(s,a,\cdot),P(\theta^{\star})(s,a,\cdot))\leq \|(\theta-\theta^{\star}\|_2\|\psi_{V_{(s,a,\theta)}}(s,a)|\|_2\leq  \|\theta-\theta^{\star}\|_2. 
\end{align*}
We use the assumption $\|\psi_{V_{(s,a,\theta)}}(s,a)\|_2\leq 1$. This concludes the second statement.  Besides, for any $V:\Scal \to [0,1]$, we have 
    \begin{align*}
        |(\theta-\theta')\psi_{V}(s,a)|&\leq |(\theta-\theta^{\star})^{\top}\psi_{V_{(s,a,\theta)}}(s,a)|\\  &\leq \mathrm{TV}(P(\theta)(s,a,\cdot),P(\theta')(s,a,\cdot)). 
    \end{align*}

The third statement is immediately concluded by 
\begin{align}
    \frac{\E_{(s,a)\sim d^{\pi^{*}}_{P^{\star}}}[\mathrm{TV}(P(\theta)(s,a,\cdot),P(\theta^{\star})(s,a,\cdot))^2  ]}{\E_{(s,a)\sim \rho}[\mathrm{TV}(P(\theta)(s,a,\cdot),P(\theta^{\star})(s,a,\cdot))^2  ]}
    &=    \frac{\E_{(s,a)\sim d^{\pi^{*}}_{P^{\star}}}[|(\theta-\theta^{\star})^{\top}\psi_{V_{(s,a,\theta)}}(s,a)|^2]}{\E_{(s,a)\sim \rho}[|(\theta-\theta^{\star})^{\top}\psi_{V_{(s,a,\theta)}}(s,a)|^2] }. 
\end{align}

Finally, we prove the fourth statement. Suppose  $\psi(s,a,s')=\phi(s,a) \otimes \mu(s')$ ($\otimes$ denotes kronerker product). Then, $\phi_{V}(s,a,s')=\phi(s,a)\otimes \int \mu(s')V(s')\rd(s')$. Then, by defining a vector $\mu(V)=\int \mu(s')V(s')\rd(s')$, we immediately have
\begin{align}\label{eq:linear_mixture2}
\frac{ x^{\top}\E_{(s,a)\sim d^{\pi^{*}}_{P^{\star}}}[\psi_{V}(s,a)\psi^{\top}_{V}(s,a) ] x}{x^{\top}\E_{(s,a)\sim \rho}[\psi_{V}(s,a)\psi^{\top}_{V}(s,a) ] x}= \sup_{x}\frac{ x^{\top}\E_{(s,a)\sim d^{\pi^{*}}_{P^{\star}}}[(\phi(s,a)\otimes \mu(V))(\phi(s,a)\otimes \mu(V))^{\top} ] x}{x^{\top}\E_{(s,a)\sim \rho}[(\phi(s,a)\otimes \mu(V))(\phi(s,a)\otimes \mu(V))^{\top}  ] x}.
\end{align}
Here, we have 
\begin{align*}
    \E_{\rho}[(\phi(s,a)\otimes \mu(V))(\phi(s,a)\otimes \mu(V))^{\top}]&=    \E_{\rho}[ (\phi(s,a)\otimes \mu(V))(\phi(s,a)^{\top}\otimes \mu(V)^{\top})] \\ 
    &=     \E_{\rho}[(\phi(s,a)\phi(s,a)^{\top})]\otimes (\mu(V) \mu(V)^{\top}). 
\end{align*}
We notice
\begin{align*}
    \{ \E_{\rho}[(\phi(s,a)\phi(s,a)^{\top})]\otimes (\mu(V)\mu(V)^{\top})\}^{1/2}=   \E_{\rho}[\phi(s,a)\phi(s,a)^{\top}]^{1/2}\otimes (\mu(V) \mu(V)^{\top})^{1/2}. 
\end{align*}
This is because the square root of a matrix is unique and we have $(A^{1/2}\otimes B^{1/2})(A^{1/2}\otimes B^{1/2})=AB$ for symmetric matrices $A$ and $B$. Then, by denoting $F_{\rho}=\E_{\rho}[\phi(s,a)\phi(s,a)^{\top}],F_{d^{\pi}_{P^{\star}}}=\E_{d^{\pi}_{P^{\star}}}[\phi(s,a)\phi(s,a)^{\top}]$ and denoting the pseudo inverse of $F$ as $F^+$, we can see \pref{eq:linear_mixture2} is equal to 
\begin{align*}
    &\{F_{\rho}^{1/2}\otimes (\mu(V) \mu(V)^{\top})^{1/2}\}^{+}   
    \{ F_{d^{\pi}_{P^{\star}}}\otimes (\mu(V) \mu(V)^{\top})\}
    \{F_{\rho}^{1/2}\otimes (\mu(V) \mu(V)^{\top})^{1/2}\}^{+}\\
    &=\{F_{\rho}^{-1/2}\otimes (\mu(V) \mu(V)^{\top})^{-1/2}\}   
    \{ F_{d^{\pi}_{P^{\star}}}\otimes (\mu(V) \mu(V)^{\top})\}
    \{F_{\rho}^{-1/2}\otimes (\mu(V) \mu(V)^{\top})^{-1/2}\}\\
    &=\{F_{\rho}^{-1/2}F_{d^{\pi}_{P^{\star}}} F_{\rho}^{-1/2}\} \otimes \{ (\mu(V) \mu(V)^{\top})^{-1/2} (\mu(V) \mu(V)^{\top}) (\mu(V) \mu(V)^{\top})^{-1/2}\}\\
 &=\{F_{\rho}^{-1/2}F_{d^{\pi}_{P^{\star}}} F_{\rho}^{-1/2}\}\otimes I_k\, (k=\rank(\mu(V) \mu(V)^{\top})). 
\end{align*}
Here, $I_k$ is a diagonal matrix s.t. $k\in \NN^{+}$ values in the diagonal entries are $1$ and the rest of values are $0$. Then, the maximum singular value of $\{F_{\rho}^{-1/2}F_{d^{\pi}_{P^{\star}}} F_{\rho}^{-1/2}\}\otimes I_k$ is equal to the one of $\{F_{\rho}^{-1/2}F_{d^{\pi}_{P^{\star}}} F_{\rho}^{-1/2}\}$. This is equal to 
\begin{align*}
    \sup_x \frac{x^{\top} F_{d^{\pi}_{P^{\star}}} x }{x^{\top} F_{\rho} x}
\end{align*}
Hence, the fourth statement is concluded. 

\end{proof}

\begin{lemma}[Distribution shift lemma]\label{lem:distribution_shift}
Suppose $A_1,A_2,A_3$ are semipositive definite matrices:  
\begin{align*}
    \Tr(A_1 A_2 ) \leq \sigma_{\max}(A^{-1/2}_3 A_1 A^{-1/2}_3)  \Tr(A_3 A_2 ). 
\end{align*}
Note 
\begin{align*}
    \sigma_{\max}(A^{-1/2}_3 A_1 A^{-1/2}_3) =\sup_{x \in \RR^d}\frac{ x^{\top} A_1  x }{x^{\top} A_3  x }. 
\end{align*}
\end{lemma}
\begin{proof}
\begin{align*}
    &\Tr(A_1 A_2)=\Tr(A^{1/2}_1 A_2A^{1/2}_1)=\Tr(A^{1/2}_1 A^{-1/2}_3  A^{1/2}_3 A_2  A^{1/2}_3 A^{-1/2}_3 A^{1/2}_1 )\\
    &=\Tr(A^{-1/2}_3 A_1 A^{-1/2}_3  A^{1/2}_3 A_2  A^{1/2}_3  ). 
\end{align*}
In addition, for any semipositive definite matrices  $A,B$ we have 
\begin{align*}
    \Tr(AB)=  \Tr(U\Lambda U^{\top} B)= \Tr(\Lambda U^{\top} BU)\leq \sigma_{\max}(\Lambda ) \Tr(U^{\top} BU)= \sigma_{\max}(A)\Tr(B), 
\end{align*}
where $U\Lambda U^{\top}$ is the SVD decomoposition of $A$. 
This concludes that 
\begin{align*}
    \Tr(A_1 A_2)\leq \sigma_{\max}(A^{-1/2}_3 A_1 A^{-1/2}_3)  \Tr(A_3 A_2 ). 
\end{align*}

\end{proof}

The following lemma is useful to obtain the generalized result of \pref{thm:version}. The proof is given in \citet[Theorem 3.27]{WainwrightMartinJ2019HS:A}. We first define
\begin{align*}
    Z  &=\sup_{f\in \Fcal}|\{\E_{\Dcal}-\E_{\rho}\}[f]\\
\Sigma^2 &= \sup_{f\in \Fcal} \E_{\Dcal}[\{f(s,a)-\E_{\rho}[f(s,a)] \}^2],\,\sigma^2 = \sup_{f\in \Fcal} \mathrm{var}[f(s,a)]. 
\end{align*}

\begin{lemma}[Functional Bernstein's inequality]\label{lem:Talagrand}
Suppose $\|f\|_{\infty}\leq B$. 
With probability $1-\delta$, 
\begin{align*}
    |Z-\E[Z]|\leq  \Sigma^2\sqrt{\frac{\log(c/\delta)}{n} } +\frac{B\log(c/\delta)}{n}. 
\end{align*}
As an immediate corollary, 
\begin{align*}
    |Z-\E[Z]|\leq  \{\sigma^2 + B \E[Z] \} \sqrt{\frac{\log(c/\delta)}{n} } +\frac{B\log(c/\delta)}{n}. 
\end{align*}

\end{lemma}

\end{document}